\newtheorem{theorem}{Theorem}
\newtheorem{prop}{Proposition}
\newenvironment{proof}{{Proof:}}{\hfill$\blacksquare$}
\newcommand{\bx}{\mathbf{x}}
\newcommand{\bX}{\mathbf{X}}
\newcommand{\bV}{\mathbf{V}}
\newcommand{\bz}{\mathbf{z}}
\newcommand{\bZ}{\mathbf{Z}}
\newcommand{\bth}{\boldsymbol{\theta}}
\newcommand{\bmu}{\boldsymbol{\mu}}
\newcommand{\bze}{\boldsymbol{\zeta}}
\newcommand{\bths}{\boldsymbol{\theta}^{\star}}
\newcommand{\eqq}{$\,=\,$}
\newcommand{\bll}{{\tiny $\bullet$} }
\newcommand{\ben}{\begin{eqnarray*}}
\newcommand{\een}{\end{eqnarray*}}
\newcommand{\be}{\begin{eqnarray}}
\newcommand{\ee}{\end{eqnarray}}
\newcommand{\blll}{\hspace*{-3.5mm}{\tiny $\bullet$} }
\newcommand{\cca}{ {\cal C}}
\newcommand{\ts}{\text{s}}
\newtheorem{lemma}{Lemma}
\newtheorem{pos}{Postulate}
\def\hspp{\hspace*{.1in}}
\title{Inference for Multiple Object Tracking: A Bayesian Nonparametric Approach}
\author{
  Bahman ~Moraffah\\
  Department of Electrical, computer, and Energy Engineering\\
  Arizona State University\\
  Tempe, AZ, 85281 \\
  \texttt{bahman.moraffah@asu.edu} \\
}
\begin{document}
\maketitle

\begin{abstract}
In recent years, multi object tracking (MOT) problem has drawn attention to it and has been studied in various research areas. However, some of the challenging problems including time dependent cardinality, unordered measurement set, and object labeling remain unclear. In this paper, we propose robust nonparametric methods to model the state prior for MOT problem. These models are shown to be more flexible and robust compared to existing methods. In particular, the overall approach estimates time dependent object cardinality, provides object labeling, and identifies object associated measurements. Moreover, our proposed framework dynamically contends with the birth/death and survival of the objects through dependent nonparametric processes. We present Inference algorithms that demonstrate the utility of the dependent nonparametric models for tracking. We employ Monte Carlo sampling methods to demonstrate the proposed algorithms efficiently learn the trajectory of objects from noisy measurements. The computational results display the performance of the proposed algorithms and comparison not only  between one another, but also between proposed algorithms and labeled multi Bernoulli tracker.
\end{abstract}

\keywords{ Bayesian nonparametric models  \and Multi object tracking\and Dependent Dirichlet process \and Dependent two-parameter Poisson-Dirichlet process \and Markov chain Monte Carlo}
\section{Introduction}

During the last decade, multiple object tracking (MOT) is a challenging and computationally intensive problem that has appeared in several different contexts and applications, including computer vision \cite{Com03, Koch2016, cox1996}, driver assistance \cite{Avidan2004, Nieto2014, Vo2015}, surveillance \cite{Kettnaker1999}, and radar target tracking \cite{vo2017, Aok16}. The MOT problem entails the estimation of time dependent and unknown number of the objects based on incoming data at each time step. Incoming data may be highly noisy and/or clutter. The estimation algorithms are reasonably robust to the noise model , however, the estimations are highly biased in the specification of clutter models \cite{bar1995, vo1999}. Mechanisms with dependency constraint are flexible and easy to control \cite{Ahmed2008, Bartlett2010, caron2012, caron2017}. 

There has been various approaches to the MOT problem. In \cite{bar1990, mullane2011, reuter2014, vo2017,vo2014labled, Wan17}, this problem is discussed through random finite set (RFS) methods. with probability hypothesis density filtering and multi-Bernoulli filtering, are used to model and track object states. Most methods pair objects to their associated estimated state parameters using clustering methods after tracking \cite{Vo2009}. In the recent studies, \cite{vo2014labled, Vo2013, Vo2014} the labeled multi-Bernoulli filtering method uses labeled RFS to estimate the objects identity, though at a high computational cost and high signal to noise ratio. In \cite{Aok16}, maximum a posteriori probability estimates of the object labeling uncertainties are integrated with a multiple hypothesis tracking algorithm.
However, in the recent studies, nonparametric approaches to MOT have drawn attention \cite{Bahman_rept}. To describe dependency among a collection of stochastic processes, the dependent Dirichlet process (DDP) is introduced \cite{Mac99,Mac2000}. In \cite{fox2007, fox2011}, a hierarchical Dirichlet process on the modes is employed to provide a prior over the unknown number of unobserved modes when tracking with maneuvering. A hierarchical model is also introduced to model the dependent measurement to track an object \cite{moraffahfusion2019_hier}. We  introduced a dependent Dirichlet process modeling for MOT \cite{moraffah2018}.  In \cite{campbell2013, neiswanger2014, topkaya2013}, a dependent Dirichlet process (DDP) mixture model is used to develop a clustering algorithm for batch-sequential data with time-varying clusters. The dependency with respect to covariates was introduced in \cite{arbel2016, griffin2011, Mac2000, mac2000unpublished, Mac99}. MOT is also discussed in terms of random infinite trees and diffusion processes in a nonparametric fashion \cite{moraffah2019}. In this paper, we introduce a family of density estimators for time dependent tracking algorithms constructed by Pitman-Yor processes. A Markov chain Monte Carlo (MCMC) inferential method integrates the distributions to update the time dependent states. Two distribution Poisson-Dirichlet process, Pitman-Yor process, was first introduced by Pitman and Yor in \cite{ pitman2002, pitman1997} and it was then used as prior in different research areas \cite{Bartlett2010, blei2011, caron2017}. Time varying P{\'o}lya urn approach for time varying Dirichlet process mixture and Pitman-Yor processes were proposed as prior on parameters over the observations, however they do not capture the full dependency or are not marginally a Dirichlet process or a Pitman-Yor process \cite{Ahmed2008, moraffahetal2019, blei2011, caron2012, caron2017, moraffahfusion2019}. The focus of this paper is to study the multi-object tracking problem using nonparametric approaches such as the dependent Dirichlet process and dependent Pitman-Yor process as a prior over the objects state distributions. We propose a class of algorithms and discuss the statistical properties of the models. Our main algorithms establish that our model outperforms existing methods and is computationally inexpensive. We also show that the introduced methods (A) are marginally well defined and hence there is an efficient way to do inference, (B) are consistent under mild conditions  (C) achieve the minimax rate and in this sense is optimal.

\subsection{Contributions and Organization}
Our main contribution is to construct novel nonparametric methods for MOT problem and describe their statistical properties. We define time varying models based on the dependent Dirichlet process and the two-parameter Poisson-Dirichlet process on the state of the objects that have more flexibility compared to existing methods such as  RFS models. Our proposed model is an improvement in tracking, time efficiency, and implementation. Our models (1) capture the full time dependency among the states and its parameters based on a dependent Dirichlet process and/or a two parameter Poisson-Dirichlet process such that the marginal distribution follows a Dirichlet process/Pitman-Yor process, which makes the inference efficient, consistent, and robust, (2) converges at the optimal frequentist rate (minimax rate) (3) capture both birth and death process in MOT and simply labels each objects and accurately provides the number of the objects as well as the object trajectory at each time step, and (4) are simple to design a MCMC model that can accurately estimates the trajectory of the objects and outperforms the existing approaches such as RFS. 

The rest of the paper is organized as follows. Section \ref{back} provides a review of the standard Dirichlet and Pitman-Yor processes. Section \ref{PF} presents the multiple object tracking problem with time-dependent cardinality. In section \ref{DDP_Model}, we construct a novel family of time-varying models based on a dependent Dirichlet process (DDP) as prior; and we develop a Markov chain Monte Carlo (MCMC) inference method in section \ref{ddplearning}. We then prove the convergence of the algorithm and discuss some properties of our proposed method in section \ref{prop}. We then introduce an extension of the proposed nonparametric method through a time-dependent Pitman-Yor prior, section \ref{DPY}, and develop the corresponding MCMC learning method in section \ref{dpylearning}. The properties of this Pitman-Yor based model is discussed in section \ref{pyprop}. We, through simulations, demonstrate the performance of our proposed methods and compare them to one another and also the RFS based method labeled multi-Bernoulli filter in Section \ref{sim}.

\section{Background}
\label{back}
In recent years, the ubiquitous influence of Bayesian nonparametric models in modeling and density estimation to avoid the restrictions of parametric methods is well establsihed. In particular, the family of infinite dimensional space of random measures such as Dirihclet process \cite{ferg1973} and Two-Parameter Poisson-Dirichlet Process (Pitman-Yor Process) \cite{pitman1997} as priors have become very popular in statistics and machine learning. Dirichlet process mixture models \cite{antoniak1974} and Pitman-Yor Mixture models \cite{pitman2002, pitman1997} have played an important role as substitutes for finite mixture models to estimate the density and perform clustering. These methods, if designed appropriately, can be used to easily do the inference. In the following section we briefly describe two nonparametric models, which we will employ throughout this paper.

\subsection{Dirichlet Process}
Dirichlet process(DP) is a class of nonparametric models that defines a prior on the space of probability distributions on the infinite dimension parameter space $\Theta$ \cite{ferg1973, teh2011}. A DP with a concentration parameter $\alpha$ and base distribution $H$ on the parameter space $\Theta$ is denoted by $DP(\alpha, H)$ and is defined as 
\begin{equation}
\label{dp}
G(A) = \sum\limits_{j=1}^{\infty} \pi_j \delta_{\theta_j}(A), \hspace{0.5cm} \theta_j\sim H,\hspace{0.2cm} \text{and} \hspace{0.2cm} \pi_j\sim \text{GEM}(\alpha)
\end{equation}   
where $\delta_{\theta_j}(A) = 1, \text{if} \hspace{0.1cm}\theta_j\in A$ and $\delta_{\theta_j}(A) = 0, \text{if} \hspace{0.1cm}\theta_j\notin A$ and $\text{GEM}(\alpha)$ follows the stick breaking representation discussed in \cite{sethuraman1994}: 
\begin{flalign}
V_j &\sim \text{Beta}(1,\alpha)\hspace{2.5cm} j = 1,2,\dots\notag\\
\pi_j &= V_j\prod\limits_{i = 1}^{j-1}(1-V_i)\hspace{2cm} j = 1,2,\dots .
\end{flalign}
Note that $G(\cdot)$ is a probability random measure and is shown to be discrete with probability one. Assume that we receive a fixed number of observations $\mathcal{Z} = \{\bz_1, \dots, \bz_m\}$ where $\bz_j$'s given parameters are independently and  identically drawn from $F$, where 
\begin{equation}
F(\cdot) = \int_{\theta}f(\cdot|\theta)dG(\theta)
\end{equation}
where $f(\cdot|\theta)$ is the density and $G(\theta)$ is the mixing distribution drawn according to a DP.  From \ref{dp}, one can define the infinite mixture model as follows:
\begin{flalign}
\label{mixturedp}
G|\alpha, H &\sim DP(\alpha, H)\notag\\
\theta_j | G &\sim G\\
z_j|\theta_j &\sim f(\cdot|\theta_j).\notag
\end{flalign}
Equation \ref{mixturedp} is known as Dirichlet process mixture model (DPM model). It can be shown that the expected number of clusters using DP model is $\alpha \log m$, where $m$ is the number of data. 
\subsection{Two-Parameter Poisson-Dirichlet Process}
The class of two-parameter poisson-Dirichlet processes (Pitman-Yor processes) is a wide class of distributions on random probability measure that contains Dirichlet processes. We denote the Pitman-Yor process $\mathcal{PY}(d,\alpha, H)$, where $H$ is base probability distribution. The parameters $0 \leq d < 1$ and $\alpha > -d$ are discount and concentration parameters, respectively. The case where $d = 0$ agrees with a $DP(\alpha, H)$. The Pitman-Yor process is a subclass of $d$-Gibbs Processes which shares the essential properties of Dirichlet processes. Pitman-Yor processes are most suited for data with the power-law property \cite{teh2006}. A realization of the $\mathcal{PY}(d,\alpha,H)$ is a discrete random measure that can be constructed using stick breaking as follows:
\begin{equation}
\label{PY}
G(A) = \sum\limits_{j=1}^{\infty} \pi^*_j \delta_{\theta_j}(A) \hspace{0.5cm} \theta_j\sim H
\end{equation}
and $\pi^*_j$ is the size-biased order of $\pi_j$ where $\pi_j = V_j\prod\limits_{i = 1}^{j-1}(1-V_i)$ and $V_j \sim \text{Beta}(1-d, \alpha+jd)$. Pitman-Yor process defines a prior on the probability distribution over the infinite dimension space of parameters. Assume that $\mathcal{Z}$ is the set of measurements drawn from distribution $F$, the Pitman-Yor mixture model for $j = 1, \dots, m$ is given by
\begin{flalign}
\label{mixturepy}
G|\alpha, H &\sim \mathcal{PY}(d, \alpha, H)\notag\\
\theta_j | G &\sim G\\
z_j|\theta_j &\sim f(\cdot|\theta_j).\notag
\end{flalign}

With the Pitman-Yor process, it can be shown that the expected number of clusters is $\alpha m^d$. Following the power- law, the higher the number of unique (non-empty) clusters, the higher the probability of having even more unique clusters \cite{teh2006, Sato2010}.

It is worth mentioning that, although these models are well suited for many problems, there are many situations that time varying distributions are required to capture the dependency. In multi object tracking problem, one thus, needs to design time-evolving distributions such that the data-driven posterior inference problem is efficient and easy to compute. The previously proposed nonparametric methods do not capture the full dependency or the marginal distribution is not preserved. We introduce a novel family of first order time-dependent Dirichlet and a time-dependent Pitman-Yor prior process that capture the full dependency such that the marginal distribution at each time step given the configurations at previous time step follows a Dirichlet and Pitman-Yor process, respectively. This property for the introduced generative model not only proposes an efficient way to compute but also matches the minimax rate. We detail our approach for a MOT problem next.

\section{Problem Formulation}
\label{PF}
The goal of any multi object tracking model is to (A) successfully estimate the trajectory of each object given the observation data and (B) find the number of the objects at each time step. Given the state vector configurations at previous time step and current time observations, we propose two nonparametric algorithms to satisfy (A), (B). 

We consider the problem of multi object tracking with time varying number of objects remaining, entering, and/or leaving the field of view (FOV). Assume the time-dependent object and measurement cardinality $N_k$ and $M_k$ at time step $k$, respectively. Suppose that object state vectors $\bX_k = \{\bx_{1,k}, \dots, \bx_{N_k,k}\}$ taking values in state space $\mathcal{X}$ and observation vectors $\bZ_k = \{\bz_{1,k}, \dots, \bz_{M_k, k}\}$  taking values in observation space $\mathcal{Z}$, at time $k$. Assume space $\mathcal{X}$ and $\mathcal{Z}$ are Polish spaces. Assume that $N_{k}$ and $M_{k}$ are the unknown cardinality of the object states and observations at time $k$, respectively. Given the state vector at time $(k-1)$, three possible situations may occur: 

\begin{enumerate}
\item[(a)] {\bf{Survival and Transition}}: the object remains in the FOV with probability $\text{P}_{k|k-1}$ and its state transitions to the next time step $k$ according to the transition kernel $\mathbb{Q}_{\underline{\theta}}(\bf{x_\ell(k-1)}, \cdot)$ with unknown parameters ${\underline{\theta}}$. 
\item[(b)] {\bf{Death}}: the object leaves the FOV with probability with probability $1-\text{P}_{k|k-1}$.
\item[(c)]{\bf{Birth}}: new object enters the scene. 
\end{enumerate}

Throughout this paper, we assume each measurement is generated by only one object and the measurements are independent of one another. An object with state vector $\bx_k \in \bX_k$  generates an observations $\bz_k \in \bZ_k$ with likelihood distribution $p(\bz_k| \bx_k)$. The nonparametric models are a versatile tool to model a prior, however it cannot capture evolution over a period of time. Therefore, we need a more powerful tool to capture (a)-(c) over time. To model a collection of random distributions that are related but not identical, we define dependent nonparametric models to not only satisfies (a)-(c) but also captures time dependency. In what follows, we introduce two class of time-dependent nonparametric multi object- state prior models that given the process at time $(k-1)$ satisfy the following at time $k$:

\begin{enumerate}[label=(\roman*)]
\item {\bf{Survival}}: Given the $\ell$th state at time $k-1$, $\bx_{\ell,k-1}$, define $\text{P}_{\ell, k \mid k-1}:\Omega\rightarrow[0,1]$ to be the survival probability of state $\ell$ at time $k-1$.\label{1st}
\item{ \bf{Transition}}: Let $\nu:\Omega\times\mathcal{B}\rightarrow\mathbb{R}^{+}$ be  the transition kernel. For each survived cluster, the cluster parameters are evolved through 
$\theta_{\ell,k}\sim\nu(\theta^*_{\ell, k-1}, \cdot)$. \label{2nd}
\item {\bf{Trajectory}}: Given the measurements, update the marginal (predictive) distribution.  \label{3rd}
\end{enumerate}
Employing \ref{1st} - \ref{3rd} provides nonparametric frameworks such that an object may perhaps disappear or remain and evolve over time. The evolution of the object throughout the time is recorded and is updated based on observing the measurements and forms the trajectory.  We introduce a time-dependent two parameter Poisson-Dirichlet and a time-dependent Dirichlet processes to capture dependency among the object states such that the marginal distributions follow a Pitman-Yor and Dirichlet process, respectively. The graphical model capturing these frameworks presented in Fig \ref{alg_gm}. 
\begin{figure}[t]
\centering
\tikz \node [scale=0.9, inner sep=0] {
\begin{tikzpicture}
align = flush center,
\tikzstyle{plate caption} = [caption, node distance=0pt, 
inner sep=0pt, below left=3pt and 0pt of #1.south east]

\node[latent, shape=circle,draw, inner sep=1.5pt, minimum size=1em] (thetak-1) { $\theta_{\ell,k-1}$};%
\draw [->] (-3,0) -- (thetak-1);
\node[latent, right = 2.5cm of thetak-1, shape=circle,draw, inner sep=-1pt, minimum size=1em] (theta_tran) {$\theta_{\ell,k|k-1}$};
\node[latent, right = 5.5 cm of thetak-1, shape=circle,draw, inner sep=4.5pt, minimum size=2.2em] (theta_knew) {$\theta_{\ell,k}$};
\node[latent, right = 3cm of theta_knew, shape=circle,draw, inner sep=2.0pt, minimum size=2.2em, yshift = 0.4 cm ] (theta_k) {$\theta_{\ell,k+1}$};
\edge{thetak-1}{theta_tran}
\draw [->,black] (theta_tran.north east) to [out=28, in=160] (theta_k.west);
\draw [->] (theta_k.east) -- (12.99,0.4);
\edge{theta_knew}{theta_k}
 \plate[inner sep=.25cm,xshift = -0.1cm, yshift=.25cm] {plate1} {(thetak-1)} {$\ell = 1,\dots, N_{k-1}$}; 
 \plate[inner sep=.25cm,xshift = -0.1cm, yshift=.25cm] {plate2} {(theta_tran)} {$\ell = 1,\dots, D_{k|k-1}$}; 
  \plate[inner sep=.25cm,xshift = -0.1cm, yshift=0cm] {plate3} {(theta_knew)} {$\infty$}; 
  \plate[inner sep=.25cm,xshift = -0.1cm, yshift=0cm] {plate4} {(plate2) (plate3)} {}; 
   \plate[inner sep=.25cm,xshift = -0.1cm, yshift=0cm] {plate5} {(theta_k)} {$\ell = 1,\dots, D_{k+1|k}$}; 
\node[latent, below =1.5cm of thetak-1, xshift = 1.5cm, shape=circle,draw, inner sep=1.5pt, minimum size=1em] (sk-1) { $\bX_{k-1}$};%
\node[latent, right = of sk-1, xshift = 2.5cm, shape=circle,draw, inner sep=-4pt, minimum size=3.1em] (sk) { $\bX_{k}$};%
\node[obs, below = 1.5 cm of sk-1, shape=circle,draw, inner sep=1.5pt, minimum size=3.05em] (zk-1) { $\bZ_{k-1}$};%
\node[obs, right = of zk-1, xshift = 2.5cm, shape=circle,draw, inner sep=-4pt, minimum size=3.1em] (zk) { $\bZ_{k}$};%
\edge{sk-1}{sk}
\edge{sk-1}{zk-1}
\edge{sk}{zk}
\draw [->] (-1.2,-2.6) -- (sk-1);
\draw [dotted, ultra thick] (-2.5,-2.6) -- (-2.2,-2.6);
\draw [->] (sk.east) -- (8.6,-2.6);
\draw [dotted, ultra thick] (9.7,-2.6) -- (10,-2.6);
\edge{thetak-1}{sk-1}
\draw [->,black] (thetak-1.south)  to [out=-100, in=-210] (zk-1.north west);
\edge{theta_knew}{sk}
\draw [->,black] (theta_knew.south east) to [out=-80, in=30] (zk.north east);

\edge{theta_tran}{sk}
\draw [->,black] (theta_tran.south)  to [out=-100, in=-210] (zk.north west);

    \end{tikzpicture}};
    \caption{Graphical model capturing the temporal dependence.}
    \label{alg_gm}
\end{figure}

\section{Nonparametric MMT: Dependent Dirichlet Process Construction}
\label{DDP_Model}
\subsection{Evolutionary Time Varying Model Construction}

In this section, we propose an evolutionary time dependent model to multiple object tracking based on our proposed dependent Dirichlet process (DDP) to infer the object trajectory and labels. The proposed DDP evolutionary Markov modeling(DDP-EMM) approach, can be used to learn multiple object clusters or labels over related information. The DDP-EMM algorithm is different from random finite set (RFS) based algorithms for characterizing multiple object states and measurements \cite{Vo2015, vo2014labled}. In particular, our approach directly incorporates learning multiple parameters through related information, including object labeling at the previous time step or labeling of previously considered objects at the same time step. In particular, the choice of the DDP as a prior on the object state distributions is based on the following dynamic dependencies in the state transition formulation: (I) the number of objects present at time step $k$ not only depends on the number of objects that were present at the previous time step $(k-1)$ but it also depends on the popularity of the object (preferential attachment), (II) the clustering index of the parameter state of the $\ell$th object at time step $k$ depends on the clustering index of the parameter states of the previous $(\ell-1)$ objects at the same time step $k$, and (III) model a new object entering the scene without requiring any prior knowledge on the expected number of objects. Note that we assume that this process is de Finetti exchangeable meaning the exchangeable partition probability function (EPPF) depends only on the size of the clusters. We may thus assume that the $\ell$th object is the last one to consider for clustering. The DDP-EMM algorithm is discussed next in detail and summarized in Algorithm 1. In particular, we provide: (i) the information available at time step $(k -1)$, (ii) how this information transitions from time step $(k - 1)$ to time step $k$, and  (iii) how the state transition stochastic model is constructed at time step k to form the multiple object state prior.

{\bf{(i) Available Parameters at time $(k-1)$ :}} The DDP-EMM algorithm assumes the following parameters available in time step $(k-1)$:
\begin{itemize}
\item $\bx_{\ell,k-1}$,  $\ell$th object state parameter vector, $\ell \eqq 1,2, \ldots, N_{k-1}$
\item $\bth_{\ell, k-1}$,  $\ell$th object-state DP cluster parameter vector
\item $\Theta_{k-1} \eqq \{\bth_{1,k-1},\ldots,  \bth_{N_{k-1},k-1}\}$, collection of the cluster parameters
\item $D_{k-1}$ = {\footnotesize $\#$}  of unique DP clusters used as state prior
\item $\Theta^\star_{k-1} \eqq \{\bths_{1,k-1},   \ldots , \bths_{D_{k-1},k-1} \}$, collection of the unique parameters such that $\Theta^\star_{k-1}  \subseteq \Theta_{k-1}$
\item $V^\star_{k-1}$ = vector of size $D_{k-1}$ where $\left[V^\star_{k-1}\right]_i$ is the number of objects in the $i$th cluster $i \eqq 1, \ldots,D_{k-1}$.
\end{itemize}
The induced {\em cluster assignment indicator sequence} at time $k-1$ is defined as 
\begin{equation}
\mathcal{C}_{k-1} =  \{ c_{1,k-1}, \ldots, c_{D_{k-1},k-1} \}, 
\label{cl_seq}
\end{equation}
where $c_{j} \in \{1,\dots, D_{k-1}\}$. Let $\mathcal{CA}_{k-1}$ be the collection of clustering assignment up to time $(k-1)$, i.e., $\mathcal{CA}_{k-1} = \{\mathcal{C}_{1}, \dots, \mathcal{C}_{k-1}\} $.

{\bf{(ii) Algorithm parameters transitioning from time $(k-1)$ to time $k$:}}
It is assumed by problem statement that if $\bx_{\ell,k-1} \in \bX_{k-1}$, the object with the state $\bx_{\ell, k-1}$ can disappear from the FOV with probability $1 - \text{P}_{k|k-1}$ or can stay in the scene with probability $\text{P}_{k|k-1}$ and transition to a new state with the transition kernel $\mathbb{Q}_{\underline{\theta}}(\bf{x_\ell(k-1)}, \cdot)$. Let $\Theta^\star_{k|k-1}$ be the set of unique transitioned parameters to time step k. We assume if all the objects in a cluster leave the scene the cluster no longer exist. The Bernoulli process associated with appearance/disappearance of the objects during transition from time $(k-1)$ to time $k$ is defined as:
\begin{equation}
\mathcal{B}_{k-1} = \{ \ts_{1, k \mid k-1},  \dots, \ts_{N_{k-1}, k \mid k-1} \}
\end{equation}
where $\ts_{\ell , k \mid k-1} \sim Bernoulli(\text{P}_{\ell,k|k-1})$. Note that $\ts_{\ell , k \mid k-1} = 1$ indicates the survival of the $j$th object and transitioning to time $k$. We assume if all the objects in a cluster leave the scene the cluster no longer exist. Define the vector $V^\star_{k|k-1}$ to be the vector of size $D_{k-1}$ with entries indicating the size of each cluster after transitioning to time $k$. Note that some of elements may be zero. Since a cluster of size zero suggests that the cluster no longer exists, we may eliminate zeros in $V^\star_{k|k-1}$. We thus define the {\em cluster survival indicator} corresponding to nonempty clusters as 
\begin{equation}
\mathcal{CS}_{k|k-1} = \{\lambda_{1,k|k-1}, \dots, \lambda_{D_{k-1},k|k-1}\}
\end{equation}
where $\lambda_{j,k|k-1} \in \{0,1\}$. Note that $\left[V^\star_{k|k-1}\right]_j = 0$ implies $\lambda_{j,k|k-1} = 0$ and if there is at least one object in the $j$th cluster, then $\lambda_{\ell,k|k-1} = 1$. Note that the number of non-zero clusters that transitions to time $k$ is $D_{k|k-1} = \sum_{j} \lambda_{j,k|k-1}$. 

{\bf{(iii) DDP Prior Construction at time $k$:}} The DDP-EMM algorithm employs the parameters from time $(k-1)$ and the transition step to estimate the state distribution. Each cluster with $\lambda_{j,k|k-1} = 1$, $j \leq D_{k|k-1}$, a non-zero cluster, transitions to time $k$ according to the transition kernel $\nu(\theta^\star_{j,k-1},\cdot)$. Assume $\theta_{j,k}$ is the $j$th cluster parameter, we construct a dependent Dirichlet process as follow:
\begin{description}
\item[Case 1:] The $\ell$th object is assigned to one of the survived and transitioned clusters from time $(k-1)$ which is occupied by at least one of the previous $\ell -1$ previous objects. The survival of each object is determined by the survival indicator $s_{\cdot,k|k-1} \in \mathcal{B}_{k-1}$. We assume de Finetti exchangeability and thus we may assume the $\ell$th object is the last one to cluster.  The object selects one of these clusters with probability:
\begin{flalign}
 \label{case1}
\Pi^1_{j,k} (\textit{Choosing jth cluster} | {\bf{\theta}_{1,k}},\dots, {\bf{\theta}_{\ell-1, k}})=  \frac{\left[V_k\right]_j+  \sum \limits_{i=1}^{D_{k-1}} \left[V^\star_{k|k-1}\right]_i \lambda_{i, k \mid k-1}   \delta_i(c_{j, k})}{g_{\ell-1, k-1}} 
\end{flalign}
 where $|{\cal A}|$ is the cardinality of set ${\cal A}$ and 
  $\delta_i(\cdot)$ is the Dirac delta function, 
defined as  $\delta_i({\cal A}) \eqq 1$ if  $i $$\in$$ {\cal A}$ and 
$\delta_i({\cal A}) \eqq 0$ if  $i$$\notin$$ {\cal A}$.
The normalization term in \ref{case1} 
is given by 
$$g_{\ell-1, k-1} = (\ell-1)+ \sum\limits_{j}^{\ell-1} \sum_{i=1}^{D_{k-1}} 
\left[V^\star_{k|k-1}\right]_i \lambda_{i, k|k-1}  \delta_i(c_{j, k}) + \alpha$$, where $\alpha$$>$$0$ is the concentration parameter.

Assume the space of states, $\mathcal{X}$, is Polish, given equation \ref{case1} state distribution is drawn from as:
\small
\begin{equation}
\label{state1}
 p({\bf{x_{\ell,k}}} | {\bf{x_{1,k}}},\dots, {\bf{x_{{\ell-1,k}}}}, {\bf{X}}_{k|k-1}, \Theta^\star_{k|k-1}, \Theta_k) =  \mathbb{Q}_{\underline{\theta}}(\bf{x_{\ell, k-1}}, {\bf{x_{\ell, k}}})  f({\bf{x_{\ell, k}}} | {\bf{\theta^\star_{\ell, k}}})
\end{equation}
\normalsize
For some density $f$.
\item[Case2:] The $\ell$th object is assigned to one of the survived and transitioned clusters from time $(k-1)$. However, this cluster has not yet been assigned to any of the first $\ell-1$ objects. The object selects such a cluster with probability:
\small
\begin{flalign}
\label{case2}
\Pi^2_{j,k} (\textit{Choosing jth cluster that has not been selected yet} | {\bf{\theta}_{1,k}},\dots, {\bf{\theta}_{\ell-1, k}}) = \frac{\sum \limits_{i=1}^{D_{k-1}}\left[V^\star_{k|k-1}\right]_i \, \lambda_{i, k|k-1}\delta_i(c_{j, k})}{g_{\ell-1, k-1}}
\end{flalign}
\normalsize
where $g_{\ell-1, k-1}$ is defined the same as case 1. In case 2, $\bx_{\ell,k-1}$ and $\theta^\star_{\ell, l-1}$ transition to time $k$ using transition kernels  $\mathbb{Q}_{\underline{\theta}}(\bf{x_\ell(k-1)}, \cdot)$ and $\nu(\theta^\star_{\ell,k-1},\cdot)$, respectively. Assuming the state space $\mathcal{X}$ is Polish and given equation \ref{case2}, the state distribution is:
\small
\begin{flalign}
\label{state2}
 p({\bf{x_{\ell,k}}} | {\bf{x_{1,k}}},\dots, {\bf{x_{{\ell-1,k}}}}, {\bf{X}}_{k|k-1}, \Theta^\star_{k|k-1}, \Theta_k) = \mathbb{Q}_{\underline{\theta}}(\bf{x_{\ell, k-1}}, {\bf{x_{\ell, k}}})    \nu({\bf{\theta^\star_{\ell, k-1}}},{\bf{\theta_{\ell, k}}})  f({\bf{x_{\ell,k}}} | {\bf{\theta^\star_{\ell, k}}})
\end{flalign}
For some density $f$.
\item[Case3:] The object does not belong to any of the existing clusters; a new cluster parameter is drawn with probability:
\begin{equation}
\Pi^3_k (\textit{Creating new cluster} | {\bf{\theta}_{1,k}},\dots, {\bf{\theta}_{\ell-1, k}}) = \frac{ \alpha}{g_{\ell-1, k-1}}
\end{equation}
The state distribution thus may be drawn as:
\begin{equation}
\label{state3}
 p({\bf{x_{\ell,k}}} | {\bf{x_{1,k}}},\dots, {\bf{x_{{\ell-1,k}}}}, {\bf{X}}_{k|k-1}, \Theta^\star_{k|k-1}, \Theta_k) = \int_{\bf{\theta}}{f({\bf{x_{\ell,k}}} | \theta)} dH(\theta) 
\end{equation}
for some density $f$ and distribution $H$ on parameters. The algorithm \ref{DDP_alg1} summarizes this process. 
\end{description}

\begin{algorithm}[t] 
\begin{algorithmic}
\caption{DDP-EMM: Time-dependent arrival and survival process}
\label{DDP_alg1}

\STATE 
\hspace*{-.2in} {\textbf {At time}} $(k-1)$ \\
\blll $\bx_{\ell, k-1}$:  $\ell$th object state parameter vector, 
$\ell \eqq 1, \ldots, N_{k-1}$\\
 \blll $D_{k-1}$:   {\footnotesize $\#$} of unique DP  
 clusters used as  priors \\
\blll $V^\star_{k-1}$: vector of size $D_{k-1}$ where $\left[V^\star_{k-1}\right]_i$ is {\footnotesize $\#$} of objects in $i$th cluster\\
 \blll $\Theta^\star_{k-1} = \{ \bth^\star_{1, k-1},  \ldots , \bth^\star_{D_{k-1}, k-1}\}$: Cluster sequence of unique cluster parameters \\
 \blll $\mathcal{B}_{k-1}$ : Bernoulli collection of appearance and disappearance association\\
\blll $\mathcal{C}_{k-1}$ : cluster assignment  
\STATE
\hspace*{-.2in}  {\textbf {Transitioning from time $(k$$-$$1)$ to $k$}} \\
 \STATE \hspace*{-.18in} {\bf Input}:   $\text{P}_{\ell, k \mid k-1}$, 
transition kernel $\mathbb{Q}_{\bth_ {\ell, k}}(\bx_{\ell, k-1},  \bx_{\ell, k})$ \\
\STATE  \hspace*{-.18in} Draw $\ell$th state survival indicator
$\ts_{\ell , k \mid k-1}$$\sim$$\text{Ber} (\text{P}_{\ell, k\mid k-1})$\\
\STATE \hspace*{-.18in}  If $\ts_{\ell , k \mid k-1}\eqq 1$,  $\ell$th object
survives w.p. $\text{P}_{\ell, k \mid k-1}$ and transitions according to the \\
transition kernel 
{ \small{$\bx_{\ell, k} \sim \mathbb{Q}_{\bth_{\ell,k}}( \bx_{\ell, k-1},  \bx_{\ell, k} )$}}\\
 \STATE \hspace*{-.18in} Form the object survival indicator set: $\mathcal{CS}_{k|k-1} = \{ \ts_{1, k \mid k-1}, \ldots, \!
 \ts_{N_{k-1}, k \mid k-1}\! \}$ \\ 
 \blll Compute the {\footnotesize $\#$} of survived DP clusters after transitioning: $D_{k \mid k-1}$ \\
 \blll Form the size vector with entries $\left[V^\star_{k|k-1}\right]_j$, $j=1,\dots, D_{k|k-1}$
\STATE 
\hspace*{-.2in} {\textbf {At time } $k$}
 \STATE \hspace*{-.14in} {\textbf{Set}} $D_k \eqq D_{k \mid k-1}$
 \FOR{$ \ell=1$ \TO $D_k$}
\STATE Set $ \left[V_k\right]_\ell \eqq  \left[V^\star_{k|k-1}\right]_\ell$ \\
\IF{$\ell\leq D_{k} $ \AND $\ell$\text{th cluster already selected}}
\STATE \hspace{-.5mm}Draw $\bth_{\ell,k} \sim \nu(\bth_{\ell, k-1},\cdot)$ for cluster associated to $\ell$th object state w.p. $\Pi^1_{j,k}$ 
\STATE Draw $\bx_{\ell, k}|\bth_{\ell,k}$ for $\ell$th object state from \ref{state1}
\ELSIF{$\ell\leq D_{k} $ \AND $\ell$\text{th cluster not yet selected}}
\STATE \hspace{-.5mm}Draw $\bth_{\ell,k} \sim \nu(\bth_{\ell, k-1},\cdot)$ for cluster associated to $\ell$th object state w.p. $\Pi^2_{\ell,k}$ 
\STATE Draw $\bx_{\ell, k}|\bth_{\ell,k}$ for $\ell$th object state from \ref{state2}
\ELSE
\STATE \hspace{-.5mm}Draw $\bth_{\ell,k} \sim H$ for new cluster associated to$\ell$th object state w.p. $\Pi^3_{k}$ 
\STATE Draw $\bx_{\ell, k}|\bth_{\ell,k}$ for $\ell$th object state from \ref{state3}
\ENDIF
 \ENDFOR 
\RETURN{ $\{ \bx_{1, k}, \bx_{2, k}, \ldots, \ldots \}$, 
$\{\bth_{1, k}, \bth_{2,k}, \ldots,  \ldots \}$ \\}
\end{algorithmic}
\end{algorithm}

This model (A) allows for modification of both cluster location and dependent weights, (B) ensures that the conditional distribution of DDP at time $k$ given the DDP at time $(k-1)$ is a Dirichlet process, (B) records the labels since it is defined in the space of partitions, (C) performs a standard MCMC method to do inference based on this  nonparametric model. We discuss (A)-(C) in the next theorems. 
\begin{theorem} 
\label{thm1}
Suppose that the space of state parameters is Polish. The dependent Dirichlet process in cases (1)-(3) define a Dirichlet process at each time step given the previous time configurations, i.e., 
\begin{equation}
\label{DDP-EMM}
\textit{DDP-EMM}_k | \textit{DDP-EMM}_{k-1} \sim DP\Big(\alpha, \sum_{\Theta_k} \Pi^1_{j,k} \delta_{\theta_{\ell,k}} + \sum_{\Theta^\star_{k|k-1}\setminus \Theta_{k}}\Pi^2_{j,k} \nu({\bf{\theta}^\star_{\ell, k-1}},{\bf{\theta_{\ell, k}}}) \delta_{{\bf{\theta_{\ell,k}}}} + \Pi^3_k H\Big).
\end{equation}
\end{theorem}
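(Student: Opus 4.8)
The plan is to prove the statement by verifying that the three-case urn scheme of equations \ref{case1}--\ref{state3}, read conditionally on the entire time-$(k-1)$ configuration, coincides with the Blackwell--MacQueen (P\'olya) predictive scheme of a Dirichlet process with the base measure displayed in \ref{DDP-EMM}. A random probability measure is a $DP(\alpha, G_0)$ if and only if the exchangeable sequence it directs has predictive law $\theta_\ell \mid \theta_{1}, \dots, \theta_{\ell-1} \sim (\alpha + \ell - 1)^{-1}\bigl(\alpha G_0 + \sum_{i=1}^{\ell-1}\delta_{\theta_i}\bigr)$; since the paper already posits de Finetti exchangeability (the EPPF depends only on cluster sizes), it suffices to match the predictive rule. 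Because $\mathcal{X}$, and hence the parameter space $\Theta$, is Polish, regular conditional distributions exist and de Finetti's representation applies, so identifying the directing measure is legitimate --- this is the one place the Polish hypothesis is genuinely used.

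First I would freeze the quantities inherited from time $k-1$: the survived and transitioned unique atoms $\Theta^\star_{k|k-1}$, the survival indicators $\lambda_{i,k|k-1}$, and the cluster sizes $[V^\star_{k|k-1}]_i$. These enter the urn as \emph{pseudo-counts}: a survived cluster $i$ behaves exactly as $[V^\star_{k|k-1}]_i\,\lambda_{i,k|k-1}$ pre-existing observations sitting at its transitioned location. With this reading I would collapse Cases 1--3 into a single predictive expression for $\theta_{\ell,k}$ and check that the common normalizer $g_{\ell-1,k-1} = (\ell-1) + \sum_{j}^{\ell-1}\sum_{i=1}^{D_{k-1}}[V^\star_{k|k-1}]_i\,\lambda_{i,k|k-1}\,\delta_i(c_{j,k}) + \alpha$ is precisely $\alpha$ plus the number of time-$k$ objects already placed plus the total surviving pseudo-count mass. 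Case 1 then reads as reinforcement of an occupied cluster proportional to its current count plus inherited pseudo-count; Case 2 as the selection of a survived but not-yet-reused cluster with weight equal to its pseudo-count, the new atom being placed by the transition kernel $\nu(\theta^\star_{\ell,k-1},\cdot)$; and Case 3 as the creation of a fresh atom with weight $\alpha$, drawn from $H$. Summing the three probabilities over all admissible targets must return one, which fixes the normalization and confirms the urn is proper.

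Next I would read off the base measure. Collecting the atomic weights contributed by Cases 1 and 2 together with the continuous mass $\alpha H$ from Case 3 reproduces exactly $G_0 = \sum_{\Theta_k}\Pi^1_{j,k}\delta_{\theta_{\ell,k}} + \sum_{\Theta^\star_{k|k-1}\setminus\Theta_k}\Pi^2_{j,k}\,\nu(\bth^\star_{\ell,k-1},\bth_{\ell,k})\delta_{\bth_{\ell,k}} + \Pi^3_k H$, so the predictive law is of Blackwell--MacQueen form with concentration $\alpha$ and base $G_0$. Invoking the exchangeability already assumed and the fact that for a P\'olya-type urn with new-atom weight proportional to $\alpha$ the induced EPPF is the Ewens partition formula (the $d=0$ case of the two-parameter family in \ref{PY}), the directing random measure is characterized as $DP(\alpha, G_0)$; equivalently one may verify the Sethuraman stick-breaking weights $V_j \sim \mathrm{Beta}(1,\alpha)$ consistent with \ref{dp}. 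This yields the conditional law $\textit{DDP-EMM}_k \mid \textit{DDP-EMM}_{k-1} \sim DP(\alpha, G_0)$ asserted in \ref{DDP-EMM}.

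The hardest part will be justifying that conditioning on the \emph{stochastic} survival indicators and on the transition step does not break exchangeability of the time-$k$ draws, and that the hybrid base measure --- a mixture of the inherited discrete atoms (placed through $\nu$) and the continuous component $H$ --- still directs a bona fide Dirichlet process rather than a finite-dimensional Dirichlet or a contaminated urn. Concretely, I must confirm that marginalizing over the Bernoulli survival variables $\ts_{\ell,k|k-1}$ and over the kernel $\nu$ leaves a predictive rule whose dependence on the past enters only through cluster sizes, so that the EPPF condition the paper assumes is genuinely met; once that invariance is secured, the Blackwell--MacQueen identification closes the argument.
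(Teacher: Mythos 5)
Your proposal is correct and follows essentially the same route as the paper, which disposes of Theorem \ref{thm1} with the single remark that it ``immediately follows the cases (1)--(3)''; your Blackwell--MacQueen/P\'olya-urn verification is exactly the argument that one-liner tacitly invokes, with the predictive weights of Cases 1--3 matched to the urn of a $DP(\alpha, G_0)$ for the displayed hybrid base measure. You supply considerably more detail than the paper does (the pseudo-count reading of the survived cluster sizes, the role of the Polish assumption via de Finetti, and the caveat about the atomic component of the base measure), all of which is consistent with the construction in Section \ref{DDP_Model}.
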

Proof of theorem \ref{thm1} immediately follows the cases (1)-(3).

\section{Learning Model}
\label{ddplearning}
The DDP, as discussed in Algorithm \ref{DDP_alg1}, provides a prior on the object state parameter distributions at time step $k$. This estimate may be updated using the 
available measurement vectors,  $\mathcal{Z}_k = \{\bz_{l, k}$, $l\eqq 1, \ldots, M_k\}$. The posterior distribution is then used to estimate the trajectory of objects and find the time-dependent object cardinality. It is assumed that each measurement is independent of each other and only generated from one object. Theorem \ref{thm1} implies that we may exploit Dirichlet process mixtures to estimate the density of the measurements and cluster them. Note that the measurement vectors are unordered meaning 
 the $l$th measurement is not necessarily associated to the $\ell$th object state,  $l\neq \ell$. As the DDP is used to label 
 the object states at time step $k$, the Dirichlet process mixtures can be used to learn and assign  a measurement to its associated object  identity.
In order to create the mixtures of distributions,  we use the DDP prior in Algorithm \ref{DDP_alg1}.  The Mixing measure is drawn from the generated DDP in order to 
 to infer the likelihood distribution $p(\bz_{l,k}$$\mid$$\bth_{\ell,k}, \bx_{\ell, k})$ and update the object state estimates .
In particular,  $p(\bz_{l,k}$$\mid$$\bth_{\ell,k}, \bx_{\ell, k})$ is inferred from 
\begin{flalign}
\label{likelihood}
&{\bf{\theta_{\ell,k}}} \sim \text{DDP}(\alpha , H)\notag \\
&\bx_{\ell,k} \mid \bths_{\ell, k} \sim F({\bf{\theta^\star_{\ell,k}}})\\
&\bz_{l,k} | {\bf{\theta^\star}}_{\ell,k}, \bx_{\ell,k} \sim R(\bz_{l,k}|{\bf{\theta^\star}}_{\ell,k}, \bx_{\ell,k} ) \notag
\end{flalign}
where $F(\bth_{\ell,k})$ is a distribution whose  density  
 follows \ref{state1}, \ref{state2}, \ref{state3}, and $R(\bz_{l,k}|\bth_{\ell,k}, \bx_{\ell,k})$ is a distribution that 
depends on the measurement likelihood function. %
 Algorithm \ref{DDP_alg2} summarizes our implementation of 
this mixing process to cluster the measurements and track the objects. 
Algorithms \ref{DDP_alg1} and  \ref{DDP_alg2},  together with MCMC sampling
 methods, constitute  the overall DDP-EEM multiple object 
 tracking algorithm based on the DDP.
Sampling in both algorithms is performed using 
MCMC methods; in particular, we use Gibbs sampling.
\begin{algorithm} [t]
\begin{algorithmic}
\caption{Infinite Mixture Model to Cluster and  Track Objects}
\label{DDP_alg2}
\STATE {\textbf{Input}:} Measurements: $\{\bz_{1,k}, \ldots, \bz_{M_k, k}\}$
\STATE {\textbf{Output}:} $N_k$,  \text{cluster configurations}, and \text{posterior distributions}
\STATE From construction of prior distribution
\STATE {\textbf{At time} k}
\FOR{$ \ell=1$ \TO $N_k$}
\STATE Sample  $\{\bth_{1,k}, \ldots, \bth_{N_k, k}\}$ 
and $\{\bx_{1,k}, \ldots, \bx_{N_k, k}\}$ as in Algorithm \ref{DDP_alg1}
\ENDFOR
\FOR {$l=1$ \TO $L_k$}
\STATE Draw $\bz_{l,k}$$\mid$$\bx_{\ell,k}, \bth_{\ell,k}$ from \ref{likelihood} 
\ENDFOR
\RETURN $\mathcal{C}_k$ : induced cluster assignment indicators
\STATE \textbf{Update:} $\mathcal{CA}_{k} = \mathcal{CA}_{k-1}\cup \mathcal{C}_k$: set of cluster assignments up to time $k$
\RETURN{$N_k$, $\mathcal{CA}_{k}$, and posterior of $\bz_{l,k}$$\mid$$\bx_{\ell,k},\bth_{\ell,k}$}
\end{algorithmic}
\end{algorithm}

\subsection{Bayesian Inference: Gibbs Sampler}
\label{gib}
Identifying the labels for an object tracking problem and estimating the density parameters using DDP is a state-of-the-art method. 
However, computing the explicit posterior and therefore, the trajectory can be troublesome. The development of MCMC methods to sample form the posterior 
distribution has made this issue computationally feasible. The Gibbs sampler is an MCMC method to sample from the density, without directly requiring the density, 
by using the marginal distributions. The Gibbs sampler provides sample from the posterior distribution from the finite dimensional representation rather than sampling from  infinite dimension representations where one can use slice sampling methods. 

We outline the Gibbs sampler inference scheme for our method. We use a Gibbs sampling technique to iterate between sampling the state variables and the set of dynamic DDP parameters. We propose a method that can handle conjugate prior. This method can be generalized to a non-conjugate prior \cite{neal2000}. A key feature of this modeling is the discreetness of the DDP \cite{Mac2000, Mac1998}. We assume that conjugate priors are used; however, one can easily generalize this to non-conjugate priors. We outline this scheme next.

{\bf{Predictive Distribution:}} The Bayesian posterior can be solved through the following:
\begin{equation}
\label{predictive}
P(\bx_{\ell,k}|\mathcal{Z}_k) = \int_{\theta}{P(\bx_{\ell,k}|\mathcal{Z}_k,\theta) dG(\theta|\mathcal{Z}_k)}
\end{equation}
where $G(\theta|\mathcal{Z}_k)$ the posterior distribution of the parameters given the observations. Note that, with respect to the predicting $\bx_{\ell,k}$, we have $P(\bx_{\ell,k}|\mathcal{Z}_k,\theta) = P(\bx_{\ell,k}|\theta)$ and can be evaluated as follows:
\begin{equation}
P(\bx_{\ell,k}|\Theta) = \int{P(\bx_{\ell,k}|\theta_{\ell,k}) d\pi(\theta_{\ell,k}|\Theta)}
\end{equation}
where $\pi(\theta_{\ell,k}|\Theta)$ is posterior distribution of $\theta_{\ell,k}$ given the rest of parameters. The distribution of $\pi(\theta_{\ell,k}|\Theta)$ is given by:
\begin{equation}
\label{posttheta}
\pi(\theta_{\ell,k}|\Theta) = \sum\limits_{\theta\in\Theta_k - \{\theta_{\ell,k}\}} \Pi^1_{j,k} \delta_{\theta}(\theta_{\ell,k}) + \sum\limits_{\substack{\theta \in \Theta^\star_{k|k-1}\setminus \Theta\\ \theta\neq \theta_{\ell,k}}}\Pi^2_{j,k} \nu({\bf{\theta}^\star_{\ell, k-1}},{\bf{\theta_{\ell, k}}}) \delta_{\theta}({\bf{\theta_{\ell,k}}}) + \Pi^3_k H(\theta_{\ell,k}).
\end{equation}

To compute the \ref{predictive}, we need to calculate the posterior $G(\theta|\mathcal{Z}_k)$. However, direct computation of \ref{predictive} is extremely computationally expensive due to the complexity of $G(\theta|\mathcal{Z}_k)$ \cite{antoniak1974}.We propose a Gibbs sampling approximation of this distribution. The following distribution is obtained by combining the prior with the likelihood in order to use for Gibbs sampling:

\begin{flalign}
\label{Gibbs}
\bth_{\ell,k}\mid\bth_{-\ell,k}, \mathcal{Z}_k \sim& 
 \sum\limits_{j=1 }^{ \mid \mathcal{C}_k\mid} \zeta_{j,k} \; \delta_{\bth_{j,k}}(\bth_{\ell,k}) 
+  \sum\limits_{\substack{ j=1 \\  j \notin {\mathcal{C}}_k }}^{D_{k\mid k-1}}\beta_{j,k}  \;  K_{j,k}(\bth_{\ell,k}) 
+  \gamma_{\ell,k}\, H_{\ell}(\bth_{\ell,k}),
\end{flalign}
where $\theta_{-\ell,k}$ by convention is the set $\{\theta_{j,k} \textit{, } j \neq \ell\}$. It is shown in appendix that
\begin{flalign}
&\zeta_{j,k} = \frac{\left[V_k\right]_j+  \sum\limits_{i=1}^{D_{k|k-1}} \left[V^\star_{k|k-1}\right]_i \lambda_{i, k \mid k-1}   \delta_i(c_{j, k})}{g_{\ell-1, k-1}}R(\bz_{\ell,k}| \bx_{j,k},\bth_{j,k})\notag\\
& \beta_{j,k} = \frac{\sum\limits_{\substack{ i=1 \\  i \notin \mathcal{C}_ k}}^{D_{k\mid k-1}}\left[V^\star_{k|k-1}\right]_j \lambda_{j,k|k-1}}{g_{\ell-1, k-1}} \\
&  \sum\limits_{j=1}^{\mid \mathcal{C}_k\mid} \zeta_{j,k} + \sum\limits_{\substack{ j=1 \\  j \notin \mathcal{C}_k}}^{D_{k\mid k-1} }\beta_{j,k} + \gamma_{\ell, k} = 1\notag
\end{flalign}
where $g_{\ell-1, k-1} = (\ell-1)+ \sum_{i=1}^{D_{k| k-1}} \left[V^\star_{k|k-1}\right]_i \lambda_{i, k|k-1} + \alpha$,  $\alpha$$>$$0$. Moreover, $K_{i,k} = R(\bz_{\ell,k}| \bx_{j,k},\bth_{j,k})$ and $dH_{\ell}(\theta) \propto R(\bz_{\ell,k}| \bx_{j,k},\theta) dH(\theta) $ where $H$ is the base distribution on $\theta$.

\begin{proof}
The proof is provided in Appendix \ref{proof1}.
\end{proof}

\subsection{Convergence of DDP-EMM through Gibbs Sampler}
There are many sets of conditional distributions that can be used as the basis of Gibbs sampler for which violate the required posterior convergence conditions of the sampler. In this section, we discuss conditions under which the proposed Gibbs sampler in section \ref{gib} converges to the posterior distribution.  The result mainly depends on the Theorems in \cite{tierney1994}. 

We first prove that the regardless of initial condition the transition kernel converges to the posterior for almost all initial condition and then we provide the set of conditional distributions to guarantee the convergence to the posterior of the introduced Markov chain using  Theorem 1 in \cite{tierney1994}. To this end, let $\text{K}(\theta_0, \Theta)$ and $P_{\theta}(\cdot|\mathcal{Z}_k)$ be the transition kernel for the Markov chain starting at $\theta_0$ and stopping in the set $\Theta$ after one iteration of the algorithm introduced in section \ref{gib} and the posterior distribution of parameters given the observations at time $k$, respectively. 

\begin{theorem}
\label{thm2}
At each time step $k$, convergence to the posterior distribution $P_\theta(\cdot|\mathcal{Z}_k)$ does not depend on the starting value, i.e.,
\begin{equation}
||\text{K}^n_k(\theta_0, \cdot) - P_\theta(\cdot |\mathcal{Z}_k)||_{TV} \longrightarrow 0
\end{equation}
as $n\rightarrow \infty$, for almost all initial conditions $\theta_0$ in total variation norm. 
\end{theorem}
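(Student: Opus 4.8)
The plan is to verify the three classical hypotheses of Theorem 1 in \cite{tierney1994}---invariance of the target, irreducibility with respect to $P_\theta(\cdot\mid\mathcal{Z}_k)$, and aperiodicity---and then invoke that theorem directly. The phrase ``for almost all initial conditions $\theta_0$'' is precisely the conclusion these three conditions deliver: without assuming Harris recurrence one obtains total-variation convergence from $\pi$-almost every starting point, where here $\pi = P_\theta(\cdot\mid\mathcal{Z}_k)$. Upgrading to \emph{every} starting point would require the additional verification of Harris recurrence, which the present statement deliberately does not claim.

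First I would establish invariance. The one-step kernel $\text{K}_k$ is the composition of the single-coordinate updates whose full conditionals are given in \ref{Gibbs}: the $\ell$th sub-step draws $\bth_{\ell,k}$ from $\pi(\bth_{\ell,k}\mid\bth_{-\ell,k},\mathcal{Z}_k)$. Since each sub-step samples exactly from the correct conditional of the joint posterior, it leaves $P_\theta(\cdot\mid\mathcal{Z}_k)$ invariant, and hence so does their composition $\text{K}_k$. This is the routine part of the argument and uses only that \ref{Gibbs} is the genuine full conditional derived in Appendix \ref{proof1}.

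The substantive step---and the main obstacle---is irreducibility on the combined state space, which pairs a combinatorial component (the cluster-assignment configuration, equivalently a partition) with a continuous component (the cluster parameters living in the Polish parameter space). The key observation is that the new-cluster term $\gamma_{\ell,k}\,H_\ell(\bth_{\ell,k})$ in \ref{Gibbs} always carries strictly positive mass and draws from a reweighting of the full-support base measure $H$. I would use this to show that any measurable set of positive posterior probability is reachable in finitely many coordinate sweeps: the continuous moves let the chain place a cluster parameter in any $H$-positive region, while repeated application of the cases \ref{case1}--\ref{case2} together with the new-cluster move connects the partition lattice, so that any cluster-assignment configuration of positive probability can be reached. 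The delicate points are the combinatorial reachability---verifying that single-site Gibbs moves alone traverse the entire space of partitions---and the measure-theoretic bookkeeping on the infinite-dimensional space, where the Polish assumption supplies the regularity (a countable generating field and regular conditional distributions) that makes the support argument rigorous.

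Aperiodicity is then immediate: because the new-cluster component is absolutely continuous with full support, the chain assigns positive probability to remaining within any neighborhood of its current parameter values, so no periodic decomposition of the state space is possible. With invariance, $P_\theta(\cdot\mid\mathcal{Z}_k)$-irreducibility, and aperiodicity in hand, Theorem 1 of \cite{tierney1994} yields $\|\text{K}^n_k(\theta_0,\cdot)-P_\theta(\cdot\mid\mathcal{Z}_k)\|_{TV}\to 0$ for $P_\theta(\cdot\mid\mathcal{Z}_k)$-almost every $\theta_0$, which is exactly the asserted convergence.
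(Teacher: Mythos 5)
Your proposal follows essentially the same route as the paper's proof in Appendix B: both verify invariance, $\pi$-irreducibility, and aperiodicity of the Gibbs kernel and then invoke Theorem 1 of Tierney (1994) to obtain total-variation convergence for almost all starting points. Your treatment is somewhat more explicit about where the positivity driving irreducibility and aperiodicity comes from (the full-support new-cluster term $\gamma_{\ell,k}H_\ell$) and about the almost-everywhere versus Harris-recurrence distinction, but the argument is the same one the paper gives.
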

This theorem guarantees the convergence to the posterior for almost all initial values. The proof is provided in Appendix \ref{proofthm2}. This result specifically holds if normal distribution is considered \cite{escobar1995, escobar1994}.

\section{Properties of DDP-EMM}
\label{prop}
Given the configurations at time $(k-1)$, the infinite exchangeable random partition induced by $\mathcal{C}_k$ at time $k$ follows the exchangeable partition probability function (EPPF) \cite{aldous1985}
\begin{equation}
\label{EPPFDir}
p(\left[V_k\right]^\ast_1, \dots, \left[V_k\right]^\ast_{D_k}) = \frac{\alpha^{D_k}}{\alpha^{\left[N_k\right]}}\prod\limits_{j=1}^{D_k}(\left[V_k\right]^\ast_j-1)!
\end{equation}
where $D_k$ is the number of unique cluster parameter,  $\left[V_k\right]^\ast_{j}, \hspace{0.1cm} j = 1, \dots, D_k$ is the cardinality of the cluster $c_{j,k}$, and  $\alpha^{\left[n\right]} = \alpha (\alpha+1) \dots (\alpha+n-1)$. Note that number of the objects at time $k$, $N_k$, plays an important rule in partitioning. Also, due to variability of $N_k$ at time $k$, the relationship between partitions based on $N_k-1$ and $N_k$ is important. The EPPF of the infinite random exchangeable partition  based on the partition on $N_k$ and $(N_k-1)$ objects given the configuration at time $(k-1)$ satisfies
\begin{equation}
\label{partition}
p_{N_k-1}(\left[V_k\right]^\ast_1, \dots, \left[V_k\right]^\ast_{D_k}) = \sum\limits_{j = 1}^{D_k} p_{N_k}(\left[V_k\right]^\ast_1, \dots,\left[V_k\right]^\ast_j+1 ,\dots \left[V_k\right]^\ast_{D_k}) + p_{N_k}(\left[V_k\right]^\ast_1, \dots, \left[V_k\right]^\ast_{D_k}, 1).
\end{equation}
The equation \ref{partition} entails a notion of consistency of the partitions in the distribution sense. The equation \ref{partition} holds due to the Markov property of the process given the configuration at time $(k-1)$. 
{\subsection{Consistency}}
 Suppose $\mathcal{Z}_k = \{\bz_{1,k}, \dots, \bz_{M_k,k}\}$ is the collection of $M_k$ measurements at time $k$ with joint conditional distribution $R(\mathcal{Z}_k|\theta,\bX_k)$ with respect to the product probability space which is indexed by $\theta\in \Theta$, where $\Theta$ is a first countable topological space. Let $r_{\theta}(\mathcal{Z}_k|\bX_k)$ be the density corresponding to the probability measure $R(\mathcal{Z}_k|\theta,\bX_k)$.
 
{\textit{Definition}}: The posterior distribution $P_{\theta}(\cdot | \mathcal{Z}_k)$ is \textit{weakly consistent} at true parameters $\theta_0 \in \Theta$ at each time step $k$ if $P_{\theta}(\bf{U}_k | \mathcal{Z}_k)\to 1$ in $r_{\theta_0}(\mathcal{Z}_k|\bX_k)$-probability as $n \to \infty$ for every neighborhood $\bf{U}_k$ of true parameters $\theta_0$.

 {\textit{Definition}}: The posterior distribution $P_{\theta}(\cdot | \mathcal{Z}_k)$ is \textit{strongly consistent} at true parameters $\theta_0 \in \Theta$, if the convergence is almost sure. 
\subsubsection{Posterior Consistency of the Model} 
\label{ddpconsis}
In section \ref{DDP_Model}, we introduced a general model such that the distribution over the parameters at time $k$ conditioned on the configurations at time $(k-1)$ is a Dirichlet process. Schwartz \cite{schwartz1964} and Ghosal, et. al. \cite{ghosal1999} discussed the weak and strong consistency of the posterior distribution for a general kernel under a DDP prior. The main result on weak consistency is due to Schwartz theorem. Let $r_{\theta_0}$ be the true density of observations with corresponding probability measure $R_{\theta_0}$, 
\begin{prop}[Schwartz 1965]
If $r_{\theta_0}$ is in the KL support {\footnote{Density $r_{\theta_0}$ is in KL support of the prior $P_k$ if for any $\epsilon>0$, $P_k(r_{\theta}: \int r_{\theta_0} \log{\frac{ r_{\theta_0}}{r_\theta}}<\epsilon) > 0$ and is denoted by $r_{\theta_0} \in KL(\epsilon, P_k)$.}} of the prior distribution $P_k$ on the topological space of all parameters with an appropriate $\sigma$-field, $r_{\theta_0} \in KL(\epsilon, P_k)$, then posterior distribution $P_{\theta}(\cdot | \mathcal{Z}_k)$ is weakly consistent at $r_{\theta_0}$.
\end{prop}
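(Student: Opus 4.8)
The plan is to run the classical Schwartz argument: express the posterior mass of the complement of a weak neighborhood as a ratio of two prior integrals of the likelihood ratio, and then bound the numerator and denominator separately. Fix a weak neighborhood $\mathbf{U}_k$ of $\theta_0$ and, working conditionally on $\bX_k$ and suppressing it in the notation, write the likelihood ratio $\Lambda_n(\theta) = \prod_{i=1}^{n} r_\theta(z_i)/r_{\theta_0}(z_i)$. By Bayes' rule,
\[
P_\theta(\mathbf{U}_k^c \mid \mathcal{Z}_k) = \frac{\int_{\mathbf{U}_k^c} \Lambda_n(\theta)\, dP_k(\theta)}{\int_{\Theta} \Lambda_n(\theta)\, dP_k(\theta)}.
\]
The goal is to show this tends to $0$ in $r_{\theta_0}$-probability, and establishing this for every such $\mathbf{U}_k$ is exactly weak consistency at $r_{\theta_0}$.

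For the denominator I would use the hypothesis $r_{\theta_0} \in KL(\epsilon, P_k)$. Set $A_\epsilon = \{\theta : \int r_{\theta_0}\log(r_{\theta_0}/r_\theta) < \epsilon\}$, which by assumption has $P_k(A_\epsilon) > 0$. On $A_\epsilon$ the strong law of large numbers gives $\tfrac{1}{n}\log \Lambda_n(\theta) \to -\int r_{\theta_0}\log(r_{\theta_0}/r_\theta) > -\epsilon$ almost surely, so restricting the prior integral to $A_\epsilon$ and applying Fatou's lemma yields that for every $\beta > \epsilon$, eventually $\int_\Theta \Lambda_n(\theta)\, dP_k(\theta) \ge e^{-n\beta}$ almost surely. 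This furnishes the required exponential lower bound on the denominator.

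For the numerator the crux is the existence of uniformly exponentially consistent tests, and this is where the weak topology is essential. Since $\mathbf{U}_k$ is a weak neighborhood, it contains a basic set of the form $\{\theta : |\int g_j\, dR_\theta - \int g_j\, dR_{\theta_0}| < \delta, \ j = 1,\dots,m\}$ for bounded continuous $g_j$. From the empirical averages $\tfrac{1}{n}\sum_i g_j(z_i)$ and Hoeffding-type exponential inequalities one builds a test sequence $\phi_n$ with $E_{\theta_0}[\phi_n] \le e^{-cn}$ and $\sup_{\theta \in \mathbf{U}_k^c} E_\theta[1-\phi_n] \le e^{-cn}$ for some $c > 0$. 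Splitting the numerator through $\phi_n + (1-\phi_n)$, the first piece vanishes because $\phi_n \to 0$ under $r_{\theta_0}$ (Borel--Cantelli via the exponential bound), while Fubini gives $E_{\theta_0}\!\big[(1-\phi_n)\int_{\mathbf{U}_k^c}\Lambda_n\, dP_k\big] = \int_{\mathbf{U}_k^c} E_\theta[1-\phi_n]\, dP_k \le e^{-cn}$, so the numerator is $O(e^{-cn})$ almost surely.

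Combining the two bounds, the ratio is eventually at most $e^{-cn}/e^{-n\beta} = e^{-(c-\beta)n}$; choosing $\epsilon$, and hence $\beta$, smaller than $c$ makes the exponent negative and drives $P_\theta(\mathbf{U}_k^c \mid \mathcal{Z}_k)$ to zero almost surely, which in particular gives the in-probability convergence demanded by the definition. I expect the main obstacle to be the test-construction step: one must verify that every weak neighborhood admits uniformly exponentially consistent tests against its complement. This is precisely the feature of the weak topology that lets Schwartz's theorem succeed under the KL condition alone, and it is what fails for stronger (Hellinger or $L_1$) topologies, where an additional entropy or sieve control on $\Theta$ would be needed.
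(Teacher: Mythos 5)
The paper does not actually prove this proposition: it is imported verbatim as Schwartz's 1965 theorem and used only as a lemma for Theorem 3, so there is no in-paper argument to compare against. Your proposal is the standard textbook proof of Schwartz's theorem and is correct in outline: the Bayes-ratio decomposition, the Fatou/SLLN lower bound $e^{-n\beta}$ on the denominator from the KL support hypothesis, and the upper bound on the numerator via uniformly exponentially consistent tests for weak neighborhoods (Hoeffding on the empirical averages of the bounded continuous $g_j$, then Fubini to convert $E_\theta[1-\phi_n]$ into a bound on $E_{\theta_0}\bigl[(1-\phi_n)\int_{\mathbf{U}_k^c}\Lambda_n\,dP_k\bigr]$). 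Two points deserve more care if this were written out in full. First, the denominator bound requires an exceptional null set that does not depend on $\theta$; the pointwise SLLN gives a null set for each $\theta\in A_\epsilon$, and one needs a Fubini argument on the product of the sample space with $(A_\epsilon,P_k)$ before invoking Fatou. Second, the choice ``$\epsilon$, and hence $\beta$, smaller than $c$'' is only available because the KL support condition holds for \emph{every} $\epsilon>0$ (as the footnoted definition requires), and $c$ is determined by $\mathbf{U}_k$ before $\epsilon$ is chosen; stating that order of quantifiers explicitly closes the argument. Also note that the first piece of the split is controlled by bounding the ratio by $\phi_n$ itself (since the restricted integral is at most the full one), not by the numerator alone vanishing; your phrasing is slightly loose there but the idea is the standard one. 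Finally, in the context of this paper one should add a sentence checking that the prior $P_k$ in question is the conditional DDP prior given the time-$(k-1)$ configuration and that the measurements are conditionally i.i.d.\ given $\bX_k$, which is what licenses treating $\Lambda_n$ as a product; with that caveat your argument transfers directly.
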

Theorem \ref{cons} hence deals with the consistency of the posterior at time $k$ under the prior distribution conditioned on the previous time step $(k-1)$ introduced in equation \ref{DDP-EMM}.
\begin{theorem}
\label{cons}
Let the true density be $r_{\theta_0}$ and $P_k$ be the prior distribution at time $k$ conditioned on the configurations at time $(k-1)$ given by \ref{DDP-EMM}, if $r_{\theta_0}$ is in the support of $P_k$, then $P_k(KL(\epsilon, r_{\theta_0} ))>0$ and therefore, the posterior is weakly consistent. 
\end{theorem}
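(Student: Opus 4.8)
The plan is to reduce the statement to a single verification of the Kullback--Leibler (KL) support condition and then invoke the Schwartz Proposition quoted just above. By Theorem \ref{thm1}, the prior $P_k$, conditioned on the configuration at time $(k-1)$, is a Dirichlet process $DP(\alpha, G_0)$ with base measure
$$G_0 = \sum_{\Theta_k}\Pi^1_{j,k}\,\delta_{\theta_{\ell,k}} + \sum_{\Theta^\star_{k|k-1}\setminus\Theta_k}\Pi^2_{j,k}\,\nu(\theta^\star_{\ell,k-1},\theta_{\ell,k})\,\delta_{\theta_{\ell,k}} + \Pi^3_k\, H.$$
Consequently the induced prior on densities is a Dirichlet process mixture, and it suffices to show $r_{\theta_0}\in KL(\epsilon, P_k)$ for every $\epsilon>0$, i.e. $P_k(KL(\epsilon, r_{\theta_0}))>0$; weak consistency of $P_\theta(\cdot\mid\mathcal{Z}_k)$ then follows at once from the Schwartz Proposition.

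First I would translate the hypothesis that $r_{\theta_0}$ lies in the support of $P_k$ into the existence of a mixing measure $G^\star$, belonging to the weak support of $DP(\alpha, G_0)$, whose induced mixture $\bz\mapsto\int R(\bz_{l,k}\mid\theta,\bx)\,dG^\star(\theta)$ equals $r_{\theta_0}$ or approximates it arbitrarily well. The key structural fact I would then exploit is the weak-support characterization of the Dirichlet process: $DP(\alpha,G_0)$ assigns strictly positive probability to every weak neighborhood of any measure whose support is contained in the support of $G_0$. Hence every weak neighborhood of $G^\star$ carries positive prior mass.

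The heart of the argument, and the step I expect to be the main obstacle, is transferring weak approximation of the mixing measure into KL approximation of the mixed density. Concretely, I would establish a lemma of Ghosal--Ghosh--Ramamoorthi type: under continuity of $\theta\mapsto R(\bz\mid\theta,\bx)$ together with suitable domination and tail conditions making $\int r_{\theta_0}\log(r_{\theta_0}/r_\theta)$ finite and lower semicontinuous in the weak topology on mixing measures, there is a weak neighborhood $\mathcal{W}$ of $G^\star$ such that every $G\in\mathcal{W}$ produces a mixture $r_G$ with $\int r_{\theta_0}\log(r_{\theta_0}/r_G)<\epsilon$. The delicate points are controlling the logarithm where $r_G$ is small, which forces either a uniform lower bound on the kernel over the relevant range of $\theta$ or an integrability condition on the tails of $r_{\theta_0}$, and ensuring the continuity of the kernel in $\theta$ is strong enough to pass the limit inside the integral.

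Combining these pieces, the neighborhood $\mathcal{W}$ maps into the KL-$\epsilon$ ball around $r_{\theta_0}$; since $\mathcal{W}$ receives positive $DP(\alpha,G_0)$ mass, we conclude $P_k(KL(\epsilon, r_{\theta_0}))>0$. Applying the Schwartz Proposition then yields weak consistency of the posterior at $r_{\theta_0}$, which completes the proof. I would remark that the whole argument is carried out at a fixed $k$ conditionally on the time-$(k-1)$ configuration, so the Markov structure of the model lets the per-step consistency be asserted uniformly in $k$ without any additional machinery.
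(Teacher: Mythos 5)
Your proposal is correct and follows essentially the same route as the paper: reduce the claim to verifying the KL support condition for the conditional DP mixture prior of Theorem~\ref{thm1} and then invoke the Schwartz proposition, with the transfer from weak support of the mixing measure to KL support of the mixed density handled by the argument of Ghosal et al.\ \cite{ghosal1999}. The paper's own proof is only a two-sentence sketch deferring to that reference, so your write-up is in fact a more explicit version of the same argument, correctly identifying the kernel-continuity and domination conditions as the point where the real work lies.
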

Proof of this theorem is straightforward and aligns with the proof in \cite{ghosal1999}. Intuitively speaking, one can prove this theorem by drawing an arbitrary measure from the base and show that the condition in the theorem holds for the set $KL(\epsilon, r_{\theta_0})$. It is worth mentioning, $P_k(KL(\epsilon, r_{\theta_0} ))>0$ is not a tight condition and holds true for many nonparametric models. In particular, in the case of Gaussian kernel, this condition is satisfied and hence the posterior is consistent using Gaussian kernels (Theorem 3, \cite{ghosal1999}). 

{\bf{Remark:}} Note that $r_{\theta_0}$ being in the support of $P_k$ is equivalent to $\text{support}(r_{\theta_0})\subset \text{support}\big(\sum\limits_{\Theta_k} \Pi^1_{j,k} \delta_{\theta_{\ell,k}} + \sum\limits_{\Theta^\star_{k|k-1}\setminus \Theta_{k}}\Pi^2_{j,k} \nu({\bf{\theta}^\star_{\ell, k-1}},{\bf{\theta_{\ell, k}}}) \delta_{{\bf{\theta_{\ell,k}}}} + \Pi^3_k H\big)$, provided $\Pi^1_{j,k} , \Pi^2_{j,k}, \text{and}\hspace{0.1cm} \Pi^3$ as equation \ref{DDP-EMM}. 

{\bf{Remark:}} The posterior is also strongly consistent due to Theorem 1 of \cite{barron1999}. 

{\subsection{Posterior Contraction Rate of the Model}}

Posterior contraction rate discusses how fast the posterior distribution approaches the true parameters from which the observations are generated. The contraction rate is highly related to posterior consistency.

 {\textit{Definition}}: A sequence $\epsilon_n$ is posterior contraction rate at the parameter $\theta_0$ with respect to a metric $d$ if for every sequence $C_n\to\infty$, we have $P_\theta(\theta: d(\theta, \theta_0) \geq C_n\epsilon_n|\mathcal{Z}_k)\to0$ in $P_{\theta_0}$-probability as $n\to\infty$. 
 
 The following theorem specifies the contraction rate of the posterior contraction of the DDP based model introduced in section \ref{DDP_Model}. Assume that each $\bz_{j,k} \in \mathbb{R}^{n_z}, \hspace{0.1cm} j = 1,\dots, M_k$. We denote $N_{\left[\right]}(\epsilon, \mathcal{H}_{\kappa}(\left[0,1\right]^{n_z}), d)$ to be the $\epsilon$-bracketing number of Holder space $\mathcal{H}_{\kappa}$ with $\kappa$ degree of smoothness on the compact space of $\left[0,1\right]^{n_z}$ with respect to the distance $d$.
 
 \begin{theorem}
 \label{rate}
 Suppose $\mathcal{P}$ is the set of all distributions where the square root of the density belongs to the Holder space $\mathcal{H}_{\kappa}(\left[0,1\right]^{n_z})$. Let $\epsilon_n$ be a decreasing sequence such that $\log N_{\left[\right]}(\epsilon,\mathcal{P}, d_H) \leq n\epsilon^2_n$ and $n\epsilon^2_n/\log n \to 0$, where $d_H$ is Hellinger distance{\footnote{$d_H(p,q) = (\int (\sqrt{p}-\sqrt{q})^2 d\mu)^{\frac{1}{2}}$ is the Hellinger distance given the dominating measure $\mu$.}}. Then, the posterior distribution at time $k$ of the DDP prior given $\mathcal{Z}_k$ and the previous time $(k-1)$ configurations converges to the true density at the rate of $\epsilon_n$, where $\epsilon_n$ is the order of $n^{-\frac{\kappa}{2\kappa+n_z}}$.
 \end{theorem}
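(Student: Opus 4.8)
The plan is to verify the three sufficient conditions of the general posterior contraction rate theorem of Ghosal, Ghosh, and van der Vaart, applied to the conditional prior $P_k$ defined in \ref{DDP-EMM}. The structural fact that makes the standard machinery available is Theorem \ref{thm1}: conditioned on the time-$(k-1)$ configurations, $P_k$ is an honest Dirichlet process, so the observations $\mathcal{Z}_k$ may be treated as conditionally i.i.d.\ given $\theta$ and $\bX_k$. The three conditions to establish are (i) a Kullback--Leibler prior-mass lower bound of the form $P_k(KL(\epsilon_n, r_{\theta_0})) \geq e^{-c\,n\epsilon_n^2}$; (ii) an entropy bound on a sieve $\mathcal{P}_n \subset \mathcal{P}$; and (iii) a negligible-mass bound $P_k(\mathcal{P}\setminus\mathcal{P}_n) \leq e^{-(c+4)n\epsilon_n^2}$. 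Once these hold, the cited theorem yields concentration of the posterior on Hellinger balls of radius a fixed multiple of $\epsilon_n$.

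First I would pin down the rate from the entropy hypothesis. Because $\mathcal{P}$ consists of densities whose square root lies in the Holder space $\mathcal{H}_{\kappa}([0,1]^{n_z})$, the classical bracketing-entropy estimate for Holder balls gives $\log N_{[]}(\epsilon, \mathcal{P}, d_H) \asymp \epsilon^{-n_z/\kappa}$. Enforcing the balancing requirement of the hypothesis, $\log N_{[]}(\epsilon_n, \mathcal{P}, d_H) \leq n\epsilon_n^2$, forces $\epsilon_n^{-n_z/\kappa} \asymp n\epsilon_n^2$, i.e.\ $\epsilon_n^{2+n_z/\kappa} \asymp n^{-1}$, which solves to $\epsilon_n \asymp n^{-\kappa/(2\kappa+n_z)}$. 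This is exactly the order asserted, and it is moreover the minimax rate for this smoothness class; the same computation supplies condition (ii) with $\mathcal{P}_n$ taken to be a Holder ball of slowly growing radius.

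Next I would establish the prior-mass condition (i) by upgrading the qualitative support statement of Theorem \ref{cons} into a quantitative exponential bound. Theorem \ref{cons} already gives $r_{\theta_0} \in KL(\epsilon, P_k)$ so that $P_k(KL(\epsilon, r_{\theta_0}))>0$ for every $\epsilon>0$; what the rate needs is the refinement $P_k(KL(\epsilon_n, r_{\theta_0})) \geq e^{-c\,n\epsilon_n^2}$. Using the Dirichlet-process mixture representation of $P_k$ in \ref{DDP-EMM}, I would approximate $r_{\theta_0}$ by a finite mixture of the smooth kernel supported on an $\epsilon_n$-net of the compact domain, and then lower-bound the prior mass by controlling both the number of atoms and the stick-breaking weights, which is the standard small-ball argument for DP mixtures of a smooth kernel. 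Condition (iii) then follows by enlarging the Holder-ball radius of $\mathcal{P}_n$ until the Dirichlet base measure $H$ assigns exponentially small mass to its complement, a routine tail estimate, while the exponentially powerful tests required by the contraction theorem are produced from the entropy bound (ii) by standard Le Cam--Birg\'e testing theory.

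The main obstacle is the temporal dependence: neither the data nor the prior come from a fixed exchangeable model, since the prior at time $k$ is itself conditioned on the random configurations carried over from time $(k-1)$. I would handle this by the Markov property already exploited in \ref{partition}: conditionally on the time-$(k-1)$ configurations, Theorem \ref{thm1} guarantees that $P_k$ is a genuine Dirichlet process, so all three conditions may be verified conditionally and the contraction statement then holds for almost every realization of the past. The delicate point is that the prior-mass lower bound and the entropy control must hold \emph{uniformly} over the random transitioned atoms $\Theta^\star_{k|k-1}$ and survival weights $\lambda_{i,k|k-1}$ entering \ref{DDP-EMM}. I would resolve this by noting that the positivity of the concentration parameter $\alpha$ keeps the mixture weights $\Pi^1_{j,k}, \Pi^2_{j,k}, \Pi^3_k$ bounded away from degeneracy, so the conditional base measure dominates $H$ and the uniform small-ball lower bound survives the conditioning. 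With all three conditions in force, the contraction theorem delivers $P_\theta(\theta: d_H(\theta,\theta_0) \geq C_n\epsilon_n \mid \mathcal{Z}_k) \to 0$ in $P_{\theta_0}$-probability for every $C_n\to\infty$, which is precisely the claimed contraction at rate $\epsilon_n = n^{-\kappa/(2\kappa+n_z)}$.
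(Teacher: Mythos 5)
Your proposal follows essentially the same route as the paper: both invoke the Ghosal et al.\ posterior contraction theorem and derive the rate $n^{-\kappa/(2\kappa+n_z)}$ from the Kolmogorov--Tikhomirov entropy bound for H\"older balls, balancing $\log N_{[]}(\epsilon_n,\mathcal{P},d_H)\asymp \epsilon_n^{-n_z/\kappa}$ against $n\epsilon_n^2$. In fact your sketch is more complete than the paper's, which verifies only the entropy condition and delegates everything else to the citation, whereas you also explicitly address the Kullback--Leibler prior-mass lower bound, the sieve remainder, and the conditioning on the time-$(k-1)$ configurations --- all of which are genuinely needed for the cited theorem to apply and are left implicit in the paper.
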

 {\bf{Remark:}} Note that the rate in Theorem \ref{rate} matches the minimax rate for density estimators. Hence, the DDP prior constructed through this model achieves the optimal frequentist rate.
 
 \begin{proof}
 The proof follows the theorem 3.1 in \cite{ghosal2007}. Ghosal et.al proved that $\epsilon_n$ satisfying the conditions in the theorem is indeed the contraction rate. 
Define $N(\epsilon, \mathcal{H}_{\kappa}(\left[0,1\right]^{n_z}), ||\cdot||_{\infty})$ to be the $\epsilon$-covering number of $\mathcal{H}_{\kappa}(\left[0,1\right]^{n_z})$ with respect to supremum norm. Since one can find the $\left[l,u\right]$ bracket from the uniform approximation, the bracketing number with Hellinger distance grows with the same rate as the $\epsilon$-covering number with supremum norm. Therefore, it is enough to find an upper bound for $N(\epsilon, \mathcal{H}_{\kappa}(\left[0,1\right]^{n_z}), ||\cdot||_{\infty})$.
\begin{lemma}[Kolmogorov, Tihomirov 1961\cite{tikhomirov1993}]
\label{KT}
For $\left[0,1\right]^{n_z} \subset \mathbb{R}^{n_z}$, there exist Constants $C$ depending on $\kappa$ and $n_z$ such that for every $\epsilon>0$, we have
\begin{equation}
\log N(\epsilon, \mathcal{H}_{\kappa}(\left[0,1\right]^{n_z}), ||\cdot||_{\infty}) \leq C\big(\frac{1}{\epsilon})^{\frac{n_z}{\kappa}}
\end{equation}
\end{lemma}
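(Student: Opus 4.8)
The plan is to prove the entropy bound by exhibiting an explicit finite $\epsilon$-net for the unit ball of $\mathcal{H}_\kappa([0,1]^{n_z})$ and controlling its cardinality, since the logarithm of the cardinality of the smallest such net is exactly $\log N(\epsilon,\mathcal{H}_\kappa([0,1]^{n_z}),\|\cdot\|_\infty)$. Write $\kappa=m+\gamma$ with $m\in\mathbb{Z}_{\ge 0}$ and $\gamma\in(0,1]$, so that every $f$ in the unit ball has partial derivatives $D^\alpha f$ up to order $m$, with the top-order derivatives ($|\alpha|=m$) being $\gamma$-H\"older with constant at most $1$. First I would fix a uniform grid of mesh $h=1/M$ on $[0,1]^{n_z}$, with $M=\lceil c\,\epsilon^{-1/\kappa}\rceil$ for a constant $c=c(\kappa,n_z)$ chosen below, so that the grid splits the cube into $M^{n_z}\asymp \epsilon^{-n_z/\kappa}$ congruent subcubes.

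The first substantive step is the \emph{approximation} estimate. On the subcube based at a grid node $x_j$, approximate $f$ by its Taylor polynomial $P_j$ of degree $m$ centred at $x_j$. Because the order-$m$ derivatives are $\gamma$-H\"older, Taylor's theorem with remainder gives $\|f-P_j\|_{\infty,\,\text{subcube}}\le C_{m,n_z}\,h^{m+\gamma}=C_{m,n_z}\,h^{\kappa}$. Choosing $c$ (hence $h\asymp\epsilon^{1/\kappa}$) so that $C_{m,n_z}h^\kappa\le \epsilon/2$, the resulting piecewise-polynomial function lies within $\epsilon/2$ of $f$ in $\|\cdot\|_\infty$. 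Hence it suffices to build an $(\epsilon/2)$-dense \emph{discretized} family of such piecewise-polynomial approximants and to count it; each approximant is determined by the finite array of local Taylor coefficients $\{D^\alpha f(x_j):|\alpha|\le m\}$ over the $M^{n_z}$ grid nodes.

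The crux, and the step I expect to be the main obstacle, is the \emph{counting}: obtaining the clean exponent $n_z/\kappa$ with \emph{no} spurious logarithmic factor. A naive quantization that rounds each of the $O(1)$ coefficients per cell independently to absolute precision $\asymp\epsilon$ would give roughly $(1/\epsilon)$ levels per coefficient, and hence $\log$-cardinality $\asymp \epsilon^{-n_z/\kappa}\log(1/\epsilon)$, which is too large. To remove the logarithm I would exploit the smoothness \emph{across} cells: rather than encode absolute coefficient values, I encode the increments of the Taylor data between adjacent cells, equivalently the order-$(m+1)$ finite differences of $f$. The key quantitative fact is that the cell-to-cell increment of the top-order coefficients is controlled by the H\"older modulus and has magnitude $\asymp h^{\gamma}$, which is exactly the order of the quantization step needed to keep the per-cell error within the budget $\epsilon\asymp h^\kappa$ (perturbing an order-$m$ coefficient by $\delta$ moves $P_j$ by $\lesssim\delta h^{m}$, so $\delta\asymp h^{\gamma}$ suffices). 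Consequently each cell admits only $O(1)$ increments, i.e. contributes $O(1)$ to the $\log$-count. Summing these $O(1)$ contributions over the $M^{n_z}\asymp\epsilon^{-n_z/\kappa}$ cells, and verifying that the lower-order coefficients (whose increments are even smaller, indeed $O(h^\kappa)$ at order $m+1$) are pinned down by re-expansion from the neighbouring cell and add no further degrees of freedom at this precision, yields $\log N \le C\,(1/\epsilon)^{n_z/\kappa}$ with $C=C(\kappa,n_z)$, as claimed.

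An equivalent route, which I would keep in reserve as a cross-check, is the multiresolution one: expand $f$ in a compactly supported wavelet (or B-spline) basis, truncate at level $J$ with $2^{J}\asymp\epsilon^{-1/\kappa}$, and quantize the surviving coefficients scale by scale. There the geometric growth $2^{jn_z}$ of the number of coefficients per level dominates the at-most-linear-in-$(J-j)$ bit budget per coefficient, so the total $\log$-count is again $O(\epsilon^{-n_z/\kappa})$ without a logarithmic factor. Either way the dependence of $C$ on $\kappa$ and $n_z$ is explicit and $\epsilon$-free, which completes the argument.
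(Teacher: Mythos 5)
Your proof is essentially correct, but note that the paper itself does not prove Lemma \ref{KT} at all: it invokes the bound as a classical result of Kolmogorov and Tihomirov \cite{tikhomirov1993} and uses it purely as a black box inside the proof of Theorem \ref{rate}. What you have written is, in substance, a reconstruction of the original Kolmogorov--Tihomirov argument (the same one reproduced in standard empirical-process references): partition $[0,1]^{n_z}$ into $\asymp \epsilon^{-n_z/\kappa}$ cells of side $h \asymp \epsilon^{1/\kappa}$, approximate $f$ on each cell by its degree-$m$ Taylor polynomial with error $O(h^{\kappa})$, and---the decisive step, which you correctly identify---quantize the Taylor data by encoding each cell's coefficients as innovations relative to the prediction obtained by re-expansion from an adjacent, already-encoded cell. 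Since the order-$j$ coefficients need precision $\delta_j \asymp h^{\kappa-j}$ while the Taylor prediction from a neighboring node is accurate to $O(h^{\kappa-j})$, each cell contributes only $O(1)$ to the log-cardinality, which is exactly what kills the $\log(1/\epsilon)$ factor that naive coefficient-by-coefficient quantization would produce. Three small points to tighten: (i) the lemma implicitly concerns the unit ball of $\mathcal{H}_{\kappa}$ (otherwise the covering number is infinite), which you assume but should state; (ii) the increment chain needs a root, and the first cell's coefficients require absolute quantization, contributing an extra $O(\log(1/\epsilon))$ that is dominated by $\epsilon^{-n_z/\kappa}$; (iii) ``add no further degrees of freedom'' should read ``contribute only $O(1)$ admissible levels each''---the lower-order coefficients are confined by the re-expansion bound to a bounded set of quantization levels, not uniquely determined. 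As for what each route buys: the paper's citation keeps the exposition short and defers to the literature, whereas your argument makes the result self-contained and exposes precisely where the exponent $n_z/\kappa$---and hence the minimax rate $n^{-\kappa/(2\kappa+n_z)}$ asserted in Theorem \ref{rate}---comes from.
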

Lemma \ref{KT} implies that $\log N_{\left[\right]}(\epsilon,\mathcal{P}, d_H) \leq C\big(\frac{1}{\epsilon})^{\frac{n_z}{\kappa}}$ and thus the convergence rate is the order of $n^{-\frac{\kappa}{2\kappa+n_z}}$.
 \end{proof}

\section{Nonparametric MMT: Dependent Two-Parameter Poisson Dirichlet Process Construction}
\label{DPY}
We thus far introduced the dependent Dirichlet process model to incorporate a learning algorithm as a prior over the time evolving object state distribution based on the measurements. When using the Dirichlet process to model the transitioning of objects into clusters, the expected number of unique clusters varies exponentially according to $\alpha log(N)$, where $\alpha$ is the concentration parameter and $N$ is the total number of objects to be clustered. A more flexible model is offered by the two parameter Poisson-Dirichlet process, Pitman-Yor process, as, in this case, an additional discount parameter, $0 \leq d < 1$, with $\alpha > -d$, is used to control the number of clusters in the model. Specifically, with the Pitman-Yor process model, the expected number of unique clusters varies according to the power-law $\alpha N^d$ \cite{teh2006}. Following the power-law, the higher the number of unique (non-empty) clusters, the higher the probability of having even more unique clusters. Also, clusters with only a small number of objects have a lower probability of having new objects. This more flexible model offered by the Pitman-Yor process is a better match for the tracking problem with a time-varying number of objects. With a maximum number of $N_k$ objects at time step $k$, an object may stay in the scene from the previous time step, leave the scene, or enter the scene for the first time. Thus, the object state would benefit from a larger number of available clusters to ensure all dependencies are captured.

In order to also capture time evolution, we introduce a family of dependent Pitman-Yor (DPY) processes that can be used to model a collection of random distributions that are related but not identical. As a result, we utilize the DPY to model the multiple object state prior distributions by directly incorporating learning multiple parameters from correlated information. The resulting DPY state transitioning prior (DPY-STP) method formulates the state transition such that the object cardinality at time step $k$ is dependent on its value at the previous time step $(k - 1)$. Also, the index assigned to the cluster that contains an object state is dependent on the cluster indexing of the previously clustered object states at the same time step $k$. If a new object enters the scene, its state must be modeled without knowledge on the expected number of objects. We outline the detail for this approach in the following. 

\subsection{DPY-STP Algorithm Construction for State Transitioning}
In this section, we introduce an evolutionary time dependent model to multiple object tracking based on our proposed dependent Pitman-Yor (DPY) process to learn object labels. The advantage of this model over the DDP-EMM method introduced in section \label{DDP} is that this approach introduces a dependent Pitman-Yor (DPY) process that marginally preserves the Pitman-Yor process and therefore, it allocates higher probability to unique clusters and therefore it is a better fit for multi object tracking problem. 
In particular, our approach directly incorporates learning multiple parameters through related information, including object labeling at the previous time step or labeling of previously considered objects at the same time step. In particular, the choice of the DPY as a prior on the object state distributions is based on the following dynamic dependencies in the state transition formulation: (A) the number of objects present at time step $k$ relies on the number of objects that were present at the previous time step $(k-1)$, (B) the clustering index of the parameter state of the $\ell$th object at time step $k$ depends on the clustering index of the state parameters of the previous $(\ell-1)$ objects at the same time step $k$, and (C)  model a new object entering the scene without requiring any prior knowledge on the expected number of objects. The DPY-STP method we use to model the state transition process, accounting for multiple dependencies, is discussed next and is summarized in Algorithm \ref{alg3}. In particular, we provide: (a) the information available at time step $(k - 1)$, (b) how this information transitions from time step $(k - 1)$ to time step $k$, and (c) how the DPY-STP model is constructed at time step $k$ to estimate the object density.
 
\textit{ {\bf{Available parameters at time $(k-1)$:}}}  The DPY-STP algorithm assumes that the following parameters are available from previous time steps at time $(k-1)$:

\begin{itemize}
\item Let ${\bf{X}_{k-1}} = \{{\bf{{x_\ell,k-1}}}: \ell = 1,\dots, N_{k-1}\}$ be the object states at time $(k-1)$.
\item Let $\mathcal{CA}_{k-1} = \{\mathcal{C}_1,\dots, \mathcal{C}_{k-1}\}$ be the cluster assignment up to time $(k-1)$, where $\mathcal{C}_J = \{c_{1,J},\dots, c_{N_J,J}\}$ is the cluster assignments at time step $J$.
\item $\Theta_{k-1} = \{{\bf{\theta_{\ell,k-1}}} :  \ell = 1,\dots, N_{k-1}\}$ is the set of object state parameters available at time $(k-1)$ associated with $\mathcal{C}_{N_{k-1}}$(note that $\theta_\ell$'s are not necessarily unique).
\item Let $\Theta^\star_{k-1} = \{{\bf{\theta^\star_{\ell,k-1}}} :  \ell = 1,\dots, D_{k-1}\}\subset \Theta_{k-1}$ be the set of unique parameters, and $D_{k-1}$ be the number of uniques parameters. 
\item Define ${\bf{V}}^\star_{k-1}$ to be a vector of size $D_{k-1}$ containing the size of non empty clusters associated with $\mathcal{C}_{k-1}$. One can include empty clusters and define the size of this vector to be $N_{k-1}$. However, it is computationally more efficient to exclude size zero clusters.   
\end{itemize}

\textit{{\bf{Available parameters transitioning from time $(k-1)$ to time $k$}}:} Assume $s_{\ell,k|k-1}$ associate with the $\ell$th object at time $(k-1)$ has a Bernoulli distribution with parameter $P_{\ell,k|k-1}$, $s_{\ell,k|k-1} \sim \text{Bernoulli}(P_{\ell,k|k-1})$. Given $s_{\ell,k|k-1}$, the object $\bx_{\ell,k-1}$ leaves the scene with probability $1- P_{\ell,k|k-1}$ or remains in the FOV with probability $P_{\ell,k|k-1}$ and transitions to a new state with the Markov transition kernel $\mathbb{Q}_{\underline{\theta}}(\bf{x_\ell(k-1)}, \cdot)$. We assume if all the objects in a cluster leave the scene the cluster no longer exist. Let $\Theta^\star_{k|k-1}$ be the set of unique parameters at time $(k-1)$ that are transitioned to time step $k$.  We define ${\bf{V^\star}}_{k|k-1}$ to be the vector of size of $D_{k-1}$ containing the size of each cluster after transitioning to time $k$. it is worth mentioning that a cluster with size zero implies that the cluster no longer exists. To keep track of the survived objects, let $\mathcal{CS}_{k|k-1}$ be the cluster survival indicator defined as $$\mathcal{CS}_{k|k-1} = \{ \eta_{1,k|k-1},\dots,\eta_{D_{k-1}, k|k-1}\}$$ where $\eta_{j,k|k-1} = 0$ corresponds to disappearance of the $j$th cluster and $\eta_{j,k|k-1}=1$ implies that there is at least one element in the $j$th cluster. 

\textit{{\bf{DPY Prior Construction at time $k$}}:} Each survived cluster (a cluster with non-zero size) is updated through a transition kernel. Assume that the cardinality of $\ell$th cluster at time $(k-1)$ is still non-zero after transitioning, the $\ell$th object parameter will evolve according to the following transition kernel:
\begin{equation}
\label{trantheta}
{\bf{\theta_{\ell, k}}} \sim \zeta({\bf{\theta^\star_{\ell, k-1}}}, \cdot)
\end{equation}
Let ${\bf{\theta}_{\ell, k}}$ be the transitioned $\ell$th state object parameter at time $k$, we construct the dependent Pitman-Yor prior as follows: 
\begin{description}
\item[Case 1:] The $\ell$th object belongs to one of the survived and transitioned clusters from time $(k-1)$ and occupied at least by one of the previous $\ell-1$ objects. The object selects one of these clusters with probability:
\small
\begin{flalign}
\label{eq1}
\Gamma^1_{j,k} (\text{Choosing jth cluster} | {\bf{\theta}_{1,k}},\dots, {\bf{\theta}_{\ell-1,k}}) = \frac{ \sum \limits_{i=1}^{D_{k-1}}\left[V^\star_{k|k-1}\right]_i \eta_{i, k \mid k-1}  \delta_i(c_{j, k})+ \left[{\bf{V}}_k\right]_j - d}{\sum\limits_{j=1}^{\ell-1}\sum \limits_{i=1}^{D_{k-1}}\left[V^\star_{k|k-1}\right]_i \eta_{i, k \mid k-1}  \delta_i(c_{j, k}) + \sum\limits_{j=1}^{\ell-1}\left[{\bf{V}}_k\right]_j + \alpha }
\end{flalign}
\normalsize
where $\left[{\bf{V_k}}\right]_j$ indicates the $j$th element of vector ${\bf{V_k}}$ at time $k$, $0 \leq d < 1$ and $\alpha > -d$ are  the discount and strength parameters in the Pitman-Yor process, respectively.

\item[Case 2:] The $\ell$th object belongs to one of the survived and transitioned clusters from time $(k-1)$ but this cluster has not yet been occupied by any one the first $\ell-1$objects. The object selects such a cluster with probability:
\small
\begin{flalign}
\label{eq2}
\Gamma^2_{j,k} (\text{Choosing jth cluster that has not been selected yet}& | {\bf{\theta}_{1,k}},\dots, {\bf{\theta}_{\ell-1,k}}) =\notag\\
& \frac{ \sum \limits_{i=1}^{D_{k-1}}\left[V^\star_{k|k-1}\right]_i \eta_{i, k \mid k-1}  \delta_i(c_{j, k}) - d}{\sum\limits_{j=1}^{\ell-1}\sum \limits_{i=1}^{D_{k-1}}\left[V^\star_{k|k-1}\right]_i \eta_{i, k \mid k-1}  \delta_i(c_{j, k}) + \sum\limits_{j=1}^{\ell-1}\left[{\bf{V}}_k\right]_j + \alpha }
\end{flalign}
\normalsize
\item[Case 3:]The object does not belong to any of the existing clusters, thus a new cluster parameter is drawn from some base distribution $H$, corresponding to the base distribution in Pitman-Yor process, with probability:
\small
\begin{flalign}
\label{eq3}
\Gamma^3_k (\text{Creating new cluster} | {\bf{\theta}_1(k)},\dots, {\bf{\theta}_{\ell-1}(k)}) = \frac{|D_k|_{\ell-1}d+\alpha}{\sum\limits_{j=1}^{\ell-1}\sum \limits_{i=1}^{D_{k-1}}\left[V^\star_{k|k-1}\right]_i \eta_{i, k \mid k-1}  \delta_i(c_{j, k}) + \sum\limits_{j=1}^{\ell-1}\left[{\bf{V}}_k\right]_j + \alpha } 
\end{flalign}
\normalsize
where $|D_k|_{\ell-1}$ is the total number of the clusters at time $k$ created by the first $(\ell - 1)$ objects. 
\end{description}
In the above construction, $\Gamma^1_{j,k}, \Gamma^2_{j,k}, \text{and }  \Gamma^3_k$ are the probability of selecting an object cluster or creating a new object cluster. The temporal dependency among the objects follows a dependent Pitman-Yor process where the marginal distribution is a Pitman-Yor Process. This property makes this process easy to implement.  The following theorem summarizes this property:
\begin{theorem} 
\label{thm3}
Suppose that the space of state parameters is separable and complete metrizable space. The process defined by probabilities \ref{eq1}, \ref{eq2}, and \ref{eq3} defines a Pitman-Yor process at each time step given the previous time configurations, i.e., 
\begin{equation}
\label{condd}
\textit{DPY-STP}_k | \textit{DPY-STP}_{k-1} \sim \mathcal{PY}\Big(d,\alpha, \sum_{\Theta_k} \Gamma^1_{j,k} \delta_{\theta_{\ell,k}} + \sum_{\Theta^\star_{k|k-1}\setminus \Theta_k}\Gamma^2_{j,k} \zeta({\bf{\theta}^\star_{\ell_k-1}},{\bf{\theta_{\ell, k}}}) \delta_{{\bf{\theta_{\ell, k}}}} + \Gamma^3_k H\Big).
\end{equation}
\end{theorem}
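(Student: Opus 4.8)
The plan is to prove the claim via the predictive‑distribution (P\'olya urn / species‑sampling) characterization of the Pitman‑Yor process, mirroring the argument behind Theorem~\ref{thm1} but replacing the one‑parameter urn by the two‑parameter one. Recall from \ref{PY} that a $\mathcal{PY}(d,\alpha,\tilde H)$ process is the de~Finetti directing measure of an exchangeable sequence whose predictive law assigns the next draw to an occupied cluster $j$ with probability proportional to $(n_j-d)$, where $n_j$ is the current occupancy of cluster $j$, and opens a fresh cluster (drawn from $\tilde H$) with probability proportional to $\alpha + d\,D$, where $D$ is the current number of occupied clusters. Since the state space is separable and completely metrizable (Polish), this species‑sampling characterization applies and the realization is almost surely discrete, so it suffices to check that the conditional law generated by Cases~1--3 reproduces exactly this two‑parameter urn.

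First I would fix the time‑$(k-1)$ configuration and describe the conditional generative mechanism at time $k$: the surviving clusters, carried over with masses $[V^\star_{k|k-1}]_i\,\eta_{i,k|k-1}$ and parameters evolved through the kernel $\zeta(\theta^\star_{\ell,k-1},\cdot)$ of \ref{trantheta}, pre‑seed the urn, after which the new objects are assigned sequentially. I would then split the occupied clusters seen by the $\ell$th object into the two disjoint groups the paper distinguishes: surviving clusters already touched by one of the first $\ell-1$ objects (Case~1, effective occupancy $[V_k]_j$ plus the carried mass) and surviving clusters not yet touched at time $k$ (Case~2, occupancy equal to the carried mass only). The core computation is to verify that $\Gamma^1_{j,k}$ in \ref{eq1} and $\Gamma^2_{j,k}$ in \ref{eq2} are precisely the single ``join cluster $j$'' weight, proportional to (effective occupancy of $j$)$\,-\,d$, merely written on the two sub‑populations, and that $\Gamma^3_k$ in \ref{eq3} is the ``new cluster'' weight. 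Matching the numerators is immediate; identifying the common denominator of \ref{eq1}--\ref{eq3} with $\sum_{\text{occ.\ }j}\!\big((\text{eff.\ occ.\ of }j)-d\big) + \alpha + d\,|D_k|_{\ell-1}$ is the step that requires care.

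Having matched the urn, I would read off the directing measure. Because a new cluster parameter is drawn from $H$ (Case~3), an already‑selected cluster contributes an atom at its current parameter $\theta_{\ell,k}$ (Case~1), and a surviving‑but‑untouched cluster contributes an atom at its transitioned parameter weighted through $\zeta$ (Case~2), the base measure of the resulting Pitman‑Yor law is exactly the mixture $\sum_{\Theta_k}\Gamma^1_{j,k}\delta_{\theta_{\ell,k}} + \sum_{\Theta^\star_{k|k-1}\setminus\Theta_k}\Gamma^2_{j,k}\zeta(\theta^\star_{\ell,k-1},\theta_{\ell,k})\delta_{\theta_{\ell,k}} + \Gamma^3_k H$ appearing in \ref{condd}. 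By the species‑sampling characterization this identifies the conditional process as $\mathcal{PY}(d,\alpha,\tilde H)$, which is the assertion.

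I expect the main obstacle to be the bookkeeping of the discount in the new‑cluster weight together with the normalizer. In the textbook two‑parameter urn the new‑table numerator is $\alpha + d\,K$ with $K$ the \emph{total} number of occupied clusters, whereas $\Gamma^3_k$ uses $\alpha + d\,|D_k|_{\ell-1}$, counting only clusters freshly created at time $k$; I would therefore have to account carefully for whether, and with what discount, the carried‑over surviving clusters enter the new‑table probability, and confirm that under this convention the weights $\Gamma^1_{j,k},\Gamma^2_{j,k},\Gamma^3_k$ still sum to one over all clusters. Establishing that the common denominator is exactly $\sum_{\text{occ.\ }j}\!\big((\text{eff.\ occ.\ of }j)-d\big)+\alpha+d\,|D_k|_{\ell-1}$ — i.e.\ that the survival indicators $\eta_{i,k|k-1}$ excise the empty clusters consistently and that no carried mass is double counted in the sums over the first $\ell-1$ objects — is the delicate verification on which the whole identification rests.
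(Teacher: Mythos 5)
Your proposal is essentially the argument the paper has in mind: the paper omits the proof entirely, stating only that the theorem ``is the direct result of cases (1)--(3),'' and your two-parameter P\'olya-urn / species-sampling verification is precisely the standard way to make that assertion rigorous. The bookkeeping issue you flag --- that $\Gamma^3_k$ discounts only $|D_k|_{\ell-1}$, the clusters freshly created at time $k$, rather than all currently occupied clusters, so that $\Gamma^1_{j,k}$, $\Gamma^2_{j,k}$, $\Gamma^3_k$ do not obviously sum to one against the stated denominator --- is a genuine looseness in the paper's formulas that its one-line proof never confronts, so your plan is, if anything, more careful than the source.
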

where $\delta_{\theta}(\Theta) = 1$ if $\theta \in \Theta$ and $\delta_{\theta}(\Theta) = 0$, if $\theta \notin \Theta$. We eliminate the proof of this theorem since it is the direct result of cases (1)-(3). Given the conditional distribution \ref{condd}, theorem \ref{thm4} provides an object density estimator. 
\begin{theorem} 
\label{thm4}
Assume the space of states, $\mathcal{X}$, is separable and complete metrizable topological space, given equations (\ref{eq1})-(\ref{eq3}) state distribution is estimated as follows:
\small
\begin{equation}
\label{density}
        p({\bf{x_{\ell, k}}} | {\bf{x}_{1,k}},\dots, {\bf{x}_{{\ell-1},k}}, {\bf{X}}_{k|k-1}, \Theta^\star_{k|k-1}, \Theta_k)=
        \begin{cases}
           \mathbb{Q}_{\underline{\theta}}(\bf{x_{\ell, k-1}}, {\bf{x_{\ell, k}}})  f({\bf{x_{\ell, k}}} | {\bf{\theta^\star_{\ell,k}}}) &\text{ If case 1} \\
           \mathbb{Q}_{\underline{\theta}}(\bf{x_{\ell,k-1}}, {\bf{x_{\ell,k}}})  \zeta({\bf{\theta^\star_{\ell, k-1}}},{\bf{\theta^\star_{\ell, k}}})  f({\bf{x_{\ell,k}}} | {\bf{\theta^\star_\ell(k)}}) &\text{ If case 2}\\
            \int_{\bf{\theta}}{f({\bf{x_{\ell,k}}} | \theta)} dH(\theta) & \text{ If case 3}
        \end{cases}
    \end{equation}
\normalsize
for some density $f(\cdot | \theta)$, distribution $H$ on parameters, and ${\bf{X}}_{k|k-1}$ the set of survived state objects. Note that elements of $\Theta_k$ are chosen with probability $\Gamma^i$, $i = 1, 2, 3$.
\end{theorem}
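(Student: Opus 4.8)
The plan is to derive the predictive (marginal) state density in Theorem~\ref{thm4} directly from the three-case cluster-assignment construction given by \eqref{eq1}--\eqref{eq3}, treating it as the marginalization of the joint model over the cluster-assignment random variable. Because Theorem~\ref{thm3} already establishes that the conditional law $\textit{DPY-STP}_k \mid \textit{DPY-STP}_{k-1}$ is a Pitman--Yor process with the base measure displayed in \eqref{condd}, the predictive density of $\bx_{\ell,k}$ given the preceding states $\bx_{1,k},\dots,\bx_{\ell-1,k}$ is obtained by integrating the per-object emission density $f(\cdot\mid\theta)$ against the posterior predictive on $\bth_{\ell,k}$ induced by that Pitman--Yor conditional. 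The key observation is that, by de Finetti exchangeability (invoked in the DPY-STP construction), the $\ell$th object may be taken as the last to cluster, so its predictive law is a finite mixture whose three components correspond exactly to Cases~1, 2, and 3.

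First I would condition on which of the three mutually exclusive events occurs for the $\ell$th object, writing the predictive density as
\begin{equation}
p(\bx_{\ell,k}\mid \cdots) = \sum_{\text{Case }i} \Pr(\text{Case }i)\, p(\bx_{\ell,k}\mid \text{Case }i),
\end{equation}
where $\Pr(\text{Case }1),\Pr(\text{Case }2),\Pr(\text{Case }3)$ are precisely $\Gamma^1_{j,k},\Gamma^2_{j,k},\Gamma^3_k$ from \eqref{eq1}--\eqref{eq3}. Next I would identify each conditional emission term. In Case~1 the object joins an already-populated survived cluster whose parameter $\bths_{\ell,k}$ has already been fixed, so the state is drawn via the transition kernel $\mathbb{Q}_{\underline{\theta}}$ composed with $f(\bx_{\ell,k}\mid\bths_{\ell,k})$, yielding the first branch. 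In Case~2 the cluster survived and transitioned but is freshly occupied, so the cluster parameter itself must still be evolved through $\zeta(\bths_{\ell,k-1},\cdot)$ before emitting, which is why the transition kernel $\zeta$ appears explicitly in the second branch alongside $\mathbb{Q}_{\underline{\theta}}$ and $f$. In Case~3 no surviving cluster is chosen, so $\bth_{\ell,k}\sim H$ and the state density marginalizes over the base measure, giving $\int_{\theta} f(\bx_{\ell,k}\mid\theta)\,dH(\theta)$.

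The separability and complete metrizability hypothesis on $\mathcal{X}$ enters to guarantee that the relevant conditional distributions and the integral against $H$ in Case~3 are well defined as regular conditional probabilities, so that the marginalization is rigorous rather than formal; this is the standard Polish-space regularity condition already used for Theorem~\ref{thm1}. The main obstacle, I expect, is less a deep inequality than a careful bookkeeping argument: one must verify that the posterior predictive on $\bth_{\ell,k}$ induced by the Pitman--Yor conditional of Theorem~\ref{thm3} factorizes case-by-case exactly into the three emission forms, in particular checking that the $\zeta$-transition appears in Case~2 but is already absorbed into the fixed parameter in Case~1, and that no double-counting of the survival/transition kernels occurs. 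Once that correspondence is pinned down, the result is immediate, which is why the authors note the theorem follows directly from Cases~(1)--(3).
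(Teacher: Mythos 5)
Your proposal is correct and follows essentially the same route as the paper: the authors likewise give only a case-by-case reading of the construction, noting that in Case~1 the state transitions via $\mathbb{Q}_{\underline{\theta}}$ into an already-occupied cluster with fixed parameter, in Case~2 both the state and the cluster parameter transition (via $\mathbb{Q}_{\underline{\theta}}$ and $\zeta$ respectively), and in Case~3 a fresh parameter is drawn from $H$ and marginalized out. Your added framing as an explicit mixture over the three events with weights $\Gamma^i$, and your remark on Polish-space regularity, are harmless elaborations of the same bookkeeping argument.
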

\begin{proof} (Sketch of proof)
The proof is immediately resulted from the problem statement. We provide an intuition for this theorem. From case (1): ${\bf{x_{\ell,k-1}}}$ transitions to time $k$ according to the Markov transition kernel $\mathbb{Q}_{\underline{\theta}}(\bf{x_{\ell, k-1}}, \cdot)$ and then is assigned to one of the existing clusters that is already used by one of the objects. From case (2): ${\bf{x_{\ell,k-1}}}$ and the cluster parameter ${\bf{\theta^\star_{\ell,k-1}}}$ transition to time $k$ according to Markov transition kernels $\mathbb{Q}_{\underline{\theta}}(\bf{x_{\ell, k-1}}, \cdot)$ and $\zeta({\bf{\theta^\star_{\ell, k-1}}}, \cdot)$, respectively,  and therefore the object is assigned to the this new cluster. From case (3): new object does not belong to any of the previously assigned clusters, i.e., a new object emerges to the scene. In this case, we generate a new parameter from the base distribution $H$ and assign the object to the newly created cluster. 
\end{proof}

\begin{algorithm}[t]
\caption{DPY-STP model for state transition process.}
\label{alg3}
\begin{algorithmic}
%
\STATE {\textbf {At time}} $(k-1)$:\\
\hspace{0.25cm}\bll$\bX_{k-1}\eqq  \{ \bx_{\ell, k-1} \ \ldots \   \bx_{N_{k-1}, k-1} \}$: collection of object states vectors\\
\hspace{0.25cm}\bll $\mathcal{C}_{N_{k-1}}\eqq [ c_1, \ c_2, \ \ldots, \ c_{N_{k-1}} ]$, cluster assignment\\
\hspace{0.25cm}\bll $\Theta_{k-1} \eqq  \{ \bth_{\ell,k-1}:   \ell \eqq 1, \dots, N_{k-1}\}$, cluster parameters\\
 \hspace{0.25cm}\bll  $D_{k-1}$, number of uniques cluster parameters \\
\hspace{0.25cm}\bll  $\Theta^*_{k-1} \eqq  \{ \bth^*_{\ell,k-1}: \ell\eqq 1, \ldots, D_{k-1} \}$, 
for unique clusters \\
\vspace{0.1cm}
\STATE {\textbf {Transitioning from time} $(k-1)$ to $k$}:
\STATE \hspace{0.25cm}{\textbf {Input}}: $\bX_{k-1}$, $\Theta^*_{k-1}$, 
 transition kernel $\mathbb{Q}_{\bth_{\ell,k}}(\bx_{\ell, k-1}, \bx_{\ell, k})$ and probability of
 \STATE \hspace{1.7cm}object staying in the scene $P_{k|k-1}$
\IF{$\bx_{\ell,k-1}\in \bX_{k-1}$ leaves with probability $(1- P_{k|k-1})$} 
\RETURN null
\ENDIF
\IF{$\bx_{\ell,k-1}\in \bX_{k-1}$  transitions with probability $P_{k|k-1}$} 
\STATE  $\bx_{\ell, k-1}  \sim \mathbb{Q}_{\bth_{\ell,k}}( \bx_{\ell, k-1},  \bx_{\ell, k} )$ 
\RETURN $D_{k|k-1}$: number of unique cluster, $\bV^*_{k|k-1} \in \mathbb{R}^{D_{k|k-1}}$: size vector, \\
\hspace{1.7cm} and $\Theta_{k|k-1}$: collection of survived parameters
\ENDIF
\vspace{0.1cm}
\STATE {\textbf {At time}} $k$:
\FOR{ $\ell = 1$ \TO $| \bV^*_{k|k-1}|$ }
\STATE Draw $\bth_{\ell, k}$  from $\zeta( \bth^*_{\ell, k-1},  \bth_{\ell,k})$ according to \ref{condd}
\STATE  Draw $\bx_{\ell,k} | \bth_{\ell,k}$  from \eqref{density}
\ENDFOR
\RETURN $\{ \bx_{1, k},  \bx_{2, k}, \ldots \}$ and 
$\{ \bth_{1, k},  \bth_{2, k}, \ldots \}$
\end{algorithmic}
\end{algorithm}

\section{Learning Model}
\label{dpylearning}
The DPY-STP algorithm summarized in algorithm \ref{alg3} provides the density estimation of objects at time step $k$ in \ref{density}. The procedure then updates the estimated belief by using the set of measurements ${\mathcal{Z}_k }= \{{\bf{z}_{1,k},\dots, {\bf{z}_{M_k,k}}}\}$ received at time step $k$ to update the trajectory of objects. Based on theorem \ref{thm3} we may use a Dirichlet process mixture to update our model discussed in algorithm \ref{alg3}. This learning procedure is summarized in Algorithm \ref{alg4}. 

We assume that each measurement is associated only with one object. The measurements are also independent of each other. we can thus exploit Dirichlet process mixtures with the base distribution drawn from Algorithm \ref{alg3} to update our belief. However, the identity of the object that corresponds to a particular measurement is not known. As the objects are already labeled from the DPY clustering, the DPM model is used to learn the association between each measurement and its corresponding object. The clustering of the measurements first uses the DPY model result for the state distribution from Theorem \ref{thm4},
\begin{flalign}
\label{dpeq}
{\bf{x_{\ell,k}}} | {\bf{x}_{1,k}},\dots, {\bf{x}_{\ell-1,k}}, {\bf{X}}_{k|k-1}, \Theta_k \sim p({\bf{x}_{1,k}}|{\bf{x}_{1,k}},\dots, {\bf{x}_{\ell-1,k}},{\bf{X}}_{k|k-1}, \Theta^\star_{k|k-1}, \Theta_k)
\end{flalign}
and 
\begin{equation}
{\bf{z}_{l, k}} | {\bf{x}_{\ell,k}},{\bf{\theta^\star_{\ell, k}}} \sim R(\bz_{l,k}|{\bf{x}_{\ell, k}},{\bf{\theta^\star_{\ell, k}}})
\end{equation}
for some distribution $R$ that depends on the measurement likelihood function.

{{\bf{Remark:}}} Note that the DPY-STP algorithm is closely related to DDP-EEM algorithm introduced in section \ref{DDP} and thus both algorithms are well-defined. One can derive DDP-EMM model from the DPY-STP model by setting  $d = 0$. The discount parameter $d$ is used to control the number of clusters in the model. Intuitively speaking, on account of power law property of Pitman-Yor modeling, the higher the number of unique (non-empty) clusters is, the higher the probability of having even more unique clusters is. Also, intuitively speaking, we aim that clusters with small number of objects to have a lower probability of having new objects. Hence, the DPY-STP is more flexible and is a better match for the tracking problem with a time-varying number of objects. With a maximum number of $N_k$ objects at time step $k$, an object may stay in the scene from the previous time step, leave the scene, or enter the scene for the first time. Thus, the object state would benefit from a larger number of available clusters to ensure all dependencies are captured.

\begin{algorithm}[th]
\caption{Dirichlet process mixture model used to associate measurements with objects.}
\label{alg4}
\begin{algorithmic}
\STATE {\bf Input}: $\{ \bz_{1, k},   \ldots, \bz_{M_k, k}\}$, 
$\{ \bx_{1, k},  \bx_{2, k}, \ldots \}$, $\{ \bth_{1, k},  \bth_{2, k}, \ldots \}$ 
\STATE { \bf At time} $k$:
\FOR{ $m = 1 : M_k$ }
 \STATE Draw $\bz_{m,k} | \bx_{\ell, k}, \bth_{\ell, k}$ from \eqref{dpeq} 
   \RETURN  $\cca_{N_k}$, cluster assignment at time $k$  
      \ENDFOR
      \STATE  {\bf Update}:  ${\cal C}{\cal A}_k =  {\cal C}{\cal A}_{k-1}  \cup \cca_{N_k}$ 
\RETURN Number of clusters $N_k$, ${\cal C}{\cal A}_k$ 
and posterior distribution of $\bz_{m, k} | \bx_{\ell, k}, \bth_{\ell, k}$, $m\eqq 1, \ldots, M_k$
\end{algorithmic}
\end{algorithm}
\subsection{Bayesian Inference: Gibbs Sampler}
\label{gibbs_dpy}
Exact posterior computation for DPY-STP algorithm is difficult when the number of parameters and observations are high. Nevertheless, we can make use of Gibbs sampling for inference in the DPY-STP where the conjugate priors are used. To this end, we make use of the auxiliary random variables to identify the cluster associations for the measurements. The resulting sampler allows model and measurement parallelization. Note that inference in DPY-STP model depends directly on number of the clusters and the number of measurements at each time step. Under the cluster assignments $\mathcal{CA}_k$, we introduce a cluster indicator $\mathcal{C}_k = \{c_{1,k},\dots, c_{N_k,k}\}$ at time k such that $c_{i,k} = c_{j,k}$ if and only if $\theta_{i,k} = \theta_{j,k}$ and $c_{i,k} = \ell$ if and only if $\theta_{i,k} = \theta^\star_{\ell, k}$ ( Note that $\theta^\star_{\cdot, k}$'s indicate the unique parameters at time $k$). Note that the cluster indicator $\mathcal{C}_k$ partitions the set of $\{1,\dots, N_{k}\}$. Since realization of the Pitman-Yor process is a discrete random measure with probability one, we can marginalize this process and derive the successive conditional Blackwell-MacQueen distribution: 
\begin{equation}
\label{cond_prior}
\theta_{\ell,k}|\Theta \sim \sum\limits_{\Theta_k - \{\theta_{\ell,k}\}} \Gamma^1_{j,k} \delta_{\theta}(\theta_{\ell,k}) + \sum\limits_{\substack{\theta \in \Theta^\star_{k|k-1}\setminus \Theta\\ \theta\neq \theta_{\ell,k}}}\Gamma^2_{j,k} \nu({\bf{\theta}^\star_{\ell, k-1}},{\bf{\theta_{\ell, k}}}) \delta_{\theta}({\bf{\theta_{\ell,k}}}) + \Gamma^3_k H(\theta_{\ell,k}).
\end{equation}

Assuming the base measure $H$ is nonatomic, the required conditional distribution to do local inference is derived by marginalizing over the mixing measures:
\begin{flalign}
\label{aux_gibbs}
p(c_{i,k} = \ell | \mathcal{C}_k\setminus\{c_{i,k}\}, \mathcal{Z}_k, \text{rest}) \propto 
\begin{cases}
\Gamma^{1,-i}_{\ell, k} R(\bz_{l,k}|\bx_{\ell,k}, \theta^\star_{\ell,k})& \text{for cluster $\ell$ that has been selected}\\
\Gamma^{2,-i}_{\ell, k} R(\bz_{l,k}|\bx_{\ell,k}, \theta^\star_{\ell,k})&\text{for cluster $\ell$ that has not yet been selected}\\
\Gamma^{3,-i}_k \int R(\bz_{l,k}|\bx_{\ell,k}, \theta) dH(\theta) &\text{new cluster is created}
\end{cases}
\end{flalign}
where $\Gamma^{j,-i}_{\ell, k}$ is the probability of selecting the $c_{t,k} = \ell$ where $ t \neq i$ given by 
\begin{flalign}
\Gamma^{1,-i}_{\ell, k} = \frac{ \left[\sum \limits_{j=1}^{D_{k-1}}\left[V^\star_{k|k-1}\right]_j \eta_{j, k \mid k-1}  \delta_j(c_{\ell, k})+ \left[{\bf{V}}_k\right]_{\ell} \right]_{-i}- d}{\left[\sum\limits_{t=1}^{\ell-1}\sum \limits_{j=1}^{D_{k-1}}\left[V^\star_{k|k-1}\right]_j \eta_{j, k \mid k-1}  \delta_j(c_{t, k}) + \sum\limits_{t=1}^{\ell-1}\left[{\bf{V}}_k\right]_{t} \right]_{-i}+ \alpha }\\
\Gamma^{2,-i}_{\ell, k} =  \frac{\left[ \sum \limits_{j=1}^{D_{k-1}}\left[V^\star_{k|k-1}\right]_j \eta_{j, k \mid k-1}  \delta_j(c_{\ell, k})\right]_{-i} - d}{\left[\sum\limits_{t=1}^{\ell-1}\sum \limits_{j=1}^{D_{k-1}}\left[V^\star_{k|k-1}\right]_j \eta_{j, k \mid k-1}  \delta_j(c_{t, k}) + \sum\limits_{t=1}^{\ell-1}\left[{\bf{V}}_k\right]_{t} \right]_{-i}+ \alpha }\\
\Gamma^{3,-i}_k = \frac{|D_k|_{-i}d+\alpha} {\left[\sum\limits_{t=1}^{\ell-1}\sum \limits_{j=1}^{D_{k-1}}\left[V^\star_{k|k-1}\right]_j \eta_{j, k \mid k-1}  \delta_j(c_{t, k}) + \sum\limits_{t=1}^{\ell-1}\left[{\bf{V}}_k\right]_{t} \right]_{-i}+ \alpha }
\end{flalign}

where $\left[\cdot\right]_{-i}$ indicates the total number of object parameters observed excluding the $i$th object and $|D_k|_{-i}$ is the total number of unique clusters created at time $k$ before $i$th object is observed and $R$ is the likelihood function. Equation \ref{aux_gibbs} is derived by multiplying the likelihood function by the conditional prior derived in equation \ref{cond_prior}. 

To completely specify the sampling procedure, we need to update $\Theta^\star_k = \{\theta^\star_{1,k}, \dots, \theta^\star_{D_{k},k}\}$. To do so, draw a new value for $\theta^\star_{\ell,k}$ from a distribution proportional to 
\begin{equation}
\prod\limits_{\{\bz_{l,k}: \theta_{l,k} = \theta^\star_{\ell,k}\}} R(\bz_{l,k}|\bx_{\ell,k}, \theta^\star_{\ell,k}) dH(\theta^\star_{\ell,k}).
\end{equation}

\section{Properties of DPY-STP Model}
\label{pyprop}
{\subsection{Posterior Distribution}}
As mentioned in section \ref{gibbs_dpy}, this method induces a partition over $\{1,2, \dots, N_k\}$, i.e., an unordered collection of nonempty subsets such that the set is the disjoint union of the subsets and each element of the set belongs only to one and only one subset, which is exchangeable. Let $\mathcal{C}_k = \{c_{1,k}, \dots, c_{D_k, k}\}$ and $|\mathcal{C}_{k}| = \{\left[V_k\right]_1, \dots, \left[V_k\right]_{D_k}\} $ be the unordered collection of cluster (partition) assignment and its cardinality such that $|c_{j,k}| = \left[V_k\right]_j$ and $\sum\limits_{j=1}^{D_k} \left[V_k\right]_j = N_k$ at time $k$, respectively. Define $ \big(\left[V_k\right]^\ast_1,\dots,\left[V_k\right]^\ast_{D_k}\big)$ to be the size of ordered clusters (partition) such that $\left[V_k\right]^\ast_1\leq\dots,\leq \left[V_k\right]^\ast_{D_k}$. Due to exchangeability of the sequence associated with the cluster (partition) assignments, it is shown that exchangeable partition probability function (EPPF) is given in \cite{pitman1997} by 
\begin{equation}
\label{EPPF}
p(\left[V_k\right]^\ast_1, \dots, \left[V_k\right]^\ast_{D_k}) = \frac{\prod\limits_{j = 1}^{D_k} (\alpha + jd)}{\alpha^{\left[N_k\right]}} \prod\limits_{i = 1}^{D_k} (1-d)^{\left[V_k\right]^\ast_i}
\end{equation}
where $\alpha^{\left[n\right]} = \alpha (\alpha+1) \dots (\alpha+n-1)$. Note that if we set $d = 0$ the equation \ref{EPPF} reduces to the EPPF for the Dirichlet process with concentration parameter $\alpha$ in equation \ref{EPPFDir}. Note that the induced random partition by $\mathcal{C}_k$ at each time $k$ is distributed according to the equation \ref{EPPF}.

Furthermore, it is shown in \cite{pitman1997} if the distribution on the cluster parameters drawn from $\text{DPY-STP}_k| \text{DPY-STP}_{k-1}$ in Theorem \ref{condd} and $d>0$. then posterior distribution given $\theta^\star_{1,k}, \dots, \theta^\star_{D_k,k}$ is the distribution of the random measure 
\begin{equation}
\label{posterior}
B_n\sum\limits_{i=1}^{D_k} \pi_i \delta_{\theta^\star_{i,k}} + (1 - B_n) \tilde{H}
\end{equation}
where $B_n \sim Beta (N_k - D_k d, \alpha + D_k d)$, $(\pi_1, \dots, \pi_{D_k}) \sim \text{Dirichlet}(\left[V_k\right]_1-d, \dots, \left[V_k\right]_{D_k}-d)$, and $\tilde{H} \sim \mathcal{PY}(d, \alpha+ D_k d, G^\ast)$ where $G^\ast = \sum_{\Theta_k} \Gamma^1_{j,k} \delta_{\theta_{\ell,k}} + \sum_{\Theta^\star_{k|k-1}\setminus \Theta_k}\Gamma^2_{j,k} \zeta({\bf{\theta}^\star_{\ell_k-1}},{\bf{\theta_{\ell, k}}}) \delta_{{\bf{\theta_{\ell, k}}}} + \Gamma^3_k H$. Note that $B_n$ and $(\pi_1, \dots, \pi_{D_k})$, and $\tilde{H}$ are mutually independent. 

{\subsection{Posterior Consistency of DPY-STP model}}

As discussed in section \ref{ddpconsis}, DDP-Priors are consistent for estimating the distributions. The statistical model introduced in this paper along with all the statistical models based on Pitman-Yor may be used to estimate the distributions and track the objects. However, the Pitman-Yor process prior assumes the inconsistency of the Gibbs processes priors to estimate the distributions. it is shown the conditions under which Gibbs processes are consistent (Section 3, Theorem 1of \cite{de2011}). Consistency of Pitman-Yor processes is the direct result of Gibbs prior consistency. The following proposition summarizes these conditions:  
\begin{prop}
\label{conspy}
Let $G_k\sim \mathcal{PY}(d,\alpha, H)$ be the prior distribution drawn from a Pitman-Yor Process. The posterior distribution of $G_k|\mathcal{Z}_k$ is consistent at probability measure $G_0$ if and only if one the following conditions holds:
\begin{enumerate}
\item $G$ is the mixture of at most $\lceil{|\frac{\alpha}{d}|}\rceil$ degenerated measures, i.e., $G_0$ is discrete
\item $H$ is proportional to  $G_{0,c}$ where $G_{0,c}$ is continuous part of the probability measure $G_0$
\item $d = 0$, which is equivalent to the consistency of the Dirichlet process.
\end{enumerate}
\end{prop}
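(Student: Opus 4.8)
The plan is to identify $\mathcal{PY}(d,\alpha,H)$ as a member of the Gibbs-type family and then import the consistency dichotomy proved for that family in \cite{de2011}, specializing its index parameter to $\sigma=d$. First I would confirm membership in the Gibbs-type class directly from the EPPF in \ref{EPPF}: the partition probability factorizes as a size-independent coefficient $\prod_{j=1}^{D_k}(\alpha+jd)/\alpha^{[N_k]}$ times a product over blocks that depends on the configuration only through rising factorials of $(1-d)$ in the block cardinalities. This is precisely the defining form of a Gibbs-type exchangeable partition of type $d\in[0,1)$, which is the standing hypothesis of Theorem 1 (Section 3) of \cite{de2011}. Since the induced random partition is exchangeable, as noted after \ref{EPPF}, the cited characterization applies verbatim, and the three listed regimes are exactly the specialization of its conclusion to $\sigma=d$.

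For the sufficiency (\emph{if}) direction I would argue through the explicit posterior representation in \ref{posterior}. When $G_0$ is discrete with finitely many atoms, the number of occupied clusters $D_k$ stabilizes, so $B_n\sim\text{Beta}(N_k-D_k d,\alpha+D_k d)$ concentrates at $1$ and the atomic part $\sum_i \pi_i\delta_{\theta^\star_{i,k}}$ converges to $G_0$, yielding consistency; the ceiling $\lceil|\alpha/d|\rceil$ is the admissible number of degenerate components identified by the cited theorem's bookkeeping. When instead $H\propto G_{0,c}$, the residual mass $(1-B_n)\tilde H$ is deposited exactly along the continuous part of $G_0$, so no limiting bias survives. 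The case $d=0$ collapses the prior to $DP(\alpha,H)$, whose Schwartz-type consistency was already established in Section \ref{ddpconsis}, so consistency holds unconditionally there.

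The main obstacle is the necessity (\emph{only if}) direction: showing that whenever none of the three conditions holds the posterior fails to concentrate at $G_0$. The difficulty is that the discount $d>0$ forces the prior to anticipate the power-law growth $D_k\sim\alpha N_k^{\,d}$ in the number of clusters, so a persistent fraction $1-B_n$ of the posterior mass is continually reseeded with fresh atoms drawn through $H$; when $G_0$ has a continuous component not matched by $H$, this reallocation produces an asymptotic bias that does not vanish, forcing inconsistency. Making this quantitative—controlling the joint fluctuation of $B_n$, the Dirichlet weights $(\pi_1,\dots,\pi_{D_k})$, and the reseeding measure $\tilde H\sim\mathcal{PY}(d,\alpha+D_k d,G^\ast)$—is precisely the content carried by \cite{de2011}, so I would invoke their argument rather than reprove it, checking only that our conditional time-$k$ prior in \ref{condd} meets their regularity requirements so that the characterization transfers to the dependent construction.
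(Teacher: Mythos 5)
Your proposal is correct and takes essentially the same route as the paper: the paper's entire proof is a one-line appeal to the Gibbs-type prior consistency theorem of \cite{de2011}, of which the Pitman-Yor process is a special case, and your argument reduces to the same citation after verifying (correctly, via the EPPF) that $\mathcal{PY}(d,\alpha,H)$ is Gibbs-type. The additional heuristics you give for the sufficiency direction via the posterior representation are consistent with, but not required by, the paper's treatment.
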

\begin{proof}
This Proposition immediately results from the Gibbs prior consistency theorem \cite{de2011}.
\end{proof}
\section{Simulation Results}
\label{sim}
\subsection{Comparison to Multi-Bernoulli Filtering}
The performance of the DDP-EEM model
 is demonstrated and compared to the labeled multi-Bernoulli filter (LMB)
 for a radar target tracking  simulation example. The  time-dependent number of 
 targets are assumed to move  according to the coordinated turn motion model,
 and there is a maximum of ten targets.
 Note that this same example is used for the LMB in \cite{Vo2014}.
 The unknown state parameters 
of the $\ell$th target at time $k$ are 
 the  Cartesian coordinates of  the two-dimensional (2-D) position 
 $[x_{\ell, k} \  y_{\ell, k}]^T$,
 target velocity $[\dot{x}_{\ell, k} \ \dot{y}_{\ell, k}]^{T}$ and 
 target turn rate $\omega_{\ell, k}$.
 The $\ell$th state  vector 
 is given by  $\bx_{\ell, k} \eqq [x_{\ell, k} \  y_{\ell, k} \  \dot{x}_{\ell, k}\  
 \dot{y}_{\ell, k} \  \omega_{\ell, k}]^T$, $\ell\eqq 1, \ldots, N_k$, where
 $N_k$ is the time-dependent target cardinality.
The actual time-dependent trajectories  are shown in Figure \ref{act_traj}.
The transition probability density $p(\bx_k$$\mid$$\bx_{k-1})$
for the coordinated turn motion model is assumed 
 to be a Gaussian distribution with  mean  vector 
$\bmu \eqq [ \bze^T \ \omega_{k-1}]^T$
where $ \bze \eqq A_{\omega_{k-1}}  \bx_{k-1}$ and 
covariance matrix $Q\eqq \text{diag}( [  \sigma^2_w B B^T , \sigma^2_u])$
where $\sigma_w\eqq15$ m/s$^2$,  $\sigma_u \eqq \pi/180$ radians/s, and 
\ben
A_{\omega_{k-1}}\!\!=\!\!
\begin{bmatrix}
1 &\frac{\sin(\omega_{k-1})}{\omega_{k-1}}&0&-\frac{1-\cos(\omega_{k-1})}{\omega_{k-1}}\\
0&\cos(\omega_{k-1})&0&-\sin(\omega_{k-1})\\
0 & \frac{1-\cos(\omega_{k-1})}{\omega_{k-1}} & 1 & \frac{\sin(\omega_{k-1})}{\omega_{k-1}}\\
0 & \sin(\omega_{k-1}) & 0 & \cos(\omega_{k-1})
\end{bmatrix}\!\!,
B = \begin{bmatrix}
\frac{1}{2} & 0 \\
1 & 0\\
0& \frac{1}{2}\\
0&1
\end{bmatrix}\!.
\een
\begin{figure*}[ht]
\centering
\subfloat[ ]{ \label{act_traj}
 \resizebox{3in}{!}{\includegraphics{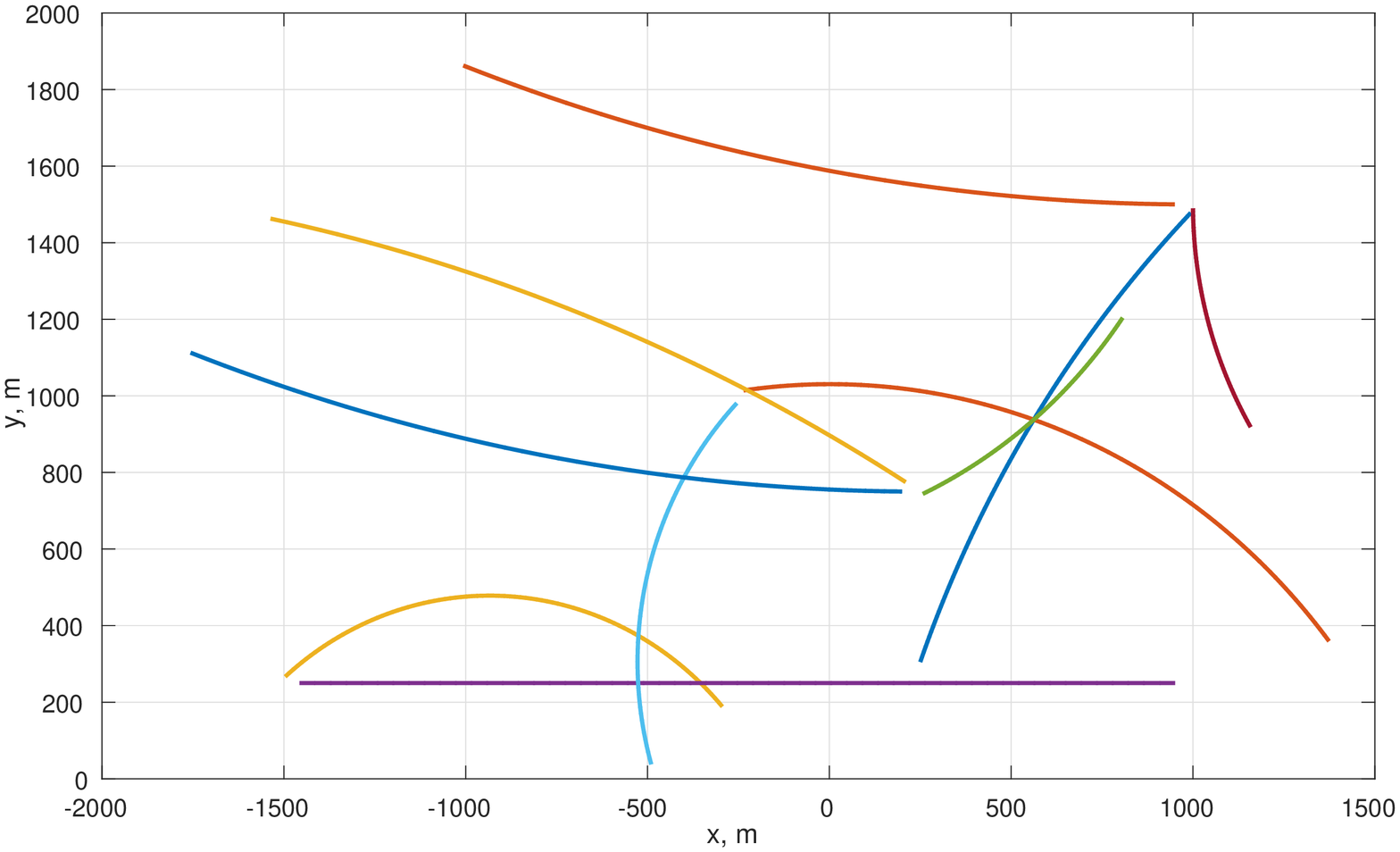}}}
\hspace*{-.2in}
\subfloat[ ]{ \label{fig:fig3}
 \resizebox{4in}{!}{\includegraphics{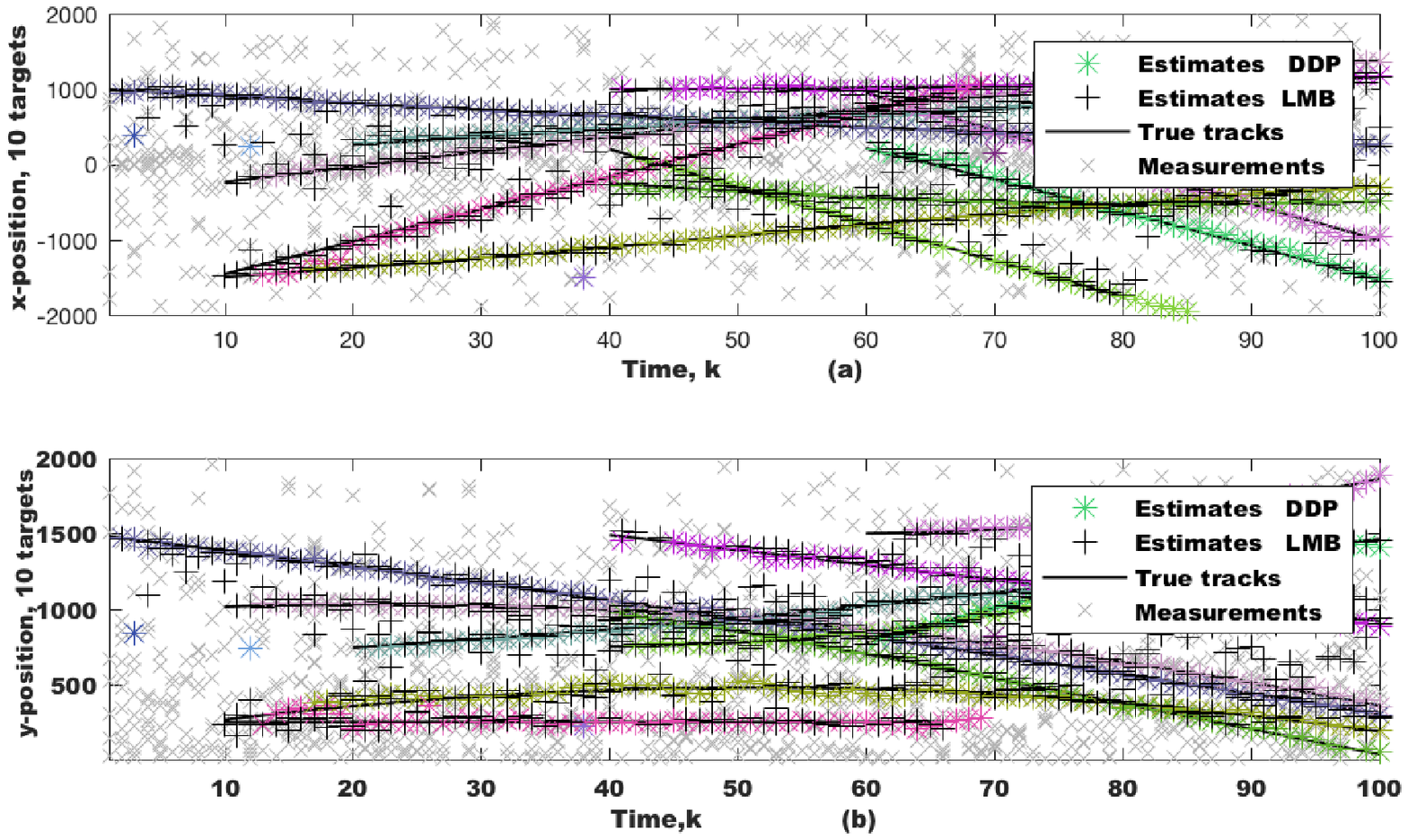}}}
 \vspace*{-3mm}
\caption{ (a) Actual target trajectories. (b) 
Actual and estimated $x$ (top) and  $y$ (bottom) position 
versus time $k$ using DDP-EMM and LMB methods.}
\vspace*{-.1in}
\end{figure*}
\begin{figure*}[t]
\centering
\subfloat[ ]{ \label{fig:fig1}
\resizebox{3.5in}{!}{\includegraphics{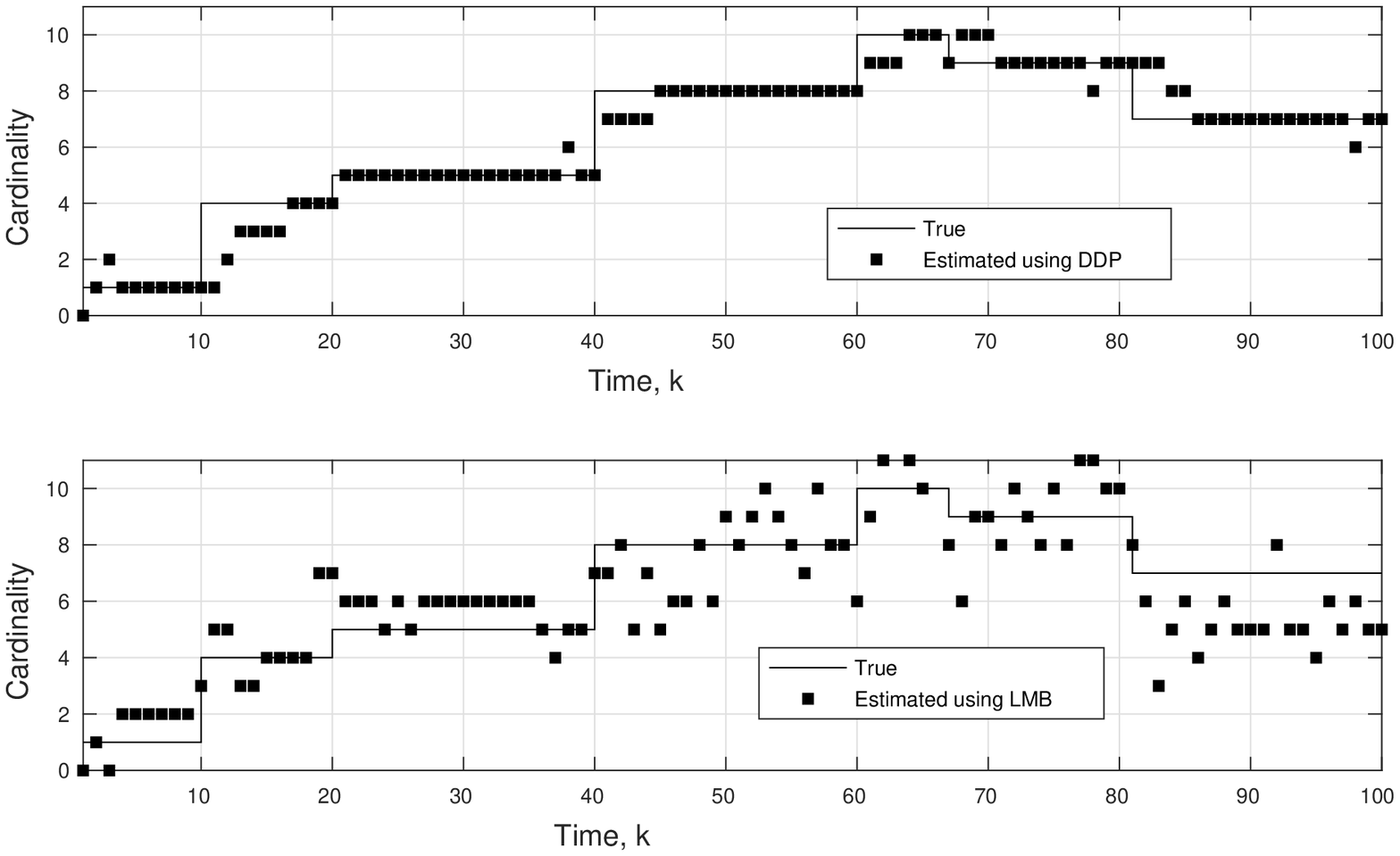}}} 
\hspace*{-.37in}
\subfloat[ ]{ \label{fig:fig2}
\resizebox{3.5in}{!}{\includegraphics{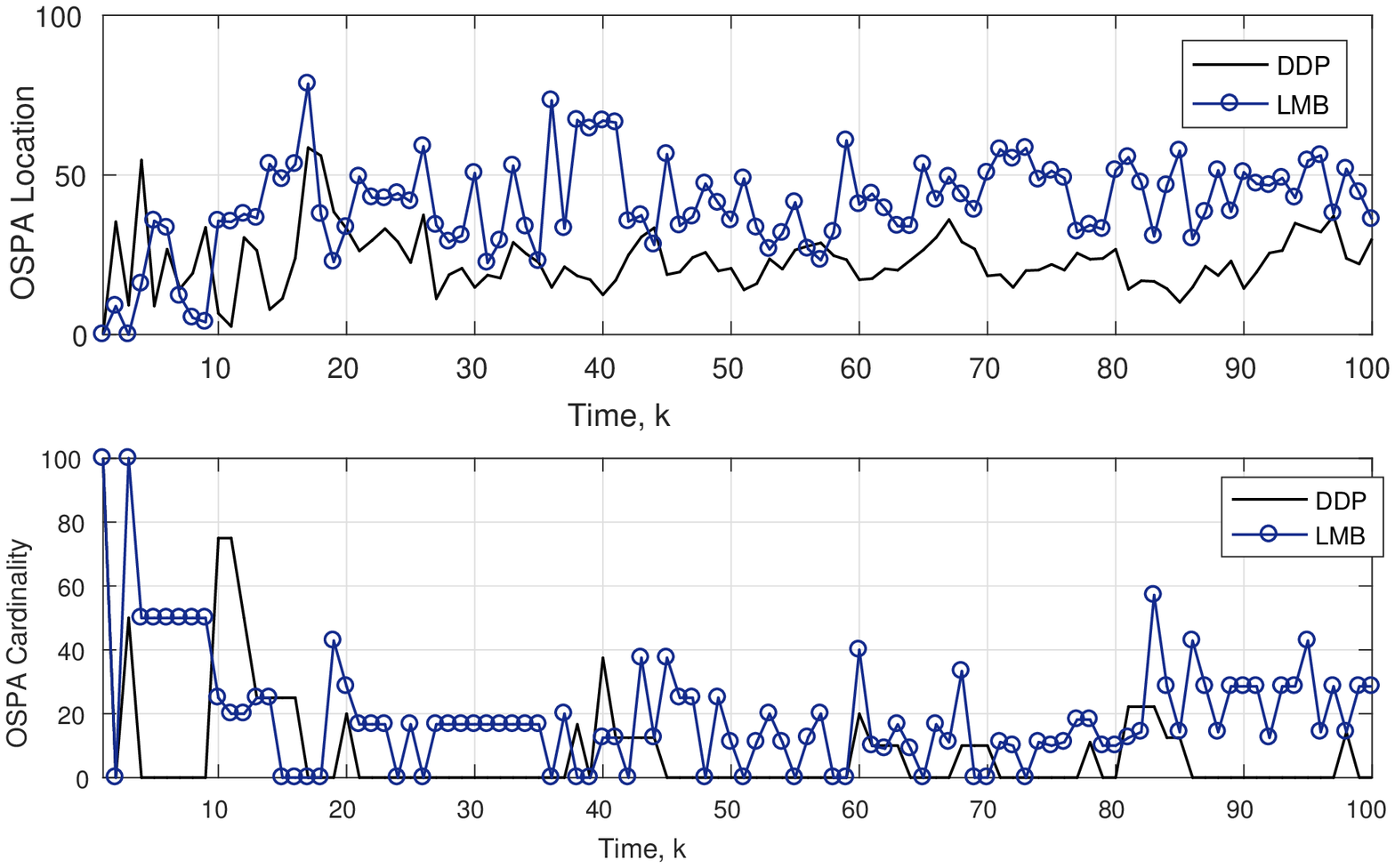}}}
\caption{(a) Comparison between cardinality estimation for DDP (top) and LMB (bottom)
when tracking 10 objects. (b) OSPA location (top) and cardinality (bottom)
of order $p \eqq 1$ and cut-off $c\eqq100$. }
\end{figure*}

We select the probability of a target remaining at a scene during
transitioning to be  $\text{P}_{\ell, k \mid k-1}\eqq 0.95$, for all $\ell$.
The times each target enters and leaves the scene are summarized in
Table \ref{existence}. 
The measurement vector $\bz_k \eqq [\phi_k \  r_k]^{T}$ at time $k$ 
includes  bearing $\phi_k$ and range $r_k$, where 
$r$$\in$$[0, 2,000]$ m and $\phi$$\in$$[-\frac{\pi}{2},\frac{\pi}{2}]$.
The measurement noise is assumed zero-mean  Gaussian 
with variance $\sigma^2_{r}\eqq25$ and $\sigma^2_{\phi}\eqq ({\frac{\pi}{180}})^2$. For the simulations, 10,000 Monte Carlo runs were used, The overall observed time steps is considered to be $K = 100$ and the signal-to-noise-ratio (SNR) is -3 dB. In our proposed model, we used a normal-inverse Wishart distribution, $\mathcal{NIW}(\mu_0 , \lambda, \nu, \Psi)$, with values $\mu_0= 0.001, \lambda = 0, \nu = 50$, and an identity matrix for $\Psi$ as prior on the space of parameters. We consider a Gamma distribution as prior on the concentration parameter $\alpha$, $\Gamma(\alpha; 1,0.1)$
Using the proposed DDP-EMM and inferential methods the estimated $x$ and $y$ coordinates are shown to match the true coordinates in Figures 1(a) and 1(b), respectively. When compared to the LMB in Figure 2, the DPY-EM shows a higher estimation accuracy for the x and y coordinates in Figure 2a. The increase in performance is also demonstrated consistently using the OSPA measurement, both for the range and the time-dependent object cardinality in Figure 2b.

Figure \ref{fig:fig3} displays the actual and estimated target  trajectories  
 for the proposed DDP-EMM and (LMB) methods in 
 10,000 Monte Carlo (MC) runs.
As shown in Figure \ref{fig:fig1}, the DDP-EMM has higher accuracy than the 
LMB  when estimating the time-dependent target cardinality.
This is also demonstrated using the optimal sub-pattern assignment (OSPA) 
metric (of order $p\!=\!1$  and cut-off $c\!=\!100$) in Figure \ref{fig:fig2}.
The OSPA location for both methods is  compared in Figure \ref{fig:fig2} (top). 
Note that the lower the OSPA metric, the higher the corresponding performance.
  We observe that the DDP-EEM  method often performs
better than the LMB; this may be due to the fact that the LMB requires 
  approximations  when updating the target state estimates.
\begin{table}[ht]
\centering
\caption{Target existence over time.}
\label{existence}
\begin{tabular}{|l|l|l|l|}  \hline
Object& Presence   & Object & Presence  \\ \hline
 Object 1&$0\leq k\leq100$ & Object 6 & $40\leq k\leq100$   \\
 Object 2 & $10\leq k\leq100$  & Object 7  & $40\leq k\leq100$   \\
Object 3 & $10\leq k\leq100$  & Object 8 & $40\leq k\leq80$  \\
 Object 4 & $10\leq k\leq60$  & Object 9& $60\leq k\leq100$  \\
Object  5 & $20\leq k\leq80$  & Object 10 & $60\leq k\leq100$  \\ \hline
\end{tabular}
\end{table}

The  DDP-EMM and  LMB can both 
 track the targets.  However, the DDP-EMM  is 
computationally more efficient  and has a higher 
 tracking performance.
As shown in  
 Figure \ref{fig:fig1},  the LMB drastically overestimates the cardinality
 of the 10 targets, when compared to the DDP-EMM,  
 showing the elimination of the posterior cardinality bias.
  This is because the LMB
 is highly sensitive to the presence of clutter.
 The OSPA location  and OSPA cardinality measures of both methods
are compared in Figure \ref{fig:fig2}.
We observe that the DDP-EEM  method often performs better than 
the LMB due to approximations assumed in the LMB filtering to update the tracks. 

\subsection{DDP-EMM and Low SNR: Moving Cars with Turn}
\label{cars}
In this section, we show that DDP-EEM algorithm may track objects in the presence of high noise through simulations. We consider five moving cars where it is  assumed that each car may enter, leave, or turn at any time. Each car comes to scene at different time and must follow the cars in front of it. The goal is to estimate the location/range of each car as well as the number of the cars in the scene at each time step based on the noisy measurements received from the sensor. 

\begin{figure*}[ht]
\centering
\hspace*{-.4in}
\subfloat[ ]{ \label{track_car}
 \resizebox{3in}{!}{\includegraphics{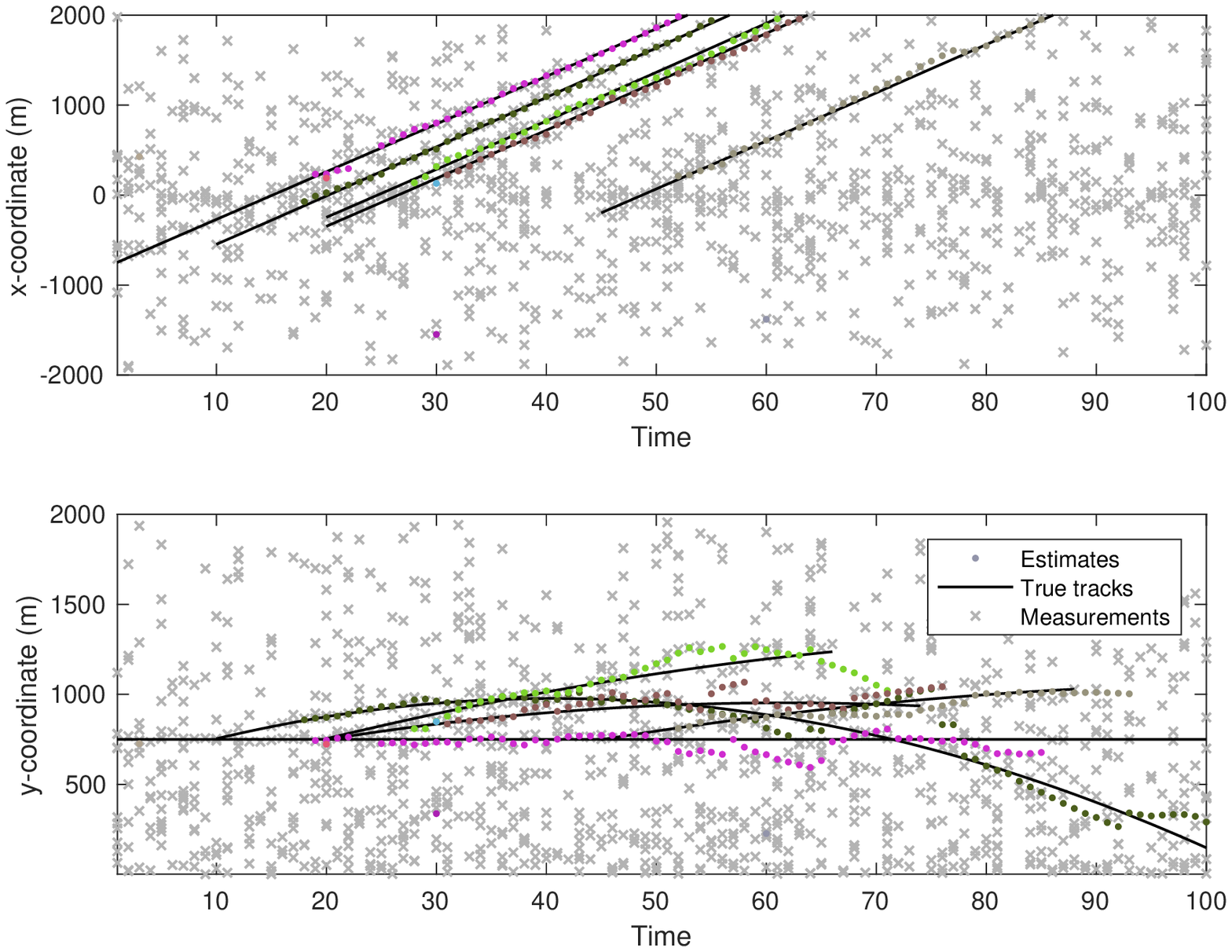}}}
\subfloat[ ]{ \label{loc}
 \resizebox{3in}{!}{\includegraphics{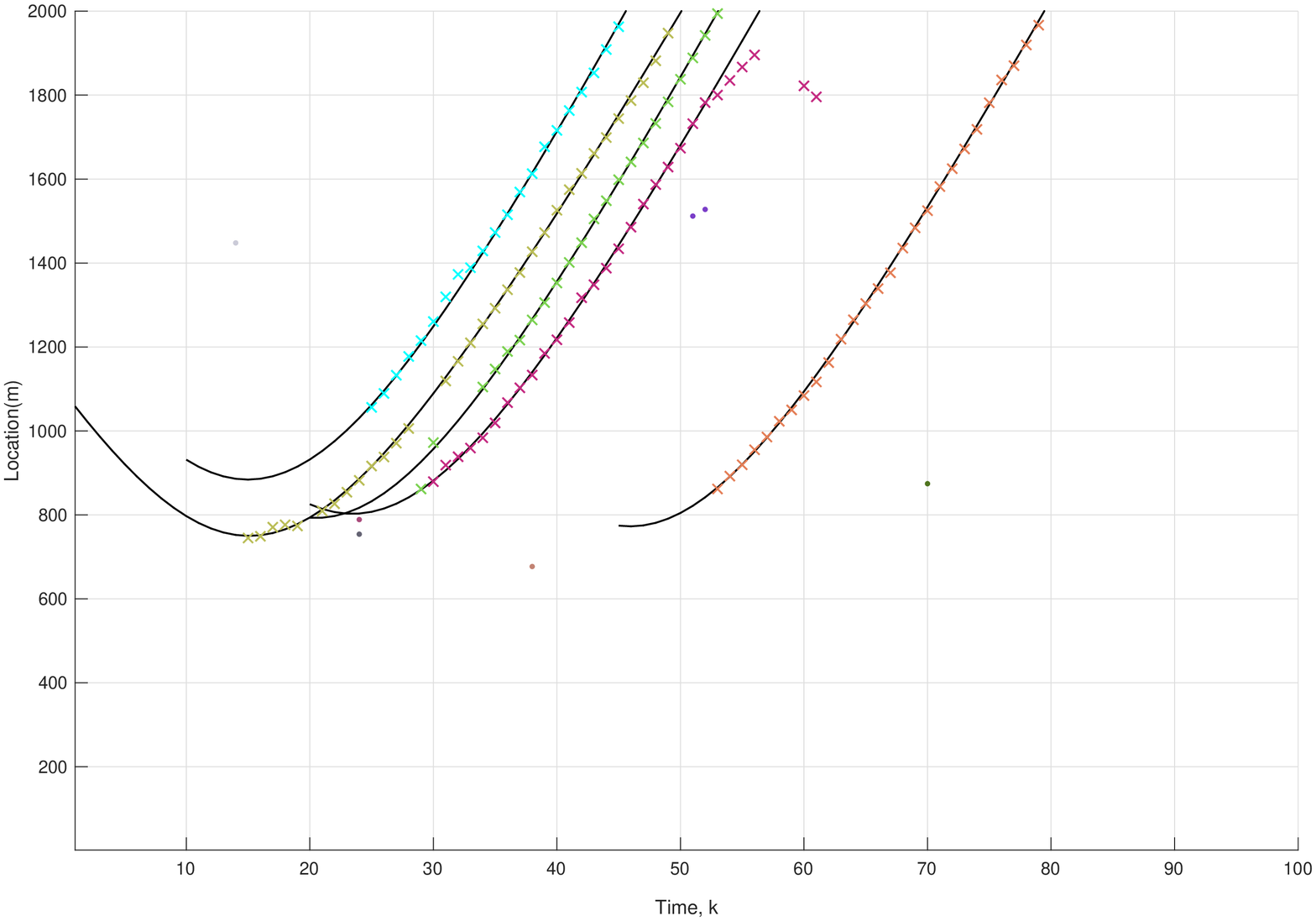}}}
\vspace*{-1mm}
\caption{  (a) $x$-coordinate and $y$-coordinate estimation using DDP-EMM model. (b) Location estimation using DDP-EMM.}
\vspace*{-.1in}
\end{figure*}

\begin{figure*}[htb]
\centering
\subfloat[ ]{ \label{card}
 \resizebox{3.5in}{!}{\includegraphics{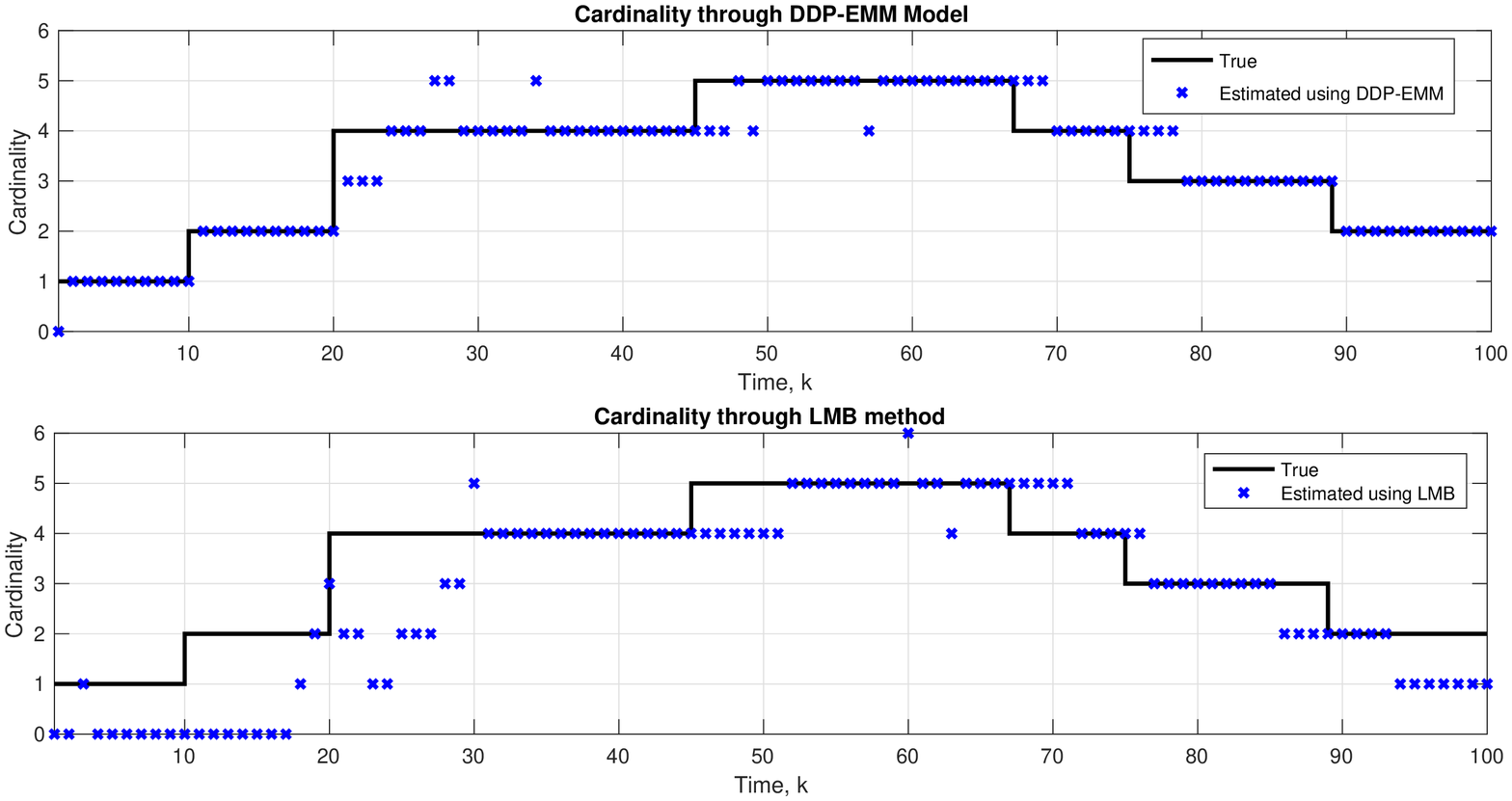}}}
\hspace*{-.4in}
\subfloat[ ]{ \label{ospa_car}
 \resizebox{3.5in}{!}{\includegraphics{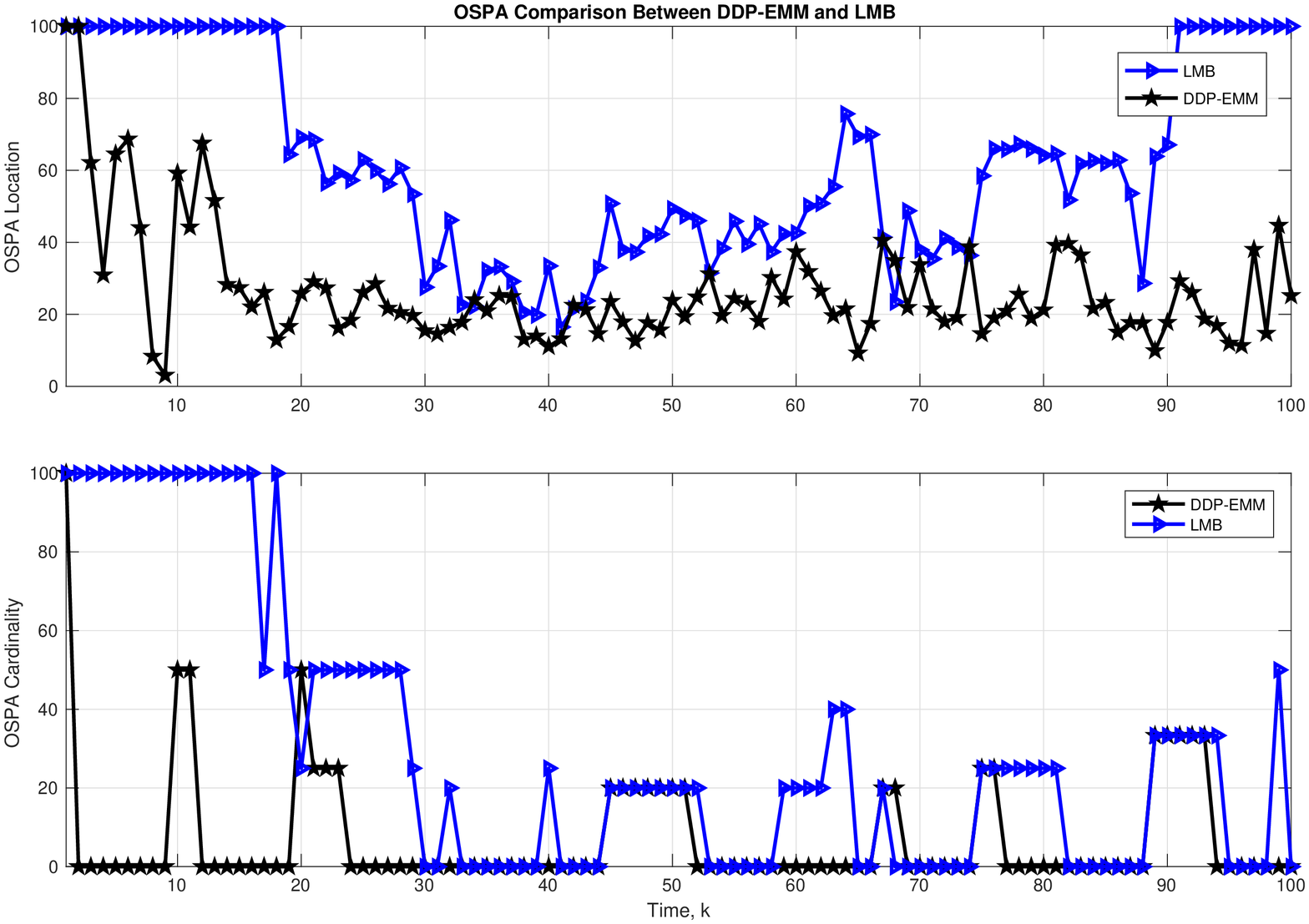}}}
\vspace*{-1mm}
\caption{  (a) Cardinality estimation using DDP-EEM and LMB. (b) OSPA comparison between DDP-EMM and LMB for cut-off $c=100$ and order $p=1$.}
\vspace*{-.1in}
\end{figure*}
The unknown state of each car is considered to be $[x, y, \dot{x}, \dot{y}, \omega]^T$ where $(x , y)$, $(\dot{x}, \dot{y})$, and $\omega$ are the location, velocity, and turning rate, respectively. The sensor only collects information about the range and angle at each time step. An additive Gaussian noise is assumed throughout simulations. The SNR for this model is $-3$ dB. In this scenario, the objects are assumed to be located near one another which makes the model complicated to analyze.  We compare the tracker introduced in this paper to the LMB tracker. We illustrate through simulations that DDP-EEM algorithm produces an accurate estimate of the location and cardinality despite high noise. We assume We assume a normal-inverse Wishart distribution, $\mathcal{NIW}(\mu_0 , \lambda, \nu, \Psi)$, with values $\mu_0= 0.01, \lambda = 0, \nu = 100$, and an identity matrix for $\Psi$ as prior on the space of parameters. We consider a Gamma distribution as prior on the concentration parameter $\alpha$, $\Gamma(\alpha; 1,0.3)$. Running 10,000 Monte Carlo (MC) simulations, the estimated cardinality and the OSPA metric for the location estimation error is depicted in Fig. \ref{card} and Fig. \ref{ospa_car}, respectively. For OSPA metric we consider the order $p =1$ and the cut-off $c = 100$. Figure \ref{track_car}, Figure \ref{loc} displays the $x$-coordinate and $y$-coordinate estimation and location of the objects using the DDP-EMM tracker, respectively. As shown in Fig. \ref{card} and Fig. \ref{ospa_car}, under same conditions, if the objects are located close to each other, the proposed DDP-EMM algorithm outperforms the LMB method and estimates the location more accurately.



\subsection{DDP-EMM under Different SNR Values }
\begin{figure}[ht]
\centering
\includegraphics[width=12cm]{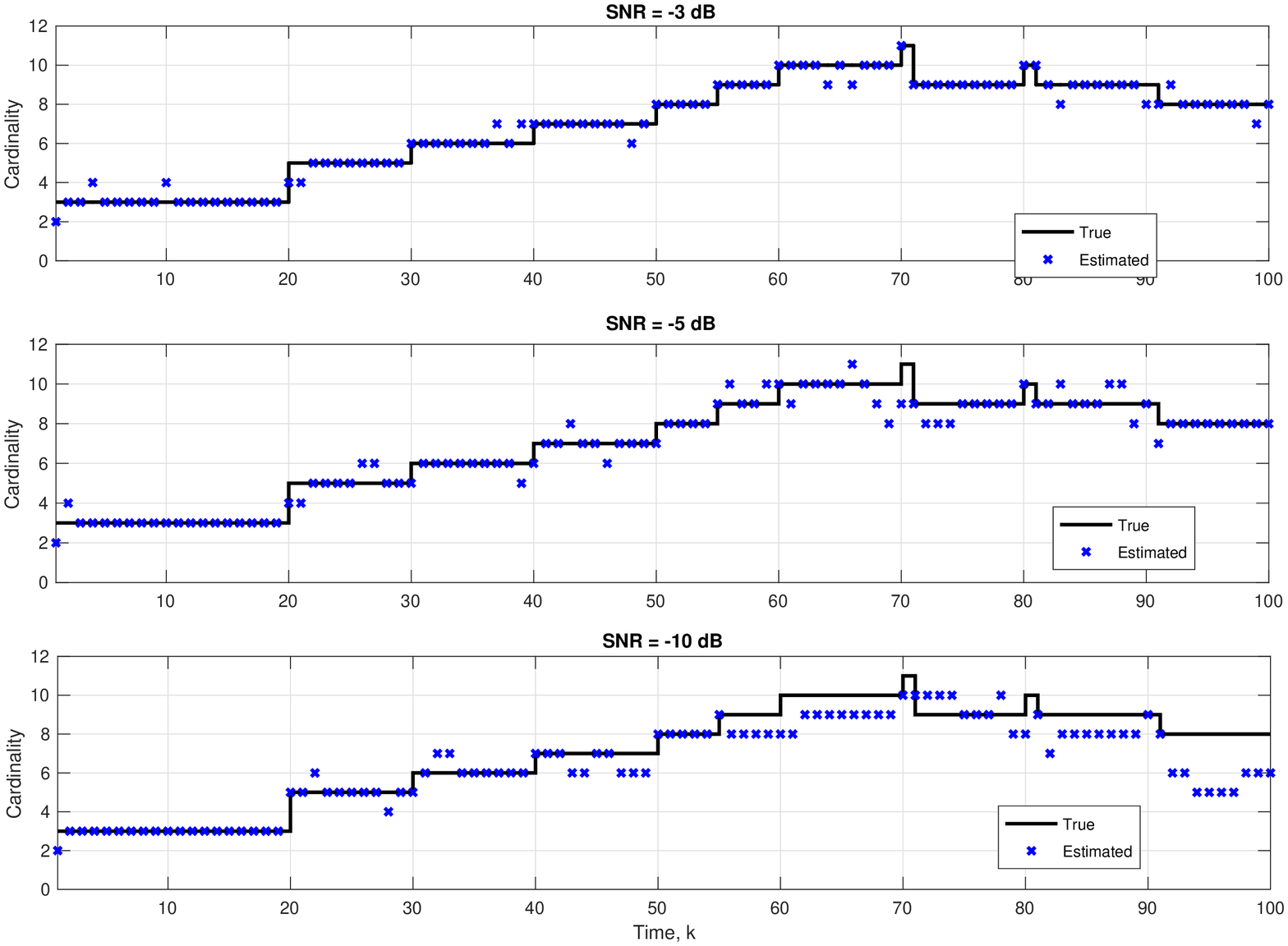}
\caption{ Cardinality estimation in the presence of different SNR values.}
\label{c-snr}
\end{figure}
We assume the same scenario as discussed in the section \ref{cars}. However, in this setup cars, we assume there is no turn, meaning $\omega = 0$. The unknown state of  $[x, y, \dot{x}, \dot{y}]^T$ and the measurements contain the range. We put our proposed DDP-EMM method to the test under different SNR values. With the DDP-EMM setup, we model the state parameters as a realization of the proposed process. We assume Gaussian distributions throughout this simulation. If we learn the states with mean of zero, our model reduces to that of constant acceleration model and by assuming a non-zero mean we may consider the fast changes. We simulate the algorithms for SNR = $-3$ dB, $-5$ dB, and $-10$ dB. 
\begin{figure}[h]
\centering
\includegraphics[width=12cm]{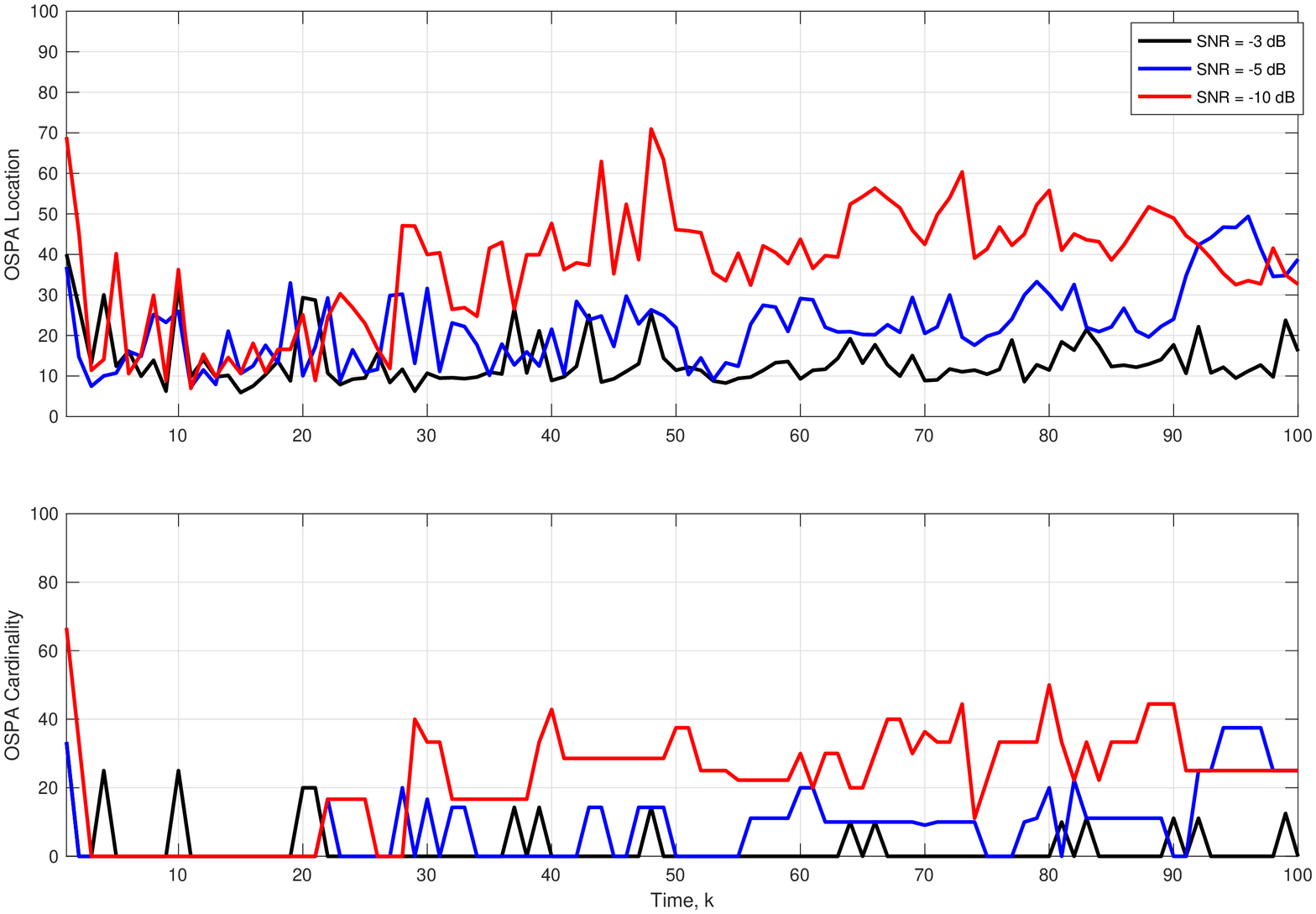}
\caption{DDP-EMM performance for SNR = $-3$ dB, SNR = $-5$ dB, and SNR = $-10$ dB.}
\label{o-snr}
\end{figure}
Place a normal-inverse Wishart distribution, $\mathcal{NIW}(\mu_0 , \lambda, \nu, \Psi)$, with values $\mu_0= 0, \lambda = 0, \nu = 100$, and an identity matrix for $\Psi$ as prior on the space of parameters and a Gamma distribution as prior over the concentration parameter $\alpha$, $\Gamma(\alpha; 1,0.2)$; running 10,000 Monte Carlo (MC) simulations results; Figure \ref{c-snr} presents the cardinality of the model under various SNR values. As shown in this figure, this method works perfectly for high SNR values and even though the SNR is very high, we are still able to obtain the correct cardinality of the states most of times. 

Figures \ref{o-snr} depicts the performance of this method under different SNR values. Note that for high SNR values the OSPA metric is still fairy low which verifies the good performance of this method. 

\subsection{DPY-STP method}
\label{py1}

\begin{table}[htb]
\centering
\caption{Time intervals that objects enter/leave the scene}
\label{existence}
\begin{tabular}{| c | c | c | }   \hline
Object  & Time  step entering scene & Time step leaving 
the scene \\  \hline
\hspace*{2mm} & & \\[-0.5mm]
 Object 1  &  $k\eqq0$  & $k \eqq70$   \\[2mm]
Object 2  & $k \eqq 5$
 & $k \eqq 100$ \\[2mm]
Object 3  & $k \eqq10$
 & $k \eqq 100$  \\[2mm]
Object 4  & $k \eqq20$ & $k \eqq 45$ \\[2mm]
Object 5 &$k \eqq 30$ & $k \eqq 80$ \\ \hline
\end{tabular}
\end{table}

The DPY-EM multiple object tracking method
is implemented using MCMC sampling methods,  together with 
Algorithms \ref{alg1} and \ref{alg2}.  
To demonstrate the performance of this method, 
we simulated a dynamic linear tracking 
example using five objects that enter and leave the 
scene at different times, as summarized in  Table \ref{existence}. 
The performance is compared to that of  the labeled multi-Bernoulli 
(LMB) approach. 
\begin{figure*}[!tbp]
 \begin{center}
 \resizebox{4in}{!}{\includegraphics{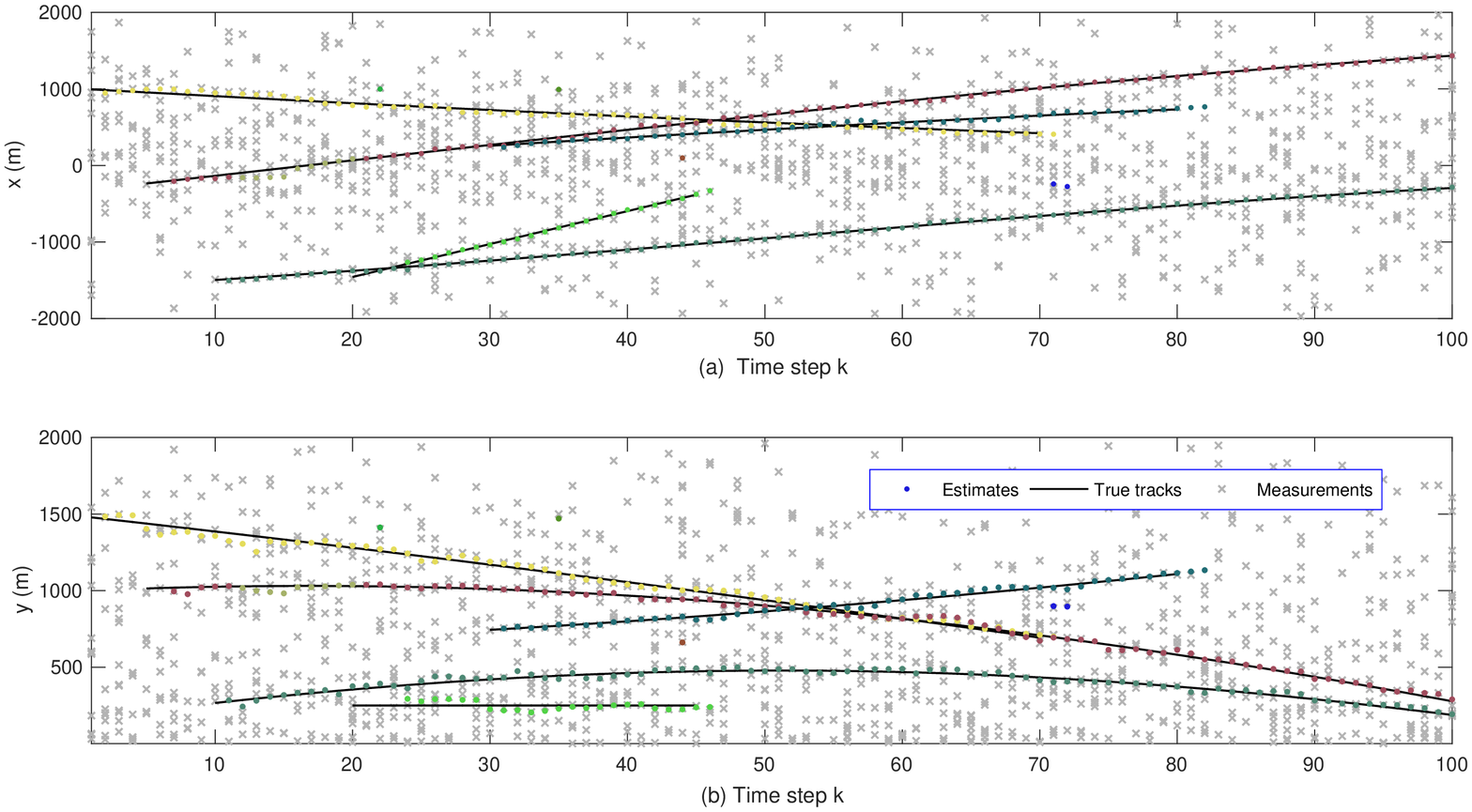}}
 \end{center}
\caption{True and estimated 
(a) $x$-coordinate and (b) $y$-coordinate
as a function of the time step $k$ for five objects.}
 \label{fig:coordinates}
\end{figure*}

\begin{figure*}[!tbp]
\begin{center}
\subfloat[ ]{ \label{fig:card}
    \resizebox{4in}{!}{\includegraphics{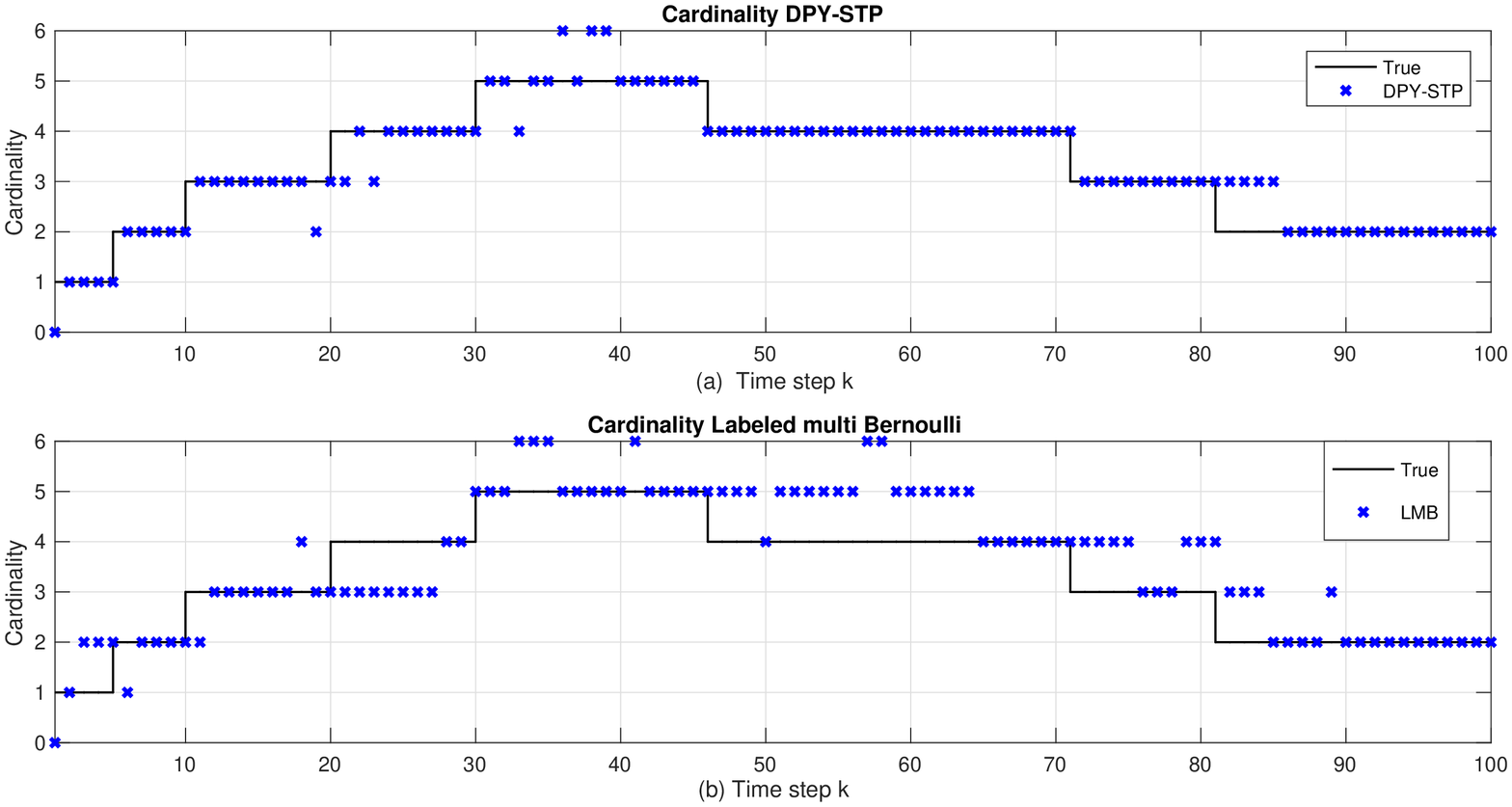}}}   \\
\subfloat[ ]{ \label{fig:ospa}
  \resizebox{4in}{!}{\includegraphics{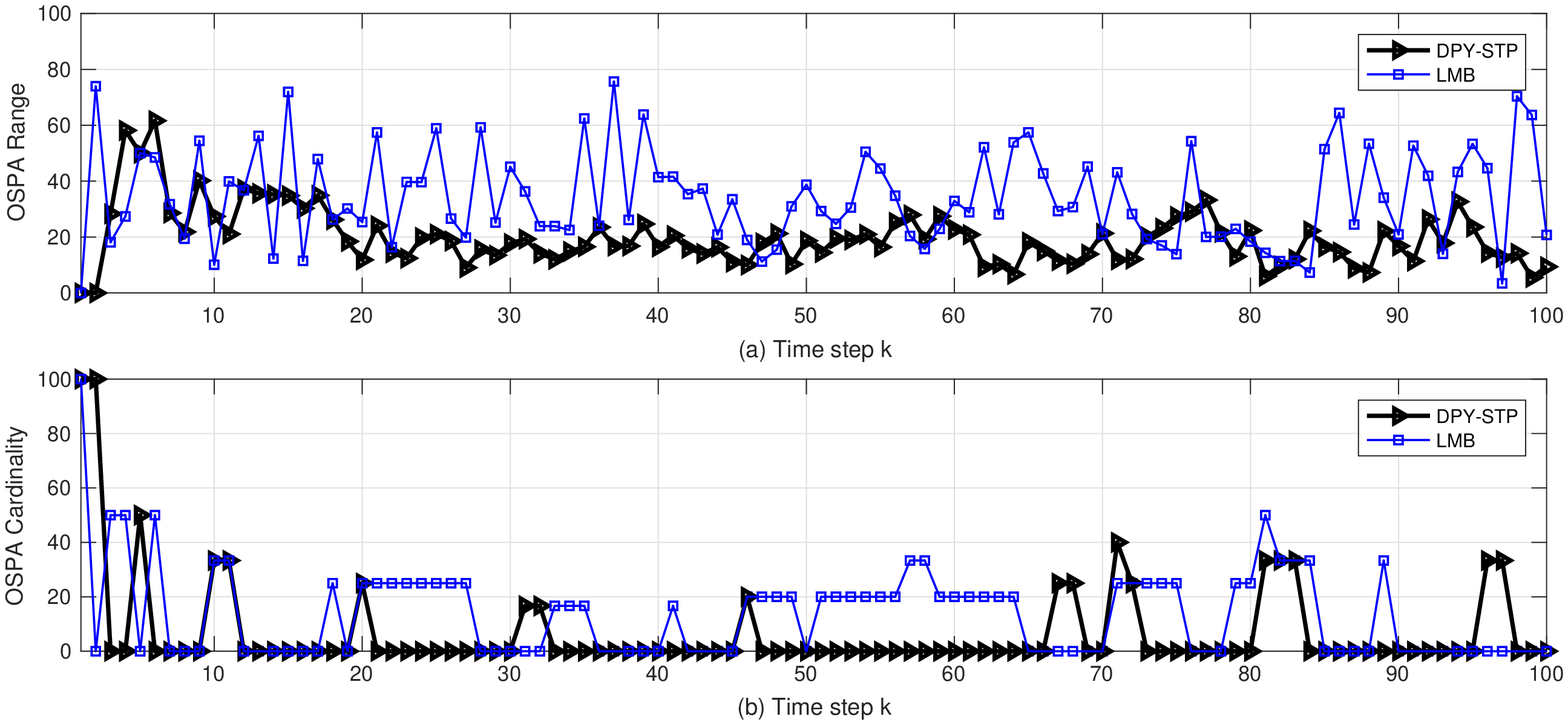}}}
  \end{center}
  \caption{(a)  True and estimated $x$-coordinate (top) and $y$-coordinate
  (bottom) as a function of time step $k$  for 5 objects. (b)
  OSPA (order $p\eqq1$ and cut-off $c \eqq100$
   for range (top) and (b) cardinality (bottom) averaged over 10,000 MC simulations
    for the DPY-STP  and the labeled multi-Bernouli (LMB) based tracking approaches.}
    \label{comb}
\end{figure*}

For the simulations, 10,000 Monte Carlo runs were used, 
The overall observed time steps is assumed to be $K\eqq100$ 
and the SNR is $-3$ dB.  
Also, 10,000 Monte Carlo runs were used in the simulations.
 The DPY-EM estimated $x$ and $y$ coordinates  are  
shown to match the true coordinates
 in Figures \ref{fig:coordinates}(a) and \ref{fig:coordinates}(b), respectively.
When compared to the LMB in Figure \ref{comb},
the DPY-EM  shows a higher  estimation accuracy
for the $x$ and $y$ coordinates in  Figure  \ref{fig:card}.
 The increase in performance is also demonstrated 
 consistently using the OSPA measurement,
both for the  range and the time-dependent object cardinality in 
Figure \ref{fig:ospa}.

\subsection{Comparison between DPY-STP and DDP-EMM}

Due to the flexibility of Pitman-Yor process and the fact that the object state benefits from a larger number of available clusters to ensure all dependencies are captured, we are expecting to obtain better results using DPY-STP, given the condition in Theorem \ref{conspy}. In this section, we compare both proposed methods and verify that the algorithm based on the dependent Pitman-Yor process may have better results than DDP-EMM. To do this end, we consider the problem of tracking 10 objects using both methods. We assume the base distribution to have a normal-inverse Wishart  distribution, $\mathcal{NIW}(\mu_0 , \lambda, \nu, \Psi)$ where $m_0 = 0, \lambda = 0, \nu = 100, \text{and } \Psi = I$. We select $\alpha$ and $d$ the same way as \ref{py1}.
\begin{figure*}[htb]
\centering
\subfloat[ ]{ \label{xypy}
 \resizebox{3.5in}{!}{\includegraphics{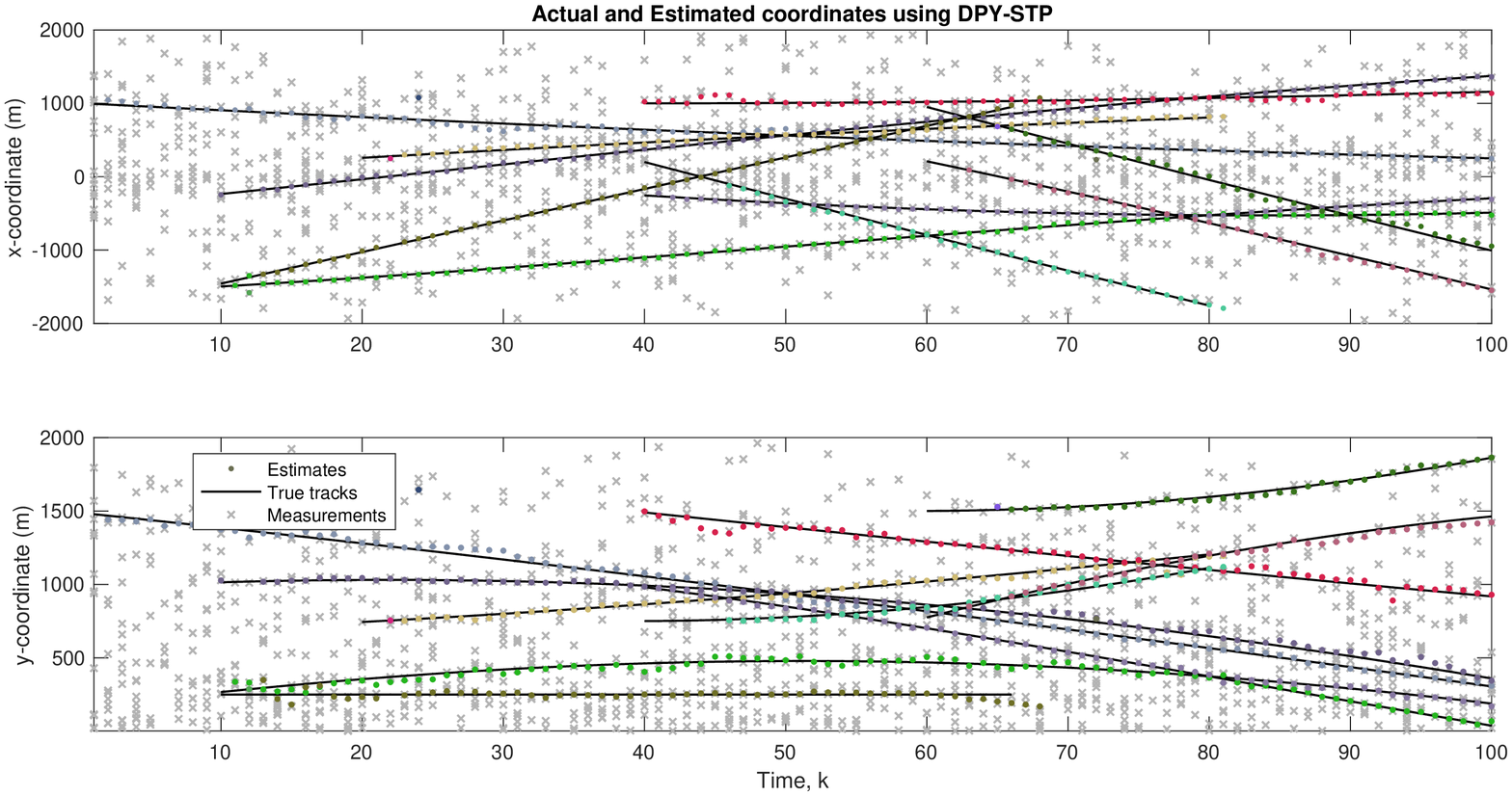}}}
\hspace*{-.4in}
\subfloat[ ]{ \label{xydp}
 \resizebox{3.5in}{!}{\includegraphics{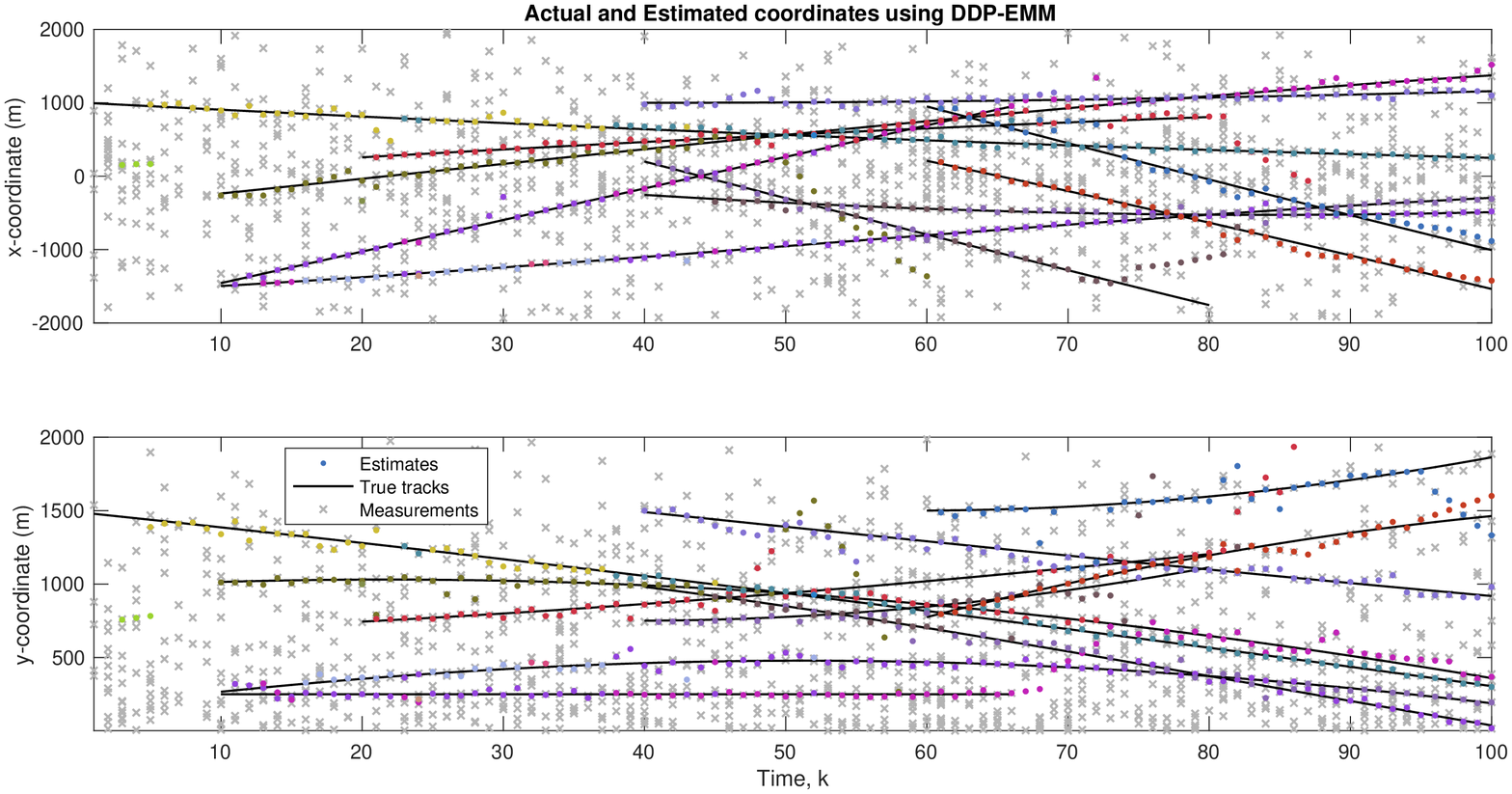}}}
\vspace*{-1mm}
\caption{  (a) Actual and estimated x and y coordinates through DPY-STP (b) Actual and estimated x and y coordinates through DDP-EMM.}
\vspace*{-.1in}
\end{figure*}

Figures \ref{xypy} and \ref{xydp} displays the actual and estimated coordinates through DPY-STP and DDM-EMM, respectively. We show the location estimation of objects through DPY-STP and DDP-EMM in Figures \ref{locpy} and \ref{locdp}, respectively. The Figure \ref{locpy} shows that DPY-STP has higher accuracy compared to DDP-EMM model. We can also demonstrate this using the OSPA metric with cut-off $c=100$ and order $p=1$. We observe that DPY-STP has a better performance compared to DDP-EMM as depicted in \ref{pydp}.

\begin{figure*}[htb]
\centering
\subfloat[ ]{ \label{locpy}
 \resizebox{3in}{!}{\includegraphics[width = 5cm, height = 3.5cm]{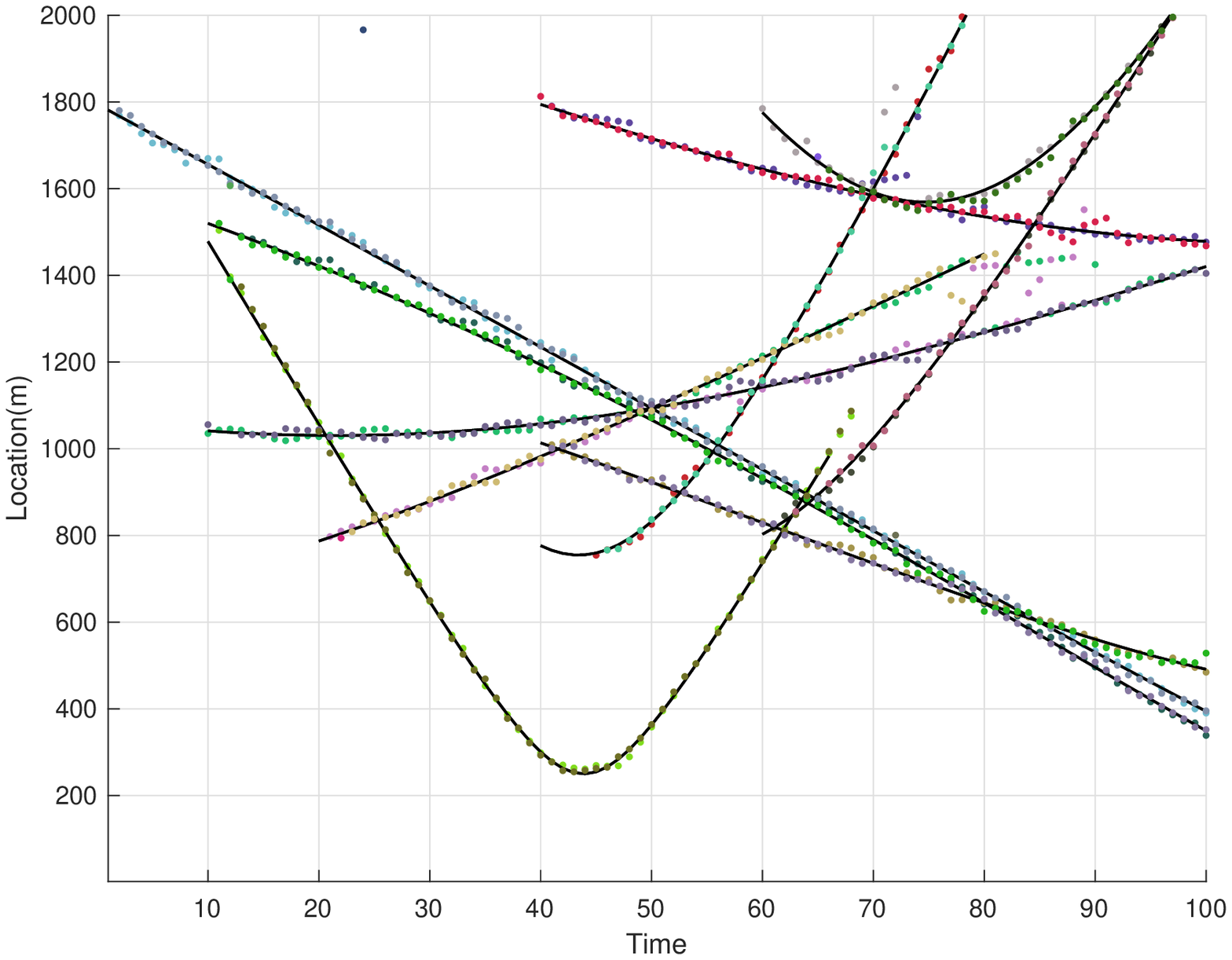}}}
\hspace*{-.1in}
\subfloat[ ]{ \label{locdp}
 \resizebox{3.5in}{!}{\includegraphics[width = 5cm, height = 3cm]{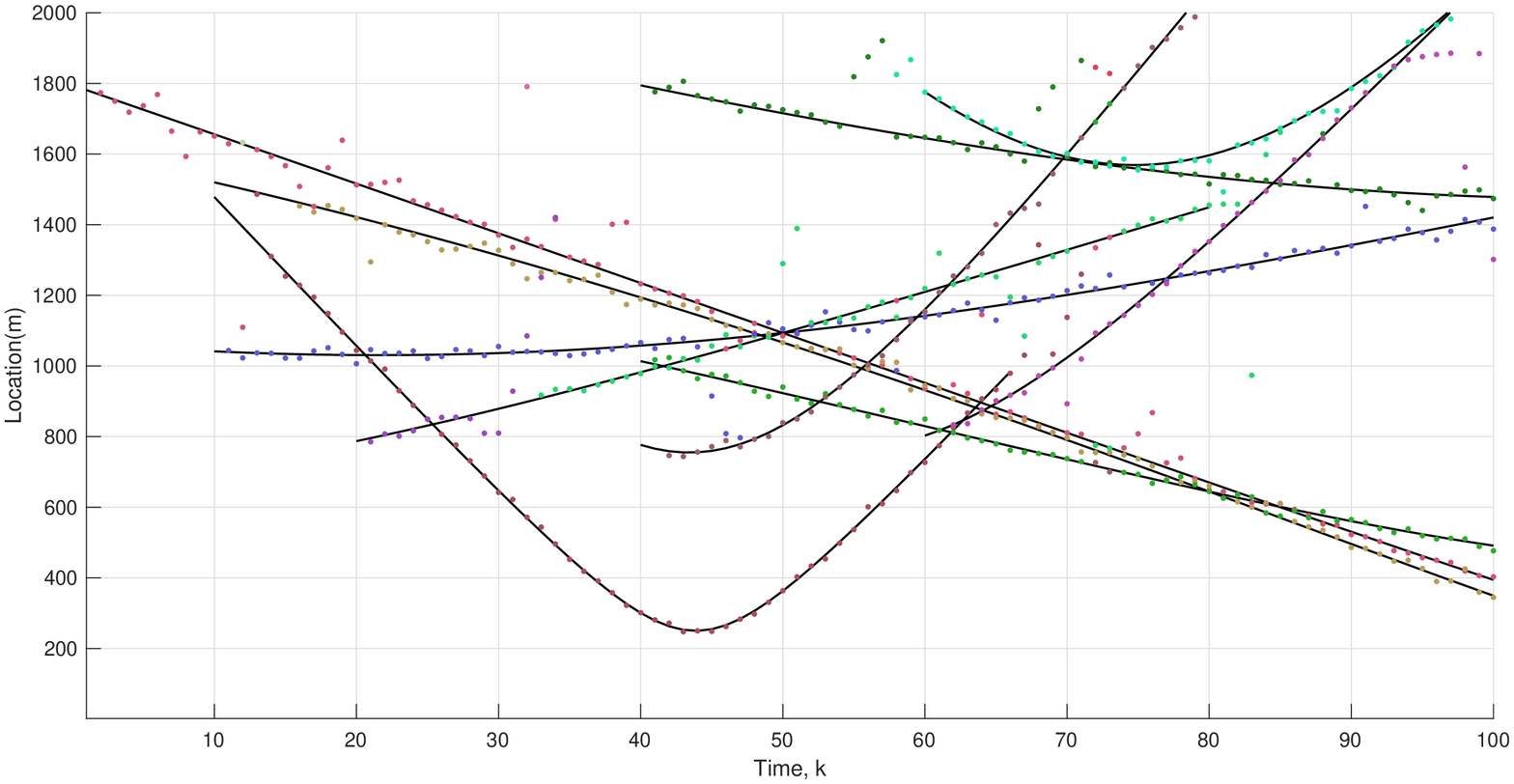}}}
\vspace*{-1mm}
\caption{  (a) Actual and estimated location through DPY-STP (b) Actual and estimated location through DDP-EMM.}
\vspace*{-.1in}
\end{figure*}

\begin{figure*}[!tbp]
 \begin{center}
 \resizebox{5in}{!}{\includegraphics{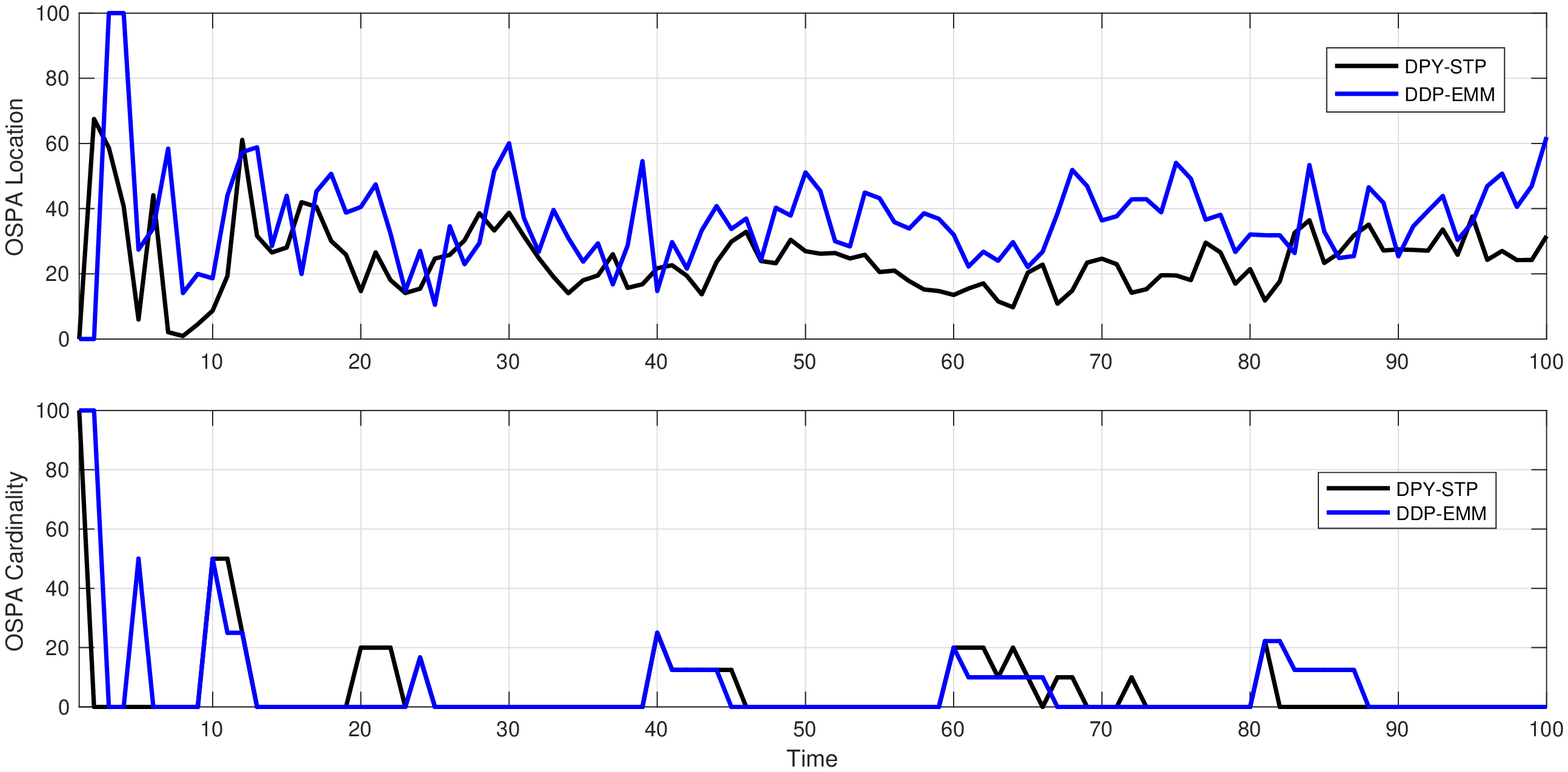}}
 \end{center}
\caption{OSPA comparison between DPY-STP (black) and DDP-EMM (blue) for cut-off $c = 100$ and order $p = 1$.}
 \label{pydp}
\end{figure*}

\section{Conclusion}

In this paper, we presented novel families of nonparametric processes that naturally captures the computational and inferential needs of a multi object tracking problem. We exploited dependent Dirichlet process and Pitman-Yor processes to model the objects and therefore tracking object trajectories. We showed that DDP-EMM is marginally a DP and DPY-STP is marginally a PY process and they follow the EPPF formula. We also derived the Gibbs sampler for both DDP-EMM and DPY-STP methods and manifested that the proposed Bayesian nonparametric framework can efficiently track the labels, cardinality, and object trajectories. Furthermore, MCMC implementation of the proposed tracking algorithms successfully verifies the simplicity and accuracy of these algorithms.

\appendixpage
\section{Appendix A}
\subsection{Proof of \ref{Gibbs}:}
\label{proof1}
\begin{proof}
The proof of \ref{Gibbs} follows the standard Bayesian nonparametric methods. We know that the base measure in DP($\alpha$, H) is the mean of the Dirichlet prior. The following lemma generalizes this fact.
\begin{lemma}(Ferguson 1973, \cite{ferg1973})
\label{mean}
If $G \sim \text{DP}(\alpha, H)$ and $f$ is any measurable function, then 
\begin{equation}
\mathbb{E}\Big[ \int f(\theta) dG(\theta)\Big] = \int f(\theta) dH(\theta) \notag
\end{equation}
\end{lemma}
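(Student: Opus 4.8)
The plan is to prove the equivalent measure-level identity $\mathbb{E}[G(A)] = H(A)$ for every measurable set $A$, and then bootstrap from indicators to a general measurable $f$ through the standard approximation ladder. The key input is the defining finite-dimensional (Ferguson) characterization of the Dirichlet process: for the two-cell partition $\{A, A^c\}$ of $\Theta$, the vector $(G(A), G(A^c))$ is Dirichlet distributed with parameters $(\alpha H(A), \alpha H(A^c))$, so that $G(A) \sim \text{Beta}(\alpha H(A), \alpha(1 - H(A)))$. Since a $\text{Beta}(a,b)$ variate has mean $a/(a+b)$, this yields at once
\begin{equation}
\mathbb{E}[G(A)] = \frac{\alpha H(A)}{\alpha H(A) + \alpha(1 - H(A))} = H(A),
\end{equation}
which is exactly the claimed identity for $f = \mathbbm{1}_A$.

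First I would extend this to simple functions $f = \sum_{i=1}^{n} a_i \mathbbm{1}_{A_i}$ by linearity of the expectation and of the integral $\int f\,dG = \sum_i a_i\, G(A_i)$, obtaining $\mathbb{E}[\int f\,dG] = \sum_i a_i H(A_i) = \int f\,dH$. Next I would pass to an arbitrary nonnegative measurable $f$ by choosing simple functions $f_n \uparrow f$ pointwise; the monotone convergence theorem applied once inside the random integral (so that $\int f_n\,dG \uparrow \int f\,dG$ almost surely) and once under the outer expectation lets me commute the limit with $\mathbb{E}$, giving $\mathbb{E}[\int f\,dG] = \lim_n \mathbb{E}[\int f_n\,dG] = \lim_n \int f_n\,dH = \int f\,dH$. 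Finally, for a general $H$-integrable $f$ I would split $f = f^+ - f^-$, apply the nonnegative case to each part, and subtract.

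The hard part is purely measure-theoretic bookkeeping rather than any deep fact: I must check that $G \mapsto \int f\,dG$ is genuinely measurable as a functional of the random measure (so the outer expectation is defined) and that the two invocations of monotone convergence are legitimate, which they are because all integrands are nonnegative and $G$ is almost surely a bona fide probability measure. An equally short alternative, closer to the construction in \ref{dp}, is to write $\int f\,dG = \sum_{j} \pi_j f(\theta_j)$ from the stick-breaking representation, use the independence of the weights $\{\pi_j\}$ from the atoms $\{\theta_j\}$ together with $\theta_j \sim H$ to get $\mathbb{E}[\pi_j f(\theta_j)] = \mathbb{E}[\pi_j]\int f\,dH$, and conclude via $\sum_j \mathbb{E}[\pi_j] = 1$; here the only delicate point is again the interchange of $\sum$ and $\mathbb{E}$, justified by nonnegativity (Tonelli) for $f \ge 0$ before the reduction to general $f$.
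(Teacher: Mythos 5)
Your proof is correct, but note that the paper itself offers no proof of this lemma at all: it is stated with the attribution ``(Ferguson 1973)'' and invoked as a known fact inside the derivation of the Gibbs conditional, so there is nothing in the paper to compare against step by step. Your argument is the standard one and both of your routes are sound. The first route correctly reduces the claim to $\mathbb{E}[G(A)] = H(A)$ via the defining finite-dimensional Dirichlet marginals (the Beta mean computation is right, and the degenerate cases $H(A) \in \{0,1\}$ still work since $G(A)$ is then almost surely constant), and the ladder from indicators to simple functions to nonnegative measurable $f$ to integrable $f$ is the correct measure-theoretic scaffolding; the monotone convergence interchange is legitimate exactly for the reason you give, and the measurability of $G \mapsto \int f\, dG$ follows because it is a pointwise limit of the measurable maps $G \mapsto \int f_n\, dG$, each a finite linear combination of the coordinate evaluations $G(A_i)$ that generate the $\sigma$-field on the space of random measures. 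The stick-breaking alternative is also valid and arguably closer in spirit to the representation the paper actually uses in its equation for the DP; the independence of the GEM weights from the i.i.d.\ atoms and Tonelli for the nonnegative case are precisely the points that need to be, and are, flagged. One small caution: for a general measurable $f$ that is not $H$-integrable the two sides may both fail to be defined, so the lemma as stated should really be read for $f$ nonnegative or $H$-integrable, which is how you have in effect proved it.
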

Suppose that $A$ and $B$ are measurable sets. 
\begin{flalign}
P(\theta_{\ell,k}\in A , \bz_{\ell, k} \in B| \theta_{-\ell,k}, \bz_{-\ell, k}) = &\mathbb{E} \big[ \mathbbm{1}_{\theta_{\ell,k}}(A)\mathbbm{1}_{\bz_{\ell, k}}(B)| \theta_{-\ell,k}, \bz_{-\ell, k}\big]\label{d1}\\
=&\mathbb{E}\Big[\mathbb{E} \big[ \mathbbm{1}_{\theta_{\ell,k}}(A)\mathbbm{1}_{\bz_{\ell, k}}(B)| G, \theta_{-\ell,k}, \bz_{-\ell, k}\big] | \theta_{-\ell,k}, \bz_{-\ell, k}\Big]\label{d2}\\
= &\mathbb{E}\Big[ \int \mathbbm{1}_{\theta_{\ell,k}}(A)\mathbbm{1}_{\bz_{\ell, k}}(B) p(\bz_{\ell,k}|\theta_{\ell,k}, \bx_{\ell,k})d\bz_{\ell, k} dG({\theta_{\ell,k}}|{\theta_{-\ell,k}})\Big]\label{d3}
\end{flalign}
where \ref{d1} follows the definition of expected value, \ref{d2} is due to the law of iterated expectations, and $G(\theta)$ in \ref{d3} is the posterior dependent Dirichlet process given in \ref{posttheta}. Using lemma \ref{mean} 
\begin{flalign}
&\mathbb{E}\Big[ \int \mathbbm{1}_{\theta_{\ell,k}}(A)\mathbbm{1}_{\bz_{\ell, k}}(B) p(\bz_{\ell,k}|\theta_{\ell,k}, \bx_{\ell,k})d\bz_{\ell, k} dG({\theta_{\ell,k}}|{\theta_{-\ell,k}})\Big] =\\
 & \int \mathbbm{1}_{\theta_{\ell,k}}(A)\mathbbm{1}_{\bz_{\ell, k}}(B) p(\bz_{\ell,k}|\theta_{\ell,k}, \bx_{\ell,k})d\bz_{\ell, k} d\Big(\sum\limits_{\Theta_k - \{\theta_{\ell,k}\}} \Pi_1 \delta_{\theta}(\theta_{\ell,k}) + \sum\limits_{\substack{\theta \in \Theta^\star_{k|k-1}\setminus \Theta\\ \theta\neq \theta_{\ell,k}}}\Pi_2 \nu({\bf{\theta}^\star_{\ell, k-1}},{\bf{\theta_{\ell, k}}}) \delta_{\theta}({\bf{\theta_{\ell,k}}}) + \Pi_3 H(\theta_{\ell,k})\Big)\notag.
 \end{flalign}
 Using the Bayes rule we have:
 \begin{flalign}
 P(\theta_{\ell,k}\in A | \theta_{-\ell,k}, \mathcal{Z}_k) = \frac{ \int_{B} P(\theta_{\ell,k}\in A , \bz_{\ell, k} | \theta_{-\ell,k}, \bz_{-\ell, k})d\bz_{\ell, k}}{\int_{\Omega}P(\theta_{\ell,k}\in A , \bz_{\ell, k} | \theta_{-\ell,k}, \bz_{-\ell, k})d\bz_{\ell, k}}
 \end{flalign}
 and this concludes the claim in \ref{Gibbs}.
\end{proof}

 \subsection{Proof of Theorem \ref{thm2}}
 \label{proofthm2}
 \begin{proof}
To prove this theorem we check the conditions in the following theorem:
 \begin{pos}[Theorem 1, Tierney 1994 \cite{tierney1994}]
 \label{tierney}
Assume \text{K} is a $\pi$-irreducible and aperiodic Markov transition kernel such that $\pi \text{K} = \pi$. Then \text{K} is positive recurrent and $\pi$ is the unique invariant distribution of \text{K} and for almost all $x$ we have:
\begin{equation}
|| K^n(x,\cdot) - \pi ||_{TV} \longrightarrow 0
\end{equation}
where $||\cdot||_{TV}$ is the total variation norm. 
\end{pos}
The proof of invariance of the  posterior distribution for the Markov chain defined in \ref{Gibbs} is very similar to the proof of theorem 2 [escober 1994]. We only need to prove the aperiodicity and irreducibility of the Markov transition kernel with respect to the posterior distribution. \\
{\textit{Irreducibility:}} Assume that $B^k_{\theta} =\cup B^k_{j,\theta}$ is a partition where the elements of this partition, $B^k_{j,\theta}$, are the parameters configuration vector at time $k$ and $\pi_{j,k}(B^k_{j,\theta})$ is the probability measure associated for a fixed configuration. Note that the distribution $\pi_k$ at time $k$ has a unique distribution $\pi_k = \sum \pi_{j,k}(B^k_{j,\theta})$. Conditioning on a fixed configuration with $\pi_k(B^k_{j,\theta}) > 0$, both posterior and predictive distributions depends on distributions where posterior and $\pi_k$ take to be mutually absolutely continuous with the transition kernel $\text{K}(\theta_0, B^k_{j,\theta}) > 0$. The construction of transition kernel implies that for any $\theta_0$ the transition kernel is positive, $\text{K}(\theta_0, B^k_{j,\theta}) > 0$, therefore, $\text{K}(\theta_0, B^k_{\theta}) > 0$ with respect to $\pi_k$. Note that the posterior and $\pi_k$ are mutually absolutely continuous hence one can conclude that $\text{K}(\theta_0, B^k_{\theta}) > 0$ with respect to the posterior.\\
{\textit{Aperiodicity:}} Note that for $B^k_{\theta}$, we have $\pi_k(B^k_{\theta}) > 0$ which directly implies the aperiodicity of the kernel.
Therefore, the defined Markov chain sampler is irreducible, aperiodic, and invariant with respect to the posterior, hence, it satisfies the conditions in postulate \ref{tierney}.
\end{proof}

\bibliography{Main-BNP-MOT-BM} 
\bibliographystyle{ieeetr}

\end{document}